\documentclass[11pt]{article} 
\usepackage[sectionbib]{natbib}
\usepackage{array,epsfig,fancyhdr,rotating}
\usepackage[]{hyperref}
\usepackage{sectsty, secdot}
\sectionfont{\fontsize{12}{14pt plus.8pt minus .6pt}\selectfont}
\renewcommand{\theequation}{\thesection\arabic{equation}}
\subsectionfont{\fontsize{11}{14pt plus.8pt minus .6pt}\selectfont}
\usepackage[margin=1.2in]{geometry}
\usepackage{setspace}
\usepackage{amsmath}
\usepackage{amssymb}
\usepackage{amsfonts}
\usepackage{multirow}
\usepackage{amsthm}
\usepackage{bm}
\usepackage{algorithm}
\usepackage{algorithmicx}
\usepackage{algpseudocode}
\usepackage{caption}
\usepackage{booktabs}
\usepackage{multirow}
\usepackage{xcolor}
\usepackage[normalem]{ulem}
\usepackage{comment}
\usepackage{subfigure}
\usepackage{tabularx}
\usepackage{array}

\newtheorem{theorem}{Theorem}

\newtheorem{cor}{Corollary}

\theoremstyle{definition}
\newtheorem{definition}{Definition}
\newtheorem{as}{Assumption}

\newtheorem{remark}{Remark}

\begin{document}
\setlength{\emergencystretch}{4em}\sloppy

\fontsize{11}{14pt plus.8pt minus.6pt}\selectfont \vspace{0.8pc}
\centerline{\large\bf A convolutional framework for detecting}
\vspace{2pt}
\centerline{\large\bf event-driven dynamics in energy price series}
\vspace{.4cm}
\centerline{Caixia Xu\textsuperscript{1,2,*}, Piotr Fryzlewicz\textsuperscript{2}}
\vspace{.4cm}
\centerline{\textsuperscript{1}\textit{Shanghai University of Finance and Economics}}
\centerline{\textsuperscript{2}\textit{London School of Economics}}
\vspace{.55cm} \fontsize{9}{11.5pt plus.8pt minus.6pt}\selectfont
\begingroup
\renewcommand{\thefootnote}{\fnsymbol{footnote}}
\footnotetext[1]{Corresponding author. Email: C.Xu44@lse.ac.uk}
\endgroup

\begin{quotation}
\noindent {\it Abstract:}\\
This paper develops a general convolutional neural network (CNN) framework for detecting heterogeneous event-driven dynamics in univariate time series windows. 
We show that the induced CNN class exactly represents classifiers based on range, maximum drawup, maximum drawdown and slope change, and uniformly approximates realised volatility and autoregressive explosiveness on compact domains. We further establish error bounds for representative rules in finite samples and an oracle inequality for learning across them. Simulations show that the proposed model can match or outperform classifiers based on individual statistics as the training sample grows. In an application to six daily energy price series, a hierarchical CNN distinguishes event windows and event families. Applied without retraining to observations withheld after 20 February 2026, the fitted model identifies predominantly geopolitical dynamics in several oil and refined product series around the outbreak of the 2026 Iran war, while distinguishing a contemporaneous natural gas spike associated with weather.

\vspace{9pt}
\noindent {\it Key words and phrases:}
Convolutional neural networks; Event classification; Interpretable statistics; Energy prices; Explosive autoregression.

\vspace{3pt}
\noindent {\it 2020 Mathematics Subject Classification}\quad Primary 62M10; secondary 62M45, 62P20.

\par
\end{quotation}\par

\def\thefigure{\arabic{figure}}
\def\thetable{\arabic{table}}

\renewcommand{\theequation}{\thesection.\arabic{equation}}

\fontsize{11}{14pt plus.8pt minus .6pt}\selectfont

\section{Introduction}





Detecting abnormal dynamics in price time series is a longstanding problem in econometrics and financial statistics. Such dynamics may appear as an unusually large price range, rapid price increases, sharp declines, volatility bursts, changes in trend, or persistent departures from an earlier regime. Depending on the economic context, these patterns may be interpreted as speculative bubbles, structural changes, episodes of market stress, or price responses to external events. Their identification is important for market surveillance, risk management and the study of price formation.

Classical probability theory characterises the range of a random-walk path, with \citet{feller1951asymptotic} deriving its asymptotic distribution through a Brownian motion approximation. In financial applications, \citet{parkinson1980extreme} used the squared logarithmic range within a period to estimate return variance. Maximum drawdown and drawup record the largest movement from a peak to a later trough and from a trough to a later peak \citep{magdonismail2004maximum, hadjiliadis2006drawdowns}. Realised volatility aggregates squared returns sampled at high frequency \citep{andersen2003modeling}. One influential strand of the econometric literature formalises abnormal price dynamics as episodes of autoregressive explosiveness and develops recursive procedures using right-tailed unit root tests for detection and dating \citep{phillips2011dating, phillips2015testing}. Recent extensions consider unit root tests based on quantile regression and monitoring procedures \citep{wu2025quantile}, while alternative approaches develop stochastic models of bubble formation and asset pricing \citep{gourieroux2025stochastic}.

More generally, a broader statistical literature studies abnormal dynamics through change-point methods designed for particular forms of departure, including changes in mean, variance, trend, and other model parameters \citep{fryzlewicz2014wild, gao2019variance, fearnhead2019detecting}. Although general change-point frameworks can accommodate several forms of abnormal dynamics \citep{killick2012optimal, baranowski2019narrowest}, their application remains conditional on a previously specified statistical characterisation of the change of interest. These procedures often provide sharp statistical guarantees under their respective assumptions. A statistic designed to detect one form of abnormal behaviour, such as autoregressive explosiveness, level shifts, variance breaks or changes in slope, may not perform equally well for others. We therefore ask whether there is a method that can represent or approximate all these statistics within a single general and simple architecture.

This question naturally motivates the use of neural networks, which a growing literature employs to reduce reliance on a single manually constructed detector. In applications to asset bubbles, \citet{bashchenko2020deep} train a bidirectional LSTM on simulated price paths, and \citet{biagini2025detecting} train a feedforward neural network on call option prices. Both approaches define a bubble through the strict local martingale property of the discounted asset price. In change-point analysis, recurrent representations have been used to learn kernels for a maximum mean discrepancy (MMD) statistic for two samples \citep{chang2019kernel}. For sequential detection, \citet{lee2023training} train a classifier whose logit approximates a log likelihood ratio and then accumulate this quantity through a CUSUM recursion. These studies demonstrate the flexibility of neural architectures, but their detection targets remain tied to a particular stochastic definition, discrepancy measure or sequential testing construction. A closer theoretical bridge is provided by \citet{li2024automatic}, who show that CUSUM and generalised CUSUM classifiers can be represented by ReLU networks and establish finite-sample learning guarantees.
However, a price series may exhibit several distinct forms of change, and a single CUSUM statistic need not capture all of them. This motivates a lean architecture capable of accommodating a broad collection of statistical detectors within a common model.

Our contributions are as follows. First, building on the representation perspective of \citet{li2024automatic}, we develop a common convolutional framework that accommodates different statistics for detecting abnormal dynamics. Second, we establish that the induced CNN class contains classifiers based on the range, maximum drawup, maximum drawdown and slope change statistics and uniformly approximates classifiers based on realised volatility and autoregressive explosiveness on compact domains. We complement these results with bounds for representative rules in finite samples and an oracle inequality comparing learning in the common class with fixed statistical comparators. Third, we provide a hierarchical empirical implementation that first detects event windows and then distinguishes weather, geopolitical and supply-financial events in six energy price series. An independent holdout study then applies the fitted hierarchy without retraining to the onset of the 2026 Iran war, illustrating how its event-family classifications can be interpreted across heterogeneous energy markets.

The remainder of the paper is organised as follows. Section~\ref{method} introduces the statistical detection problem and the common CNN architecture. Section~\ref{Asymptotic} establishes the representation, approximation and learning results for finite samples. Section~4 presents the oracle comparison simulation. Section~\ref{Application} describes the energy price application, including the binary and multiclass analyses and the independent case study. Section~\ref{Conclusion} concludes and discusses directions for future research.

\section{Methodology}\label{method}

\subsection{\texorpdfstring{Classifiers based on classical statistics}{Classifiers based on classical statistics}}\label{sec:setting}

Consider a window $\bm{X}=(X_1,\cdots,X_T) \in \mathbb{R}^T$ drawn from a univariate energy commodity price series. The task is to decide whether the window contains an event-driven price pattern. The positive class corresponds to windows overlapping with events of interest, while the negative class corresponds to non-event windows. Let $Y\in\{0,1\}$ denote the class label and let $\pi_y=\mathbb P(Y=y)>0$, $y\in\{0,1\}$.

Because event windows may display different forms of abnormal behaviour, we consider several classical statistics designed to capture distinct features of a price path. Figure~\ref{fig:classical_statistics_examples} gives observed examples of the price patterns captured by these statistics before their formal definitions below.


\begin{figure}[H]
    \centering
    \includegraphics[width=\linewidth]{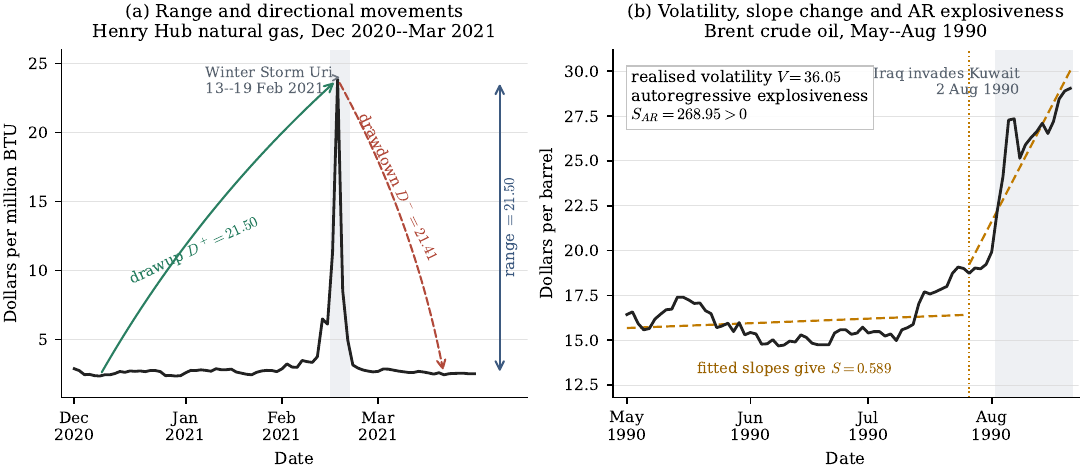}
    \caption{Observed event-driven price patterns and the corresponding classical statistics. The shaded intervals mark Winter Storm Uri in Panel~(a) and Iraq's invasion of Kuwait in Panel~(b). All reported statistics are computed over the complete 80-observation window shown in each panel.}
    \label{fig:classical_statistics_examples}
\end{figure}

\subsubsection{Range classifier}
For $T\geq1$, we first consider the range statistic \citep{feller1951asymptotic} $\mathcal{R}(\bm{X})= \max_{1 \le t \le T}X_t - \min_{1 \le t \le T}X_t$, which measures the overall amplitude of price movement within the window. A large range indicates that the price has moved substantially during the window, regardless of the direction or timing of the movement. The corresponding range classifier is defined by
\begin{equation}
    h_{R,\lambda}=\mathbb{I}\{ \mathcal{R}(\bm{X})> \lambda\}.
    \label{eq:range}
\end{equation}

\subsubsection{Directional classifier}
The second group of statistics consists of directional cumulative movement statistics \citep{magdonismail2004maximum, hadjiliadis2006drawdowns}, including maximum drawup and drawdown. For $T\geq2$, the maximum drawup statistic is defined by $D^+(\bm{X})=\max_{1 \le s < t \le T} (X_t-X_s)$, and the maximum drawdown statistic is defined by $D^-(\bm{X})=\max_{1 \le s < t \le T} (X_s-X_t)$. The maximum drawup captures the strongest cumulative upward movement within the window, while the maximum drawdown captures the strongest cumulative downward movement. These two statistics are more directional than the range statistic and are relevant for distinguishing upward price pressure from downward market stress. The corresponding classifiers are
\begin{equation}
    h_{D^+,\lambda}(\bm{X})=\mathbb{I}\{D^+(\bm{X})>\lambda\}, \quad
    h_{D^-,\lambda}(\bm{X})=\mathbb{I}\{D^-(\bm{X})>\lambda\}.
    \label{eq:drawup/down}
\end{equation}

\subsubsection{Realised volatility classifier}
Another statistic we consider is a realised volatility statistic \citep{andersen2003modeling}. For $T\geq2$, define the single-period return as $r_t=X_{t+1}-X_t$ for $t=1,\cdots,T-1$. The realised volatility statistic is $V(\bm{X})=\sum_{t=1}^{T-1}r_t^2$. The statistic captures the cumulative magnitude of local price fluctuations and is useful for detecting windows with unusually high market instability. The corresponding classifier can be defined as
\begin{equation}
    h_{V,\lambda}=\mathbb{I}\{V(\bm{X})>\lambda\}.
    \label{eq:volatility}
\end{equation}

\subsubsection{Slope change classifier}
To capture changes in local trend, we consider a slope change statistic \citep{fearnhead2019detecting}. For $T\geq5$, let \(\mathcal I=\{3,\ldots,T-2\}\) be the set of candidate split points. For \(\tau\in\mathcal I\), define $\bar{t}_L(\tau)=(1+\tau)/2$, $\bar{t}_R(\tau)=(\tau+1+T)/2$, $D_L(\tau)=\sum_{t=1}^{\tau}(t-\bar{t}_L(\tau))^2$, $D_R(\tau)=\sum_{t=\tau+1}^{T}(t-\bar{t}_R(\tau))^2$, then the least squares slopes fitted on the left and right parts of the window can be defined by
\begin{equation*}
    \hat{b}_L(\tau)=\frac{\sum_{t=1}^{\tau}(t-\bar{t}_L(\tau))X_t}{D_L(\tau)},\quad \hat{b}_R(\tau)=\frac{\sum_{t=\tau+1}^{T}(t-\bar{t}_R(\tau))X_t}{D_R(\tau)}.
\end{equation*}
Define $C_{\tau}(\bm{X})=\hat{b}_L(\tau)-\hat{b}_R(\tau)$, the slope change statistic is $S(\bm{X})=\max_{\tau\in\mathcal I}|C_{\tau}(\bm{X})|$. The slope change classifier can be defined by
\begin{equation}
    h_{S,\lambda}=\mathbb{I}\{S(\bm{X})>\lambda\}.
    \label{eq:slope}
\end{equation}
For a threshold $\lambda_S$ fixed before observing the labelled sample, write $\overline h_S=h_{S,\lambda_S}$ for the corresponding slope comparator.

\subsubsection{Autoregressive explosiveness classifier}
Finally, for $T\geq2$, we consider an autoregressive explosiveness statistic motivated by the AR(1) model. In bubble detection and explosive behaviour testing, a common starting point is the autoregressive model $X_t=\Phi X_{t-1}+\varepsilon_t$, \(t=1,\ldots,T\), with \(X_0=0\). If $\Phi>1$, then the process exhibits locally explosive behaviour, which is often used as a statistical signature of dynamics resembling bubbles.
A natural estimator of $\Phi$ is the least squares coefficient ${\sum_{t=1}^{T-1}X_{t+1}X_{t}}/{\sum_{t=1}^{T-1}X_t^2}$.
Testing whether the least squares estimate satisfies $\hat{\Phi}>1$ is equivalent to checking whether $\sum_{t=1}^{T-1}X_t(X_{t+1}-X_t)>0$ provided that $\sum_{t=1}^{T-1}X_t^2>0$. Therefore, we consider the AR statistic
\begin{equation}
    S_{AR}(\bm X)=\sum_{t=1}^{T-1}X_t(X_{t+1}-X_t).
    \label{eq:ar}
\end{equation}
For a threshold $\lambda$, define $h_{AR,\lambda}(\bm X)=\mathbb I\{S_{AR}(\bm X)>\lambda\}$. This statistic measures whether price increments tend to align with the lagged price level. It is a simple and stylised statistic for explosive behaviour and is closely related to classical bubble detection ideas based on autoregressive coefficients \citep{phillips2011dating, phillips2015testing}.

These baseline statistics cover several interpretable forms of price behaviour associated with events, including large movements, directional cumulative changes, volatility bursts, trend changes and autoregressive explosiveness. The next subsection introduces a common CNN architecture for representing or approximating these statistics. 



\subsection{1D CNN event classifier}\label{sec:cnn}

A natural motivation for the use of neural networks is provided by their ability to represent classical statistical decision rules. Our goal is to provide a simple architecture that can represent or approximate all these classifiers within a single neural network.

The classifier defined below is a univariate convolutional neural network (1D CNN) designed for binary classification. Informally, a convolutional filter moves along the price window and applies the same local transformation at each time position. It is closely related to a moving sum (MOSUM) statistic at the computational level, as both apply a common local weight pattern while scanning along a time series \citep{eichinger2018mosum}. Nevertheless, a MOSUM procedure requires the parameter of interest and its local estimator to be specified in advance, while the proposed CNN provides a flexible framework for all the heterogeneous window classification problems considered here.


As shown in Figure \ref{fig:CNN}, for each commodity, the input is a price window $\bm{X}=(X_{1},\ldots,X_{T})$ of length $T$. The network consists of $B$ parallel convolutional branches with different kernel sizes, depths, channel sizes, and pooling operations. For each branch $b=1,\cdots,B$, let $L_b$ be the number of convolutional blocks in branch $b$. Let $C_{b,0},C_{b,1},\cdots,C_{b,L_b}$ be the channel sizes, where $C_{b,0}=1$ because the input has one channel. Let $k_{b,1},\cdots,k_{b,L_b}$ be the convolutional kernel sizes. We take the convolutional stride equal to one without padding. The parallel branches provide multiple filters with different temporal spans, allowing the network to examine price movements at different temporal scales.

 \begin{figure}[h]
     \centering
     \includegraphics[width=\linewidth, trim=6cm 0cm 0cm 0cm, clip]{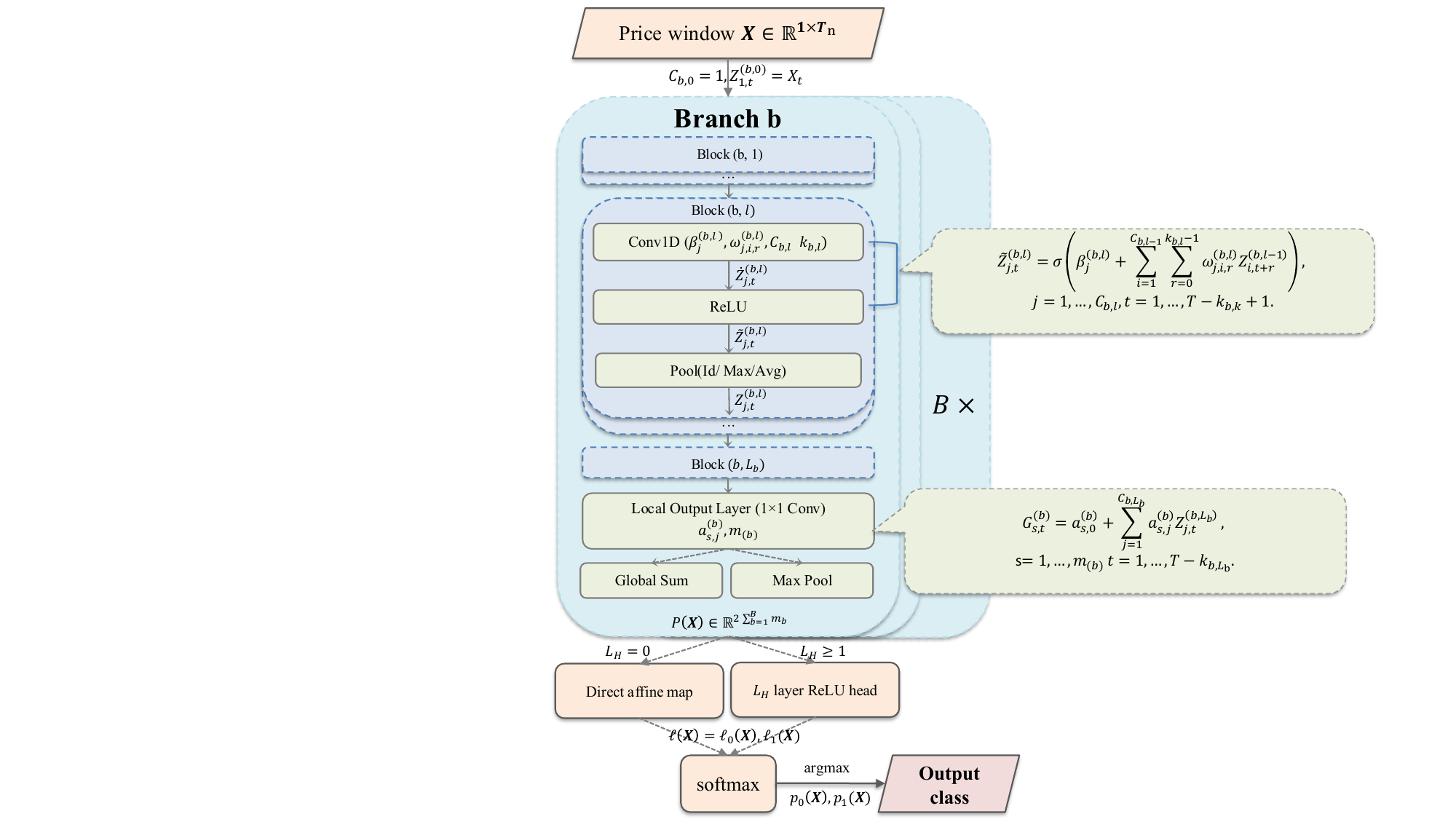}
     \caption{1D convolutional neural network for rolling window binary classification}
     \label{fig:CNN}
 \end{figure}

For each branch, define the initial feature map by $Z^{(b,0)}_{(1,t)}=X_t$, $t=1,\cdots,T$. Let $\sigma(x)=x_+=\max\{x,0\}$ denote the ReLU activation function. The $l$th convolutional layer computes
\begin{equation}   \tilde{Z}_{j,t}^{(b,l)}=\sigma \big(\beta_j^{(b,l)}+\sum_{i=1}^{C_{b,l-1}}\sum_{r=0}^{k_{b,l}-1}w_{j,i,r}^{(b,l)}Z_{i,t+r}^{(b,l-1)} \big),
\end{equation}
for $j=1,\cdots,C_{b,l}$. Here $w_{j,i,r}^{(b,l)}$ is the convolutional filter weight from input channel $i$ to output channel $j$ at offset $r$, and $\beta_j^{(b,l)}$ is the bias of output channel $j$. Using the same filter weights at every admissible position allows a local pattern to be recognised throughout the window without assigning separate parameters to individual time positions. Stacking convolutional layers then combines these local responses into features covering progressively longer time intervals. Each convolutional layer is followed by a pooling operation
$\mathrm{Pool}_{b,l}\in\{\mathrm{Id},\;\mathrm{MaxPool}_{w_{b,l},s_{b,l}},\;
\mathrm{AvgPool}_{w_{b,l},s_{b,l}}\}$ applied separately to each channel along the time axis, with window $w_{b,l}\ge 1$ and stride $s_{b,l}\ge 1$. Set \(T_{b,0}=T\). Before pooling in block \(l\), the convolutional output has length $\widetilde T_{b,l}
= T_{b,l-1}-k_{b,l}+1$, which satisfies $1\leq k_{b,l}\leq T_{b,l-1}$. Thus the convolutional time index is $t=1,\ldots,\widetilde T_{b,l}.$ If $\operatorname{Pool}_{b,l}=\operatorname{Id}$, $T_{b,l}=\widetilde T_{b,l}$. Otherwise, we require $1\leq w_{b,l}\leq\widetilde T_{b,l}$ and set $T_{b,l}
=\left\lfloor (\widetilde T_{b,l}-w_{b,l})/{s_{b,l}}
\right\rfloor+1 $.


We then define $m_b$ local output channels by a linear layer with kernel size one
\begin{equation}
G_{s,t}^{(b)}=a_{s,0}^{(b)}+\sum_{j=1}^{C_{b,L_b}}a_{s,j}^{(b)}Z_{j,t}^{(b,L_b)}
\end{equation}
for $s=1,\cdots,m_b$, $t=1,\cdots,T_{b,L_b}$. This pointwise layer recombines the final convolutional channels into candidate local scores without altering their time positions. The local output channels are then aggregated into features for the entire window by global pooling. For each branch $b$ and channel $s$, define
\begin{equation}
    P_s^{(b),\max}=\max_{1\le t \le T_{b,L_b}}G_{s,t}^{(b)},\quad P_s^{(b),\text{sum}}=\sum_{t=1}^{T_{b,L_b}}G_{s,t}^{(b)}.
\end{equation}
The feature obtained by maximum pooling captures the strongest local response in the window, while the feature obtained by sum pooling captures its cumulative strength. Together, these operations convert local scores into evidence for the complete window, remove the temporal dimension and make the resulting features less sensitive to the position of a response.

Collect all pooled features into \(\bm P(\bm X)\in\mathbb R^{d_0}\), where \(d_0=2\sum_{b=1}^B m_b\), and fix a head depth \(L_H\geq0\). The head architecture is specified before model fitting. When \(L_H\geq1\), fix hidden widths \(\bm d_H=(d_1,\ldots,d_{L_H})\), each of which is at least two, set \(\bm z^{(0)}(\bm X)=\bm P(\bm X)\), and define $\bm z^{(l)}(\bm X)=\sigma\!\left(A_l\bm z^{(l-1)}(\bm X)+\bm b_l\right), l=1,\ldots,L_H.$
The two output scores are $\bm\ell(\bm X)=A_{L_H+1}\bm z^{(L_H)}(\bm X)+\bm b_{L_H+1}\in\mathbb R^2.$ The width condition retains every affine score \(g(\bm P)=\gamma_0+\bm\gamma^\top\bm P\) as a submodel because two hidden units can propagate \(\sigma(g(\bm P))\) and \(\sigma(-g(\bm P))\), whose output difference equals \(g(\bm P)\).
When \(L_H=0\), the scores are defined directly by \(\bm\ell(\bm X)=A_1\bm P(\bm X)+\bm b_1\). When fitted, the head learns from labelled windows how to select, reweight and combine the pooled evidence. The components $\ell_0(\bm X)$ and $\ell_1(\bm X)$ correspond to the classes without and with events. They are converted into class probabilities by the softmax transformation
\[p_c(\bm X)
=
\frac{\exp\{\ell_c(\bm X)\}}{\exp\{\ell_0(\bm X)\}+\exp\{\ell_1(\bm X)\}},
\qquad
c\in\{0,1\},\]
and the predicted class is $\widehat y(\bm X)=\arg\max_{c\in\{0,1\}}\ell_c(\bm X)=\arg\max_{c\in\{0,1\}}p_c(\bm X)$. In either case, define \(f(\bm X)=\ell_1(\bm X)-\ell_0(\bm X)\) and $h(\bm X)=\mathbb I\{f(\bm X)>0\}$. 

The resulting network and its associated classifier class are defined as follows.

\begin{definition}
Let $\mathcal{F}_{\text{CNN}}(T,B,\bm{L},\bm{k},\bm{\mathcal{P}},\bm{C},\bm{m};L_H,\bm d_H)$ be the class of functions $f$ constructed above, where $T$ is the input length, $B$ is the number of branches, $\bm{L}=(L_1,\cdots,L_B)$ is the number of blocks in each branch, $\bm{k}=\{k_{b,l}: 1 \le b \le B, 1 \le l \le L_b\}$ gives the kernel sizes, $\bm{\mathcal{P}}=\{(\mathrm{Pool}_{b,l},w_{b,l},s_{b,l}): 1 \le b \le B, 1 \le l \le L_b \}$ gives the pooling operators, $\bm{C}=\{C_{b,l}: 1 \le b \le B, 0 \le l \le L_b \}$ gives the channel sizes with $C_{b,0}=1$, and $\bm{m}=(m_1,\cdots,m_B)$ gives the numbers of local output channels. The head is specified by \(L_H\) and \(\bm d_H\), and all weights and biases are free parameters. The associated classifier class is
\begin{equation}
    \mathcal{H}_{\text{CNN}}(T,B,\bm{L},\bm{k},\bm{\mathcal{P}},\bm{C},\bm{m};L_H,\bm d_H)=\{\mathbb{I}\{f>0\}: f \in \mathcal{F}_{\text{CNN}}(T,B,\bm{L},\bm{k},\bm{\mathcal{P}},\bm{C},\bm{m};L_H,\bm d_H)\}.
\end{equation}
\(L_H\) and \(\bm d_H\) are understood to be fixed when the head specification is omitted in the remainder of the paper. 
\label{def:cnn}
\end{definition}


\section{Representation, approximation and risk bounds}\label{Asymptotic}

In Section \ref{method}, we introduced the classical statistics considered in the paper and proposed the CNN architecture. It remains to show that the CNN classifier class in Definition~\ref{def:cnn} is rich enough to contain or uniformly approximate the classifiers based on classical statistics introduced in Section \ref{sec:setting}. We also derive finite-sample error bounds for classifiers based on slope change, realised volatility and autoregressive explosiveness, and a generalization bound for empirical risk minimisation over the common CNN class relative to these rules. Proofs are given in the Supplementary Material.

\subsection{Exact representation for range, directional and slope classifiers}

We begin with the range classifier, which detects unusually large movements.

\begin{theorem}[Exact CNN representation of the range score]
Let \(T\geq1\), $\bm L=(1), \bm k=\{k_{1,1}=1\}, \bm{\mathcal P}=\{(\operatorname{Id},1,1)\}, \bm C=\{C_{1,0}=1,C_{1,1}=2\}, \bm m=(2).$ For every \(\lambda>0\), there exists $f_{R,\lambda}\in \mathcal F_{\mathrm{CNN}}
(T,1,\bm L,\bm k,\bm{\mathcal P},\bm C,\bm m)$ such that $f_{R,\lambda}(\bm X)=\mathcal R(\bm X)-\lambda$ for every $\bm X\in\mathbb R^T$. Consequently, $h_{R,\lambda}\in
\mathcal H_{\mathrm{CNN}}
(T,1,\bm L,\bm k,\bm{\mathcal P},\bm C,\bm m).$
\label{thm:range}
\end{theorem}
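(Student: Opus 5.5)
The construction is explicit: with kernel size one, two channels and identity pooling, the convolutional layer acts pointwise. I would take $C_{1,1}=2$ channels with weights $w_{1,1,0}=1$, $w_{2,1,0}=-1$ and zero biases. This gives $Z_{1,t}=\sigma(X_t)$ and $Z_{2,t}=\sigma(-X_t)$, so that $Z_{1,t}-Z_{2,t}=X_t$ for every $t$. The identity $x=\sigma(x)-\sigma(-x)$ therefore lets the ReLU layer pass the raw signal through losslessly. Since $k=1$ and the pooling is $\mathrm{Id}$, the time length stays $T_{1,1}=T$.

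Next, I would use the $m_1=2$ pointwise output channels to form the two signed copies: $G_{1,t}=Z_{1,t}-Z_{2,t}=X_t$ and $G_{2,t}=Z_{2,t}-Z_{1,t}=-X_t$. The global max pooling then gives $P^{\max}_1=\max_t X_t$ and $P^{\max}_2=\max_t(-X_t)=-\min_t X_t$. The sum-pooled features are present but will receive zero weight.

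The head has to produce the affine score $g(\bm P)=P^{\max}_1+P^{\max}_2-\lambda=\mathcal R(\bm X)-\lambda$ as $\ell_1-\ell_0$.
\begin{itemize}
\item If $L_H=0$, take $\ell_1=g(\bm P)$ and $\ell_0=0$ directly in $A_1\bm P+\bm b_1$.
\item If $L_H\ge1$, I would use the remark after the head definition. Since each width is at least two, the first hidden layer carries $\sigma(g)$ and $\sigma(-g)$. Each subsequent layer propagates these two nonnegative units through the identity map, with all other units set to zero. The output layer sets $\ell_1=\sigma(g)$ and $\ell_0=\sigma(-g)$, so that $\ell_1-\ell_0=g$.
\end{itemize}
In either case $f_{R,\lambda}(\bm X)=\mathcal R(\bm X)-\lambda$ for every $\bm X\in\mathbb R^T$. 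Since $\mathbb I\{f_{R,\lambda}>0\}=h_{R,\lambda}$, membership in $\mathcal H_{\mathrm{CNN}}$ follows.

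No step is a real obstacle. The only points needing care are bookkeeping. First, the ReLU must not destroy sign information; this is resolved by the pair $\sigma(\pm X_t)$. Second, the minimum must be obtainable from a max-pooling layer, which works through $\min_t X_t=-\max_t(-X_t)$. Third, the nonnegative units must survive a head of arbitrary fixed depth, which holds because ReLU is the identity on nonnegative inputs. The case $T=1$ is trivial, since $\mathcal R=0$ and the construction still applies.
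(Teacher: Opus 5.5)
Your proposal is correct and follows the paper's proof essentially step for step. The steps match in order: the two-channel, kernel-one layer with zero biases producing $\sigma(\pm X_t)$, the local output coefficients giving $G_{1,t}=X_t$ and $G_{2,t}=-X_t$, global max pooling, and the affine head score $P_1^{(1),\max}+P_2^{(1),\max}-\lambda$ with sum-pooling weights set to zero. Your explicit treatment of the cases $L_H=0$ and $L_H\ge1$ only spells out the affine-submodel remark that the paper invokes from its head definition.
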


Theorem~\ref{thm:range} holds on all of \(\mathbb R^T\) and requires no bound on the input. The construction produces \(\sigma(X_t)\) and \(\sigma(-X_t)\) in two channels and recovers \(X_t\) and \(-X_t\) using \(x=\sigma(x)-\sigma(-x)\). 

We next consider the directional classifiers.

\begin{theorem}[Exact CNN representation of the directional classifiers]
Let \(T\geq2\), \(B=T-1\), $\bm L=(1,\ldots,1)$, $\bm k=\{k_{h,1}=h+1:h=1,\ldots,T-1\}$, $\bm{\mathcal P}=
\{(\operatorname{Pool}_{h,1},w_{h,1},s_{h,1})
=(\operatorname{Id},1,1):h=1,\ldots,T-1\}$, $\bm C=
\{C_{h,0}=1,C_{h,1}=1:h=1,\ldots,T-1\}$, $\bm m=(1,\ldots,1)$. For every \(\lambda>0\), there exist $f_\lambda^+,f_\lambda^-
\in\mathcal F_{\mathrm{CNN}}
(T,B,\bm L,\bm k,\bm{\mathcal P},\bm C,\bm m)$ such that, for every \(\bm X\in\mathbb R^T\), $\mathbb I\{f_\lambda^+(\bm X)>0\}
=\mathbb I\{D^+(\bm X)>\lambda\}$, $\mathbb I\{f_\lambda^-(\bm X)>0\} =\mathbb I\{D^-(\bm X)>\lambda\}$. Consequently, $h_{D^+,\lambda},h_{D^-,\lambda}
\in\mathcal H_{\mathrm{CNN}}
(T,B,\bm L,\bm k,\bm{\mathcal P},\bm C,\bm m)$.
\label{thm:drawup/down}
\end{theorem}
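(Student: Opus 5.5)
Each branch $h$ with kernel size $h+1$ is used to compute lag-$h$ differences $X_{t+h}-X_t$. The ReLU then keeps their positive parts, global max pooling collects the largest lag-$h$ increment, and the head takes the maximum over branches.

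\emph{Branch construction.} For branch $h\in\{1,\ldots,T-1\}$ I set the single convolutional channel to have weights $w^{(h,1)}_{1,1,0}=-1$, $w^{(h,1)}_{1,1,h}=1$, all other offsets zero, and bias $\beta=0$. This gives
\[
Z^{(h,1)}_{1,t}=\sigma(X_{t+h}-X_t),\qquad t=1,\ldots,T-h,
\]
and the identity pooling leaves this unchanged. The pointwise layer uses $a^{(h)}_{1,0}=0$ and $a^{(h)}_{1,1}=1$, so $G^{(h)}_{1,t}=\sigma(X_{t+h}-X_t)$ and
\[
P^{(h),\max}_1=\max_{t}\sigma(X_{t+h}-X_t)=\sigma\Bigl(\max_{t}(X_{t+h}-X_t)\Bigr).
\]
Taking the maximum over $h$ then yields $\max_h P^{(h),\max}_1=\sigma(D^+(\bm X))$, because every pair $s<t$ corresponds to exactly one lag $h=t-s$.

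\emph{Avoiding the max in the head.} A max over $T-1$ features cannot be written as an affine map, and the head width is fixed. I therefore do not form the maximum explicitly. Since $\lambda>0$,
\[
D^+>\lambda \iff \sigma(D^+)>\lambda \iff \exists h:\ P^{(h),\max}_1>\lambda.
\]
So it suffices to realise the indicator of this event. The natural choice is to put the threshold into the bias: set $\beta=-\lambda$, so that the branch computes $\sigma(X_{t+h}-X_t-\lambda)$ and its max-pooled feature is $\sigma(\max_t(X_{t+h}-X_t)-\lambda)$. This is positive if and only if some increment at lag $h$ exceeds $\lambda$. Summing over branches,
\[
g(\bm X)=\sum_{h=1}^{T-1}P^{(h),\max}_1\ge 0,
\]
with $g(\bm X)>0$ if and only if $D^+(\bm X)>\lambda$. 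Because $g$ is affine in the pooled features (zero weights on the sum-pooled features), the head can represent it. The width remark after the definition covers this: two hidden units carry $\sigma(g)$ and $\sigma(-g)$ through any depth, and the output scores are chosen so that $\ell_1-\ell_0=g$. For $L_H=0$ one takes $A_1$ directly. Then $\mathbb I\{f^+_\lambda>0\}=\mathbb I\{D^+>\lambda\}$ for every $\bm X\in\mathbb R^T$.

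\emph{Drawdown.} For $f^-_\lambda$ I flip the signs of the two nonzero weights, giving $\sigma(X_t-X_{t+h}-\lambda)$, and repeat the argument verbatim.

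\emph{Main obstacle.} The only delicate point is aggregating across branches without an exact max. This is why the statement asserts only equality of indicators rather than $f=D^\pm-\lambda$. Using a positive threshold inside the ReLU and the fact that a sum of nonnegative terms is positive exactly when one of them is resolves it. This is where $\lambda>0$ is used, and I need to check that equality $D^+=\lambda$ correctly gives $0$.
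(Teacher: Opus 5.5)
Your construction is correct and is exactly the paper's proof: lag-$h$ branches with weights $-1$ and $+1$ at offsets $0$ and $h$, bias $-\lambda$, identity pooling and local layer, global max pooling, and an affine head summing the nonnegative branch features, with the signs flipped for drawdown. One small correction to your closing remark: once $-\lambda$ sits inside the ReLU, the equivalence $f^{\pm}_\lambda(\bm X)>0 \iff D^{\pm}(\bm X)>\lambda$ holds for every real $\lambda$, so $\lambda>0$ is not actually used in the final argument; it was needed only for the intermediate detour through $\sigma(D^+)$.
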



For every candidate split point, the slope contrast is a linear function of the whole input window. We therefore use one branch spanning the full window for each split point and each sign, and the affine head combines the resulting nonnegative branch scores. 

\begin{theorem}[Exact CNN representation of the slope change classifier]
Let \(T\geq5\), \(\mathcal I=\{3,\ldots,T-2\}\), and \(B=2(T-4)\). Let $\bm L=(1,\ldots,1)$, $\bm k=\{k_{b,1}=T:b=1,\ldots,B\}$, $\bm{\mathcal P}=
\{(\operatorname{Pool}_{b,1},w_{b,1},s_{b,1})
=(\operatorname{Id},1,1):b=1,\ldots,B\}$, $\bm C=\{C_{b,0}=1,C_{b,1}=1:b=1,\ldots,B\}$, $\bm m=(1,\ldots,1)$. For every \(\lambda>0\), there exists $f_{S,\lambda}\in
\mathcal F_{\mathrm{CNN}}
(T,B,\bm L,\bm k,\bm{\mathcal P},\bm C,\bm m)$ such that $\mathbb I\{f_{S,\lambda}(\bm X)>0\}
=\mathbb I\{S(\bm X)>\lambda\}$ for every $\bm X\in\mathbb R^T$. Consequently, $h_{S,\lambda}\in \mathcal H_{\mathrm{CNN}} (T,B,\bm L,\bm k,\bm{\mathcal P},\bm C,\bm m)$.
\label{thm:trend}
\end{theorem}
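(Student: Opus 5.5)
Since each slope contrast \(C_\tau(\bm X)=\hat b_L(\tau)-\hat b_R(\tau)\) is a linear functional of the whole window, the plan is to realise \(\sigma(C_\tau(\bm X))\) and \(\sigma(-C_\tau(\bm X))\) in separate branches. The head then has to implement the decision \(\max_\tau|C_\tau|>\lambda\). First write \(C_\tau(\bm X)=\sum_{t=1}^T c_{\tau,t}X_t\), where
\[
c_{\tau,t}=\frac{t-\bar t_L(\tau)}{D_L(\tau)}\,\mathbb I\{t\le\tau\}-\frac{t-\bar t_R(\tau)}{D_R(\tau)}\,\mathbb I\{t>\tau\}.
\]
Index the \(B=2(T-4)\) branches by pairs \((\tau,\epsilon)\) with \(\tau\in\mathcal I\) and \(\epsilon\in\{+1,-1\}\). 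In branch \((\tau,\epsilon)\), take the single kernel of size \(T\) with weights \(w_{1,1,r}=\epsilon c_{\tau,r+1}\) and bias zero. The convolutional output then has length \(\widetilde T=1\) and equals \(\sigma(\epsilon C_\tau(\bm X))\); identity pooling leaves it unchanged. Set the pointwise layer to \(a_{1,0}=0,\ a_{1,1}=1\). Since the time dimension is one, both global poolings give \(P^{\max}=P^{\rm sum}=\sigma(\epsilon C_\tau(\bm X))\).

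The main obstacle is the head. The pooled features only expose the individual values \(\sigma(\pm C_\tau)\), and an affine head alone cannot compute a maximum over \(\tau\). I would avoid computing the maximum altogether. Because \(|x|>\lambda\) iff \(\sigma(x-\lambda)+\sigma(-x-\lambda)>0\), we have
\[
S(\bm X)>\lambda\iff \sum_{\tau\in\mathcal I}\big\{\sigma(C_\tau(\bm X)-\lambda)+\sigma(-C_\tau(\bm X)-\lambda)\big\}>0,
\]
since a sum of nonnegative terms is positive exactly when some term is. This suggests moving \(\lambda\) into the convolution. Give branch \((\tau,\epsilon)\) the bias \(\beta=-\lambda\), so that it outputs \(\sigma(\epsilon C_\tau(\bm X)-\lambda)\), and define
\[
f_{S,\lambda}(\bm X)=\sum_{\tau,\epsilon}P^{(\tau,\epsilon),\max}.
\]
This is an affine score in the pooled features, with coefficient \(1\) on each max-pooled feature and \(0\) elsewhere. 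The identity \(\mathbb I\{f_{S,\lambda}>0\}=\mathbb I\{S>\lambda\}\) then follows from the display above, and it holds on all of \(\mathbb R^T\). The biases \(\beta_j^{(b,l)}\) are free parameters in Definition~\ref{def:cnn}, so this choice is admissible. The tempting alternative, zero bias together with head \(\sum\sigma(\cdot)-\lambda\), fails: it thresholds the sum rather than the maximum.

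Finally, I would check that the affine score is realised by the fixed head. If \(L_H=0\), take \(A_1\) with rows \((0,\ldots,0)\) and \((\gamma^\top)\), and \(\bm b_1=0\), so that \(\ell_1-\ell_0=g(\bm P)\). If \(L_H\ge1\), use the width-\(\ge2\) remark after the head definition. The first layer computes \(\sigma(g)\) and \(\sigma(-g)\), with the remaining units zeroed. Subsequent layers pass these two units through by identity (their inputs are nonnegative), and the output weights give \(\ell_1-\ell_0=\sigma(g)-\sigma(-g)=g\). Membership of \(h_{S,\lambda}\) in \(\mathcal H_{\rm CNN}\) is then immediate. I would also verify the routine points that \(D_L(\tau),D_R(\tau)>0\) for \(\tau\in\mathcal I\), which holds because each segment has at least two points, and that the length bookkeeping \(T_{b,1}=1\) is consistent with \(k_{b,1}=T\).
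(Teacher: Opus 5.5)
Your proposal is correct and matches the paper's proof: two full-window branches per split point with weights $\pm$ the linear coefficients of $C_\tau$, bias $-\lambda$, identity local output and global max pooling, combined through the affine head as $\sum_{\tau\in\mathcal I}\{\sigma(C_\tau(\bm X)-\lambda)+\sigma(-C_\tau(\bm X)-\lambda)\}$. Positivity of this sum is then equivalent to $S(\bm X)>\lambda$. Your explicit check that the fixed head realises the affine score for $L_H=0$ and for $L_H\geq1$ is a useful addition, since the paper only appeals to its width remark.
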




The range construction in Theorem~\ref{thm:range} reproduces the score \(\mathcal R(\bm X)-\lambda\) exactly. In contrast, Theorems~\ref{thm:drawup/down} and \ref{thm:trend} establish exact representation at the classifier level. Their constructed scores are positive precisely when the corresponding statistics exceed \(\lambda\).

\subsection{Uniform approximation for realised volatility and autoregressive statistics}

We now turn from exact representation to uniform approximation. The realised volatility and autoregressive constructions below approximate the corresponding statistic values uniformly on compact domains.

For \(M>0\), let \(\mathcal X_M=[-M,M]^T\) and define componentwise clipping by
\[
[\operatorname{clip}_M(\bm x)]_t=(-M)\vee(x_t\wedge M),
\qquad t=1,\ldots,T.
\]
Equivalently, \(\operatorname{clip}_M(x)=-M+\sigma(x+M)-\sigma(x-M)\), so clipping is a fixed piecewise linear preprocessing map that can be incorporated using ReLU layers with kernels of size one. For an arbitrary input \(\bm X\), define \(E_M=\{\|\bm X\|_\infty\leq M\}\), and let \(\tau_y(M)=\mathbb P(E_M^c\mid Y=y)\). If \(f_{\epsilon,M}^{G}\) approximates \(G\) on \(\mathcal X_M\), it is applied to arbitrary inputs through \(\bar f_{\epsilon,M}^{G}(\bm X)=f_{\epsilon,M}^{G}(\operatorname{clip}_M(\bm X))\). Consequently, \(\lvert\bar f_{\epsilon,M}^{G}(\bm X)-G(\operatorname{clip}_M(\bm X))\rvert\leq\epsilon\) for every \(\bm X\in\mathbb R^T\).

The remaining statistics contain quadratic local terms. A finite ReLU network approximates these terms uniformly on \(\mathcal X_M\), after which the clipped composition extends the score to arbitrary inputs.
\begin{theorem}[Uniform CNN approximation of the realised volatility statistic]
Let \(T\geq2\). For any \(M>0\) and \(\epsilon\in(0,1)\), let \(\bm C_\epsilon\) be the collection of channel sizes used in the ReLU square approximant and take \(\bm L=(L_\epsilon)\), \(\bm k=\{k_{1,1}=2,k_{1,l}=1:l=2,\ldots,L_\epsilon\}\), \(\bm{\mathcal P}=\{(\operatorname{Pool}_{1,l},w_{1,l},s_{1,l})=(\operatorname{Id},1,1):l=1,\ldots,L_\epsilon\}\) and \(\bm m=(1)\), where \(L_\epsilon=O\{\log(2+(T-1)M^2/\epsilon)\}\). Then there exists \(f_{\epsilon,M}^{RV}\in\mathcal F_{\mathrm{CNN}}(T,1,\bm L,\bm k,\bm{\mathcal P},\bm C_\epsilon,\bm m)\) such that \(\sup_{\bm X\in\mathcal X_M}\lvert f_{\epsilon,M}^{RV}(\bm X)-V(\bm X)\rvert\leq\epsilon\). For every \(\lambda\in\mathbb R\),
$
\mathbb I\{f_{\epsilon,M}^{RV}(\bm X)>\lambda\}
=\mathbb I\{V(\bm X)>\lambda\},
\bm X\in\mathcal X_M,
\lvert V(\bm X)-\lambda\rvert>\epsilon.
$
\label{thm:volatility}
\end{theorem}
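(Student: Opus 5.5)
The plan is to realise $V$ as a sum-pooled output of a single branch whose first layer computes returns and whose later pointwise layers approximate the square function. In the first block (kernel size $2$) I will use two channels with weights $(-1,1)$ and $(1,-1)$ and zero bias. This produces $\sigma(r_t)$ and $\sigma(-r_t)$ for $t=1,\ldots,T-1$, where $r_t=X_{t+1}-X_t$. Because pooling is the identity and the kernel has size two, the time length after this block is exactly $T-1$, so each position corresponds to one return. On $\mathcal X_M$ we have $|r_t|\le 2M$. Since $\sigma(r_t)\sigma(-r_t)=0$, we get $r_t^2=\sigma(r_t)^2+\sigma(-r_t)^2$, so it suffices to approximate $u\mapsto u^2$ on $[0,2M]$ in each channel.

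For the square I will use the standard Yarotsky sawtooth construction. Let $g(x)=2\sigma(x)-4\sigma(x-1/2)+2\sigma(x-1)$ on $[0,1]$, and let $g_s$ be its $s$-fold composition. Then $x^2$ is approximated by $x-\sum_{s=1}^{K}g_s(x)/4^{s}$, with uniform error at most $4^{-(K+1)}$ on $[0,1]$. After rescaling, $u^2=(2M)^2\,(u/2M)^2$ gives error $4M^2\,4^{-(K+1)}$ on $[0,2M]$. Each composition step is a kernel-size-one convolutional layer with a constant number of channels: three ReLU units for the next sawtooth, plus carry channels. The carry channels hold the input $u\ge0$ and the running partial sum. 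The partial sum can change sign, so I will carry it as a pair $\sigma(\cdot),\sigma(-\cdot)$, or alternatively add a large positive bias; this is legitimate because everything is bounded on $\mathcal X_M$. Both channels $\sigma(\pm r_t)$ are processed in parallel by block-diagonal weights. The collection of these channel sizes is $\bm C_\epsilon$.

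The final pointwise linear layer with $m_1=1$ then outputs $G_{1,t}=\widehat{q}(\sigma(r_t))+\widehat{q}(\sigma(-r_t))$, where $\widehat q$ denotes the square approximant and each term has error at most $4M^2 4^{-(K+1)}$. Sum pooling gives $P^{(1),\mathrm{sum}}=\sum_{t=1}^{T-1}G_{1,t}$, whose total error is at most $2(T-1)\cdot 4M^2 4^{-(K+1)}$. Choosing $K$ of order $\log(2+(T-1)M^2/\epsilon)$ makes this at most $\epsilon$, which gives $L_\epsilon=K+O(1)$ as claimed. The head is set to the affine score $f=P^{(1),\mathrm{sum}}$, which the width condition permits (for $L_H\ge1$ it is passed through two hidden units as $\sigma(g)-\sigma(-g)$). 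The max-pooled feature receives weight zero. This yields $\sup_{\mathcal X_M}|f^{RV}_{\epsilon,M}-V|\le\epsilon$.

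The classifier statement then follows directly. If $V(\bm X)>\lambda+\epsilon$, then $f>\lambda$. If $V(\bm X)<\lambda-\epsilon$, then $f<\lambda$. For the indicator in $\mathcal H_{\mathrm{CNN}}$, the shift by $\lambda$ is absorbed into the head bias.

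The main obstacle is bookkeeping rather than analysis. The composition must fit the prescribed block structure, with every intermediate layer followed by a ReLU, and the carried signed quantities must be handled via $\pm$ splitting or bias offsets. The depth count must also remain $O(\log((T-1)M^2/\epsilon))$ with bounded width. I would first check that a single sawtooth step together with accumulation fits in one ReLU layer of width about $5$ per input channel.
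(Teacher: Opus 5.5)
Your proposal is correct and follows essentially the same route as the paper: a kernel-size-two first block that extracts the returns $r_t=X_{t+1}-X_t$, kernel-size-one blocks implementing a Yarotsky-type ReLU square approximant pointwise in time, a single local output channel, global sum pooling passed through the affine submodel of the head, and an error bound of $(T-1)$ times the per-term error, with classifier agreement following from the $\epsilon$-margin. The differences are cosmetic: the paper treats the square approximant on $[-2M,2M]$ as a black box and absorbs $X_{t+1}-X_t$ directly into its first hidden layer through filter weights $(-\alpha_j,\alpha_j)$, whereas you open the sawtooth construction and reduce to $[0,2M]$ via $r_t^2=\sigma(r_t)^2+\sigma(-r_t)^2$ (your sign concern for the carried partial sums does not arise, because they are piecewise linear interpolants of the convex function $x^2$ and hence nonnegative).
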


Theorem~\ref{thm:volatility} turns uniform score approximation into classifier agreement on \(\mathcal X_M\). Outside the neighbourhood of radius \(\epsilon\) around \(\lambda\), the CNN and realised volatility rule make the same decision on that domain.

The same construction with local convolution and global summation applies to the AR statistic, using a ReLU network that approximates multiplication.
\begin{theorem}[Uniform CNN approximation of the autoregressive statistic]
Let \(T\geq2\). For any \(M>0\) and \(\epsilon\in(0,1)\), let \(\bm C_\epsilon\) be the collection of channel sizes used in the ReLU multiplication approximant and take \(\bm L=(L_\epsilon)\), \(\bm k=\{k_{1,1}=2,k_{1,l}=1:l=2,\ldots,L_\epsilon\}\), \(\bm{\mathcal P}=\{(\operatorname{Pool}_{1,l},w_{1,l},s_{1,l})=(\operatorname{Id},1,1):l=1,\ldots,L_\epsilon\}\) and \(\bm m=(1)\), where \(L_\epsilon=O\{\log(2+(T-1)M^2/\epsilon)\}\). Then there exists \(f_{\epsilon,M}^{AR}\in\mathcal F_{\mathrm{CNN}}(T,1,\bm L,\bm k,\bm{\mathcal P},\bm C_\epsilon,\bm m)\) such that \(\sup_{\bm X\in\mathcal X_M}\lvert f_{\epsilon,M}^{AR}(\bm X)-S_{AR}(\bm X)\rvert\leq\epsilon\). For every \(\lambda\in\mathbb R\),
$
\mathbb I\{f_{\epsilon,M}^{AR}(\bm X)>\lambda\}
=\mathbb I\{S_{AR}(\bm X)>\lambda\},
\bm X\in\mathcal X_M,
\lvert S_{AR}(\bm X)-\lambda\rvert>\epsilon.
$
\label{thm:ar-approximate}
\end{theorem}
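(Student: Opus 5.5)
The plan mirrors the realised volatility construction of Theorem~\ref{thm:volatility}, replacing the ReLU square approximant by a ReLU multiplication approximant via polarisation. Write each summand as
\[
X_t(X_{t+1}-X_t)=\tfrac14\bigl\{(X_{t+1})^2-(X_{t+1}-2X_t)^2\bigr\},
\]
which is a difference of squares of two linear functions of the pair $(X_t,X_{t+1})$; alternatively use $xy=\{(x+y)^2-x^2-y^2\}/2$ with $x=X_t$, $y=X_{t+1}-X_t$. On $\mathcal X_M$ both arguments $u_t=X_{t+1}$ and $v_t=X_{t+1}-2X_t$ lie in $[-3M,3M]$. I would rely on the standard Yarotsky-type sawtooth construction. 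For $q(x)=x^2$ on $[-K,K]$ there is a ReLU network $\tilde q$ of depth $O(\log(K^2/\delta))$ and bounded width with $\sup_{|x|\le K}|\tilde q(x)-x^2|\le\delta$. The input is fed as $\sigma(x)$ and $\sigma(-x)$, since $|x|=\sigma(x)+\sigma(-x)$ and $x^2=|x|^2$. This is presumably the same approximant already invoked for Theorem~\ref{thm:volatility}.

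The convolutional layout is as follows. In block $1$, take kernel size $k_{1,1}=2$ with the four channels $\sigma(\pm u_t)$ and $\sigma(\pm v_t)$; each is a linear filter on $(X_t,X_{t+1})$ followed by ReLU. The subsequent blocks use kernel size one and identity pooling. They implement, position by position and in parallel channels, the hidden layers of $\tilde q$ applied to $|u_t|$ and to $|v_t|$. Channels carrying values forward unchanged use the identity $z=\sigma(z)$ for nonnegative $z$, and parallel subnetworks of different depth are padded in the same way. This gives $L_\epsilon$ blocks with channel sizes $\bm C_\epsilon$. The final pointwise layer with $m_1=1$ outputs
\[
G_{1,t}=\tfrac14\{\tilde q(u_t)-\tilde q(v_t)\}.
\]
Global sum pooling then gives $P^{\mathrm{sum}}=\sum_{t=1}^{T-1}G_{1,t}$, and the affine head (through two hidden units if $L_H\ge1$, as noted after Definition~\ref{def:cnn}) reproduces this as $f^{AR}_{\epsilon,M}$. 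Note that the time length after block $1$ is $T-1$, matching the summation range.

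For the error bound, each summand has error at most $\delta/2$, so
\[
\sup_{\mathcal X_M}|f^{AR}_{\epsilon,M}-S_{AR}|\le (T-1)\delta/2 .
\]
Choosing $\delta=2\epsilon/(T-1)$ requires depth $O\{\log(9M^2(T-1)/\epsilon)\}=O\{\log(2+(T-1)M^2/\epsilon)\}$, matching the stated $L_\epsilon$. The classifier statement is then immediate. If $S_{AR}(\bm X)>\lambda+\epsilon$, then $f^{AR}_{\epsilon,M}(\bm X)>\lambda$. If $S_{AR}(\bm X)<\lambda-\epsilon$, then $f^{AR}_{\epsilon,M}(\bm X)<\lambda$. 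The threshold $\lambda$ is absorbed into the head bias.

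The main obstacle is bookkeeping rather than analysis. We must verify that the deep square approximant, whose standard form uses skip connections and sums sawtooth compositions, fits the strictly sequential kernel-one block structure with fixed channel counts. I would handle this by carrying the running partial sum and the current sawtooth iterate as extra nonnegative channels, after shifting by a constant bias to keep them nonnegative and undoing the shift in the next layer's bias. The identity $z=\sigma(z)$ then lets every intermediate quantity propagate through ReLU layers exactly.
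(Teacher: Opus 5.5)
Your proposal is correct and follows essentially the same route as the paper: a kernel-size-two first block realising the first hidden layer of a Yarotsky-type local approximant applied to linear functions of $(X_t,X_{t+1})$, kernel-one blocks for the remaining layers, a single local output channel, global sum pooling through the affine submodel of the head, and summation of the $T-1$ local errors to reach the bound $\epsilon$. The differences are cosmetic. The paper applies a black-box two-input network $\mathrm{Mult}_{\rho,M}$ to $(X_t,X_{t+1}-X_t)$, whereas you unpack it via the polarisation $X_t(X_{t+1}-X_t)=\tfrac14\{X_{t+1}^2-(X_{t+1}-2X_t)^2\}$ into two square approximants on $[-3M,3M]$, and you also spell out how skip-connection quantities are propagated through strictly sequential ReLU layers, a step the paper takes for granted.
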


Theorem~\ref{thm:ar-approximate} gives the analogous score and decision guarantees for autoregressive explosiveness on \(\mathcal X_M\). Here the network approximates local products. Because clipping is the identity on \(E_M\), agreement with the rules based on \(V(\bm X)\) and \(S_{AR}(\bm X)\) is invoked there, while observations in \(E_M^c\) enter the later risk bounds through \(\tau_y(M)\).

\subsection{Risk bounds for classifiers based on classical statistics}

This subsection considers three classifiers based respectively on slope change, realised volatility and autoregressive explosiveness. For any classifier \(h\), write \(\operatorname{err}(h)=\mathbb P\{h(\bm X)\neq Y\}\), and let \(\pi_y=\mathbb P(Y=y)\), \(y\in\{0,1\}\). We study these rules under their respective data generating assumptions. 

\begin{as}
Conditional on \(Y=0\), suppose that \(X_t=a+bt+\varepsilon_t\) for \(t=1,\ldots,T\). Conditional on \(Y=1\), suppose that there exists \(\tau^\star\in\mathcal I\) such that \(X_t=a_L+b_Lt+\varepsilon_t\) for \(t\leq\tau^\star\) and \(X_t=a_R+b_Rt+\varepsilon_t\) for \(t>\tau^\star\), where \(|b_L-b_R|\geq\kappa_S\) for some \(\kappa_S>0\). The innovations satisfy \(\varepsilon_1,\ldots,\varepsilon_T\overset{\mathrm{i.i.d.}}{\sim}N(0,\sigma_S^2)\), where \(\sigma_S>0\), and the segment parameters and \(\tau^\star\) are independent of the innovations.
\label{ass:slope}
\end{as}

\begin{theorem}[Error bound for the slope change classifier]
\label{thm:slope_error}
Suppose Assumption~\ref{ass:slope} holds. Let \(T\geq5\), \(\mathcal I=\{3,\ldots,T-2\}\), and \(m_T=\lvert\mathcal I\rvert=T-4\). For \(\tau\in\mathcal I\), define \(v_\tau=1/D_L(\tau)+1/D_R(\tau)\), \(v_T^{\max}=\max_{\tau\in\mathcal I}v_\tau\), and, for \(\alpha_S\in(0,1)\), \(\lambda_{S,\alpha_S}=\sigma_S\sqrt{2v_T^{\max}\log(2m_T/\alpha_S)}\). Then \(h_{S,\lambda_{S,\alpha_S}}(\bm X)=\mathbb I\{S(\bm X)>\lambda_{S,\alpha_S}\}\) satisfies \(\mathbb P\{h_{S,\lambda_{S,\alpha_S}}(\bm X)=1\mid Y=0\}\leq\alpha_S\) and
\[
\mathbb P\{h_{S,\lambda_{S,\alpha_S}}(\bm X)=0\mid Y=1\}
\leq
\exp\left\{-\frac{(\kappa_S-\lambda_{S,\alpha_S})_+^2}{2\sigma_S^2v_T^{\max}}\right\},
\]
where \(x_+=\max\{x,0\}\). Consequently,
\[
\operatorname{err}(h_{S,\lambda_{S,\alpha_S}})
\leq\pi_0\alpha_S+
\pi_1\exp\left\{-\frac{(\kappa_S-\lambda_{S,\alpha_S})_+^2}{2\sigma_S^2v_T^{\max}}\right\}
=:\mathfrak B_S(\alpha_S).
\]
\end{theorem}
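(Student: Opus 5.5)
The proof rests on the fact that each slope contrast $C_\tau(\bm X)$ is a linear functional of $\bm X$ that annihilates affine trends on each segment. Write $\bm X=\bm\mu+\bm\varepsilon$, where $\bm\mu$ is the piecewise linear mean. For each $\tau$ the least squares slope estimators are unbiased for the slope of any affine signal on their segment. Under $Y=0$ the two fitted segments share the slope $b$, so $C_\tau(\bm\mu)=0$ for every $\tau\in\mathcal I$. Hence $C_\tau(\bm X)=C_\tau(\bm\varepsilon)$, which is Gaussian with mean zero.

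Its variance is $\sigma_S^2 v_\tau$. This holds because $\hat b_L(\tau)$ and $\hat b_R(\tau)$ depend on disjoint sets of innovations, with variances $\sigma_S^2/D_L(\tau)$ and $\sigma_S^2/D_R(\tau)$; the identity $\sum_t (t-\bar t_L)^2=D_L$ gives the second fact. I then apply the Gaussian tail bound $\mathbb P(|Z|>x)\le 2\exp\{-x^2/(2s^2)\}$ for $Z\sim N(0,s^2)$, using $s^2=\sigma_S^2v_\tau\le\sigma_S^2v_T^{\max}$, together with a union bound over the $m_T$ split points. This gives
\[
\mathbb P\{S(\bm X)>\lambda_{S,\alpha_S}\mid Y=0\}\le 2m_T\exp\{-\lambda_{S,\alpha_S}^2/(2\sigma_S^2v_T^{\max})\}=\alpha_S
\]
by the choice of $\lambda_{S,\alpha_S}$. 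Because the segment parameters are independent of the innovations, I first condition on them and then integrate, so random $a,b$ cause no difficulty.

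Under $Y=1$ I condition on $\tau^\star$ and the segment parameters, and evaluate the contrast at the true split $\tau=\tau^\star\in\mathcal I$. There the left and right fits each see a single affine piece, so $C_{\tau^\star}(\bm\mu)=b_L-b_R$, with $|b_L-b_R|\ge\kappa_S$. Since $S(\bm X)\ge|C_{\tau^\star}(\bm X)|$, a miss requires $|C_{\tau^\star}(\bm X)|\le\lambda_{S,\alpha_S}$. This in turn requires $|C_{\tau^\star}(\bm\varepsilon)|\ge\kappa_S-\lambda_{S,\alpha_S}$, and more precisely a deviation of $\mathrm{sign}(b_L-b_R)\,C_{\tau^\star}(\bm\varepsilon)\le-(\kappa_S-\lambda_{S,\alpha_S})$ in one specific direction.

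The one-sided Gaussian bound $\mathbb P(Z\le -x)\le\exp\{-x^2/(2s^2)\}$ for $x\ge0$, combined with $s^2\le\sigma_S^2v_T^{\max}$, yields the displayed type II bound. If $\kappa_S\le\lambda_{S,\alpha_S}$, the positive part makes the bound equal to $1$, which holds trivially. Taking expectation over the conditioning variables preserves the bound because it is uniform in them.

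Finally, $\operatorname{err}(h)=\pi_0\mathbb P(h=1\mid Y=0)+\pi_1\mathbb P(h=0\mid Y=1)$ gives $\mathfrak B_S(\alpha_S)$. The only step needing care is verifying the linear algebra: the contrast is exactly unbiased for piecewise affine means at $\tau^\star$ (so that the intercepts $a_L,a_R$ drop out of each centred sum), and its noise variance is exactly $\sigma_S^2v_\tau$. Both follow from $\sum_t(t-\bar t)=0$ and $\sum_t(t-\bar t)t=D$ on each segment. I expect no genuine obstacle beyond this bookkeeping and the use of a one-sided rather than two-sided tail for the power bound, so that no factor of $2$ appears.
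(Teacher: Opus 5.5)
Your proposal is correct and follows essentially the same route as the paper's proof. Under the null, the contrast is a zero-mean Gaussian with variance $\sigma_S^2 v_\tau$, and a two-sided tail bound with a union bound over the $m_T$ split points gives the size. Under the alternative, a one-sided Gaussian tail for the contrast at the true split $\tau^\star$, together with $|b_L-b_R|\ge\kappa_S$, $v_{\tau^\star}\le v_T^{\max}$ and averaging over the segment parameters (independent of the innovations), gives the power bound.
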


Theorem~\ref{thm:slope_error} calibrates the threshold to control the conditional false positive probability despite scanning multiple candidate split points. Under a slope change of magnitude at least \(\kappa_S\), the conditional false negative probability decreases exponentially with the separation between signal and noise. Combined with the false positive control, this gives the explicit risk bound \(\mathfrak B_S(\alpha_S)\).

We next turn to the realised volatility classifier and state the conditions used to derive its risk bound at finite \(T\).

\begin{as}
Let \(T\geq2\), set \(n=T-1\), and define \(r_t=X_{t+1}-X_t\). Conditional on \(Y\), suppose that \(r_1^2,\ldots,r_n^2\) are independent and that \(\lvert r_t\rvert\leq U\) almost surely for some \(U>0\). Suppose further that \(n^{-1}\sum_{t=1}^n\mathbb E(r_t^2\mid Y=0)\leq\nu_0\), whereas \(n^{-1}\sum_{t=1}^n\mathbb E(r_t^2\mid Y=1)\geq\nu_0+\kappa_V\) for some \(\kappa_V>0\).
\label{ass:volatility}
\end{as}

\begin{theorem}[Error bound for the realised volatility classifier]
\label{thm:volatility_error}
Suppose Assumption~\ref{ass:volatility} holds. Fix \(\alpha_V\in(0,1)\), define \(a_{V,\alpha_V}=U^2\sqrt{n\log(1/\alpha_V)/2}\), and set \(\lambda_{V,\alpha_V}=n\nu_0+a_{V,\alpha_V}\). Then the exact classifier \(h_{V,\lambda_{V,\alpha_V}}(\bm X)=\mathbb I\{V(\bm X)>\lambda_{V,\alpha_V}\}\) satisfies \(\mathbb P\{h_{V,\lambda_{V,\alpha_V}}(\bm X)=1\mid Y=0\}\leq\alpha_V\) and
\[
\mathbb P\{h_{V,\lambda_{V,\alpha_V}}(\bm X)=0\mid Y=1\}
\leq
\exp\left\{-\frac{2(n\kappa_V-a_{V,\alpha_V})_+^2}{nU^4}\right\}.
\]
Consequently,
\[
\operatorname{err}(h_{V,\lambda_{V,\alpha_V}})
\leq\pi_0\alpha_V+
\pi_1\exp\left\{-\frac{2(n\kappa_V-a_{V,\alpha_V})_+^2}{nU^4}\right\}.
\]

Now fix \(M>0\) and \(\epsilon_V>0\), and suppose that \(\sup_{\bm x\in\mathcal X_M}\lvert f_{\epsilon_V,M}^{RV}(\bm x)-V(\bm x)\rvert\leq\epsilon_V\). Define the clipped neural classifier $\overline h_V(\bm X)=\mathbb I\left\{f_{\epsilon_V,M}^{RV}(\operatorname{clip}_M(\bm X))>\lambda_{V,\alpha_V}+\epsilon_V\right\}.$ Then
\[
\operatorname{err}(\overline h_V)
\leq\pi_0\{\alpha_V+\tau_0(M)\}
+\pi_1\left[\exp\left\{-\frac{2(n\kappa_V-a_{V,\alpha_V}-2\epsilon_V)_+^2}{nU^4}\right\}+\tau_1(M)\right]
=:\mathfrak B_V(\alpha_V,\epsilon_V,M).
\]
\end{theorem}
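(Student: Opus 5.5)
The plan is to apply Hoeffding's inequality to the independent bounded summands \(r_t^2\in[0,U^2]\) under each class, and then to transfer the resulting bounds to the clipped neural classifier through the uniform approximation of Theorem~\ref{thm:volatility} and the event \(E_M\).

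\emph{Exact classifier.} Write \(V(\bm X)=\sum_{t=1}^n r_t^2\) and \(\mu_y=\mathbb E(V\mid Y=y)\). Assumption~\ref{ass:volatility} gives \(\mu_0\le n\nu_0\) and \(\mu_1\ge n\nu_0+n\kappa_V\). Conditional on \(Y\), the summands are independent and take values in \([0,U^2]\), so Hoeffding's inequality gives, for \(s\ge0\),
\[
\mathbb P(V-\mu_y\ge s\mid Y=y)\le \exp\{-2s^2/(nU^4)\},
\]
and the same bound holds for the lower tail.

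For the false positive probability, \(\{V>\lambda_{V,\alpha_V}\}\subseteq\{V-\mu_0> a_{V,\alpha_V}\}\). Substituting \(s=a_{V,\alpha_V}\) gives exactly \(\exp\{-\log(1/\alpha_V)\}=\alpha_V\).

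For the false negative probability, \(\{V\le\lambda_{V,\alpha_V}\}\subseteq\{V-\mu_1\le -(n\kappa_V-a_{V,\alpha_V})\}\). If \(n\kappa_V>a_{V,\alpha_V}\), the lower-tail Hoeffding bound yields the stated exponential. Otherwise \((\cdot)_+=0\) and the bound equals \(1\), so it holds trivially. The law of total probability, \(\operatorname{err}(h)=\pi_0\mathbb P(h=1\mid Y=0)+\pi_1\mathbb P(h=0\mid Y=1)\), then gives the risk bound.

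\emph{Clipped neural classifier.} I will split each conditional error on \(E_M\) and \(E_M^c\). The probability of \(E_M^c\) contributes \(\tau_y(M)\) directly. On \(E_M\), clipping is the identity, so \(\lvert f^{RV}_{\epsilon_V,M}(\operatorname{clip}_M(\bm X))-V(\bm X)\rvert\le\epsilon_V\). This gives two inclusions:
\[
\{\overline h_V=1\}\cap E_M\subseteq\{V>\lambda_{V,\alpha_V}\},
\qquad
\{\overline h_V=0\}\cap E_M\subseteq\{V\le\lambda_{V,\alpha_V}+2\epsilon_V\}.
\]
The first inclusion lets me reuse the false positive bound \(\alpha_V\). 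The second reduces the false negative event to \(\{V-\mu_1\le-(n\kappa_V-a_{V,\alpha_V}-2\epsilon_V)\}\). Hoeffding's lower tail then gives the exponential with \(2\epsilon_V\) subtracted, again trivially when the positive part is zero. Adding the \(\tau_y(M)\) terms and weighting by \(\pi_y\) yields \(\mathfrak B_V\).

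The main point needing care is the bookkeeping of the margin \(\epsilon_V\). Shifting the threshold by \(+\epsilon_V\) keeps the false positive bound at exactly \(\alpha_V\), but costs \(2\epsilon_V\) in the false negative exponent. The other subtlety is that the approximation guarantee is invoked only on \(E_M\), and Hoeffding is applied to the unclipped \(V\). This requires no independence between \(E_M\) and \(V\), because I only bound the probability of an intersection by the probability of the larger event.
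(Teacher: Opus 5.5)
Your proposal is correct and follows essentially the same route as the paper's proof. Both apply Hoeffding's inequality to the independent summands $r_t^2\in[0,U^2]$ under each class for the exact rule, then split on $E_M$ and use the two inclusions $\{\overline h_V=1\}\cap E_M\subseteq\{V>\lambda_{V,\alpha_V}\}$ and $\{\overline h_V=0\}\cap E_M\subseteq\{V\le\lambda_{V,\alpha_V}+2\epsilon_V\}$, with your explicit treatment of the case where the positive part vanishes being a detail the paper leaves implicit.
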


The exact part of Theorem~\ref{thm:volatility_error} gives an error guarantee for finite \(T\) for the realised volatility rule. For the clipped neural rule, the same false positive control holds on \(E_M\). The approximation reduces the effective alternative separation by \(2\epsilon_V\), while inputs outside the approximation domain contribute \(\tau_0(M)\) and \(\tau_1(M)\).

Finally, we consider the AR statistic under the following Gaussian model.
\begin{as}
Let \(X_t=\Phi X_{t-1}+\varepsilon_t\), \(t=1,\ldots,T\), with \(X_0=0\). Conditional on \((Y,\Phi)\), the innovations \(\varepsilon_1,\ldots,\varepsilon_T\) are independent \(N(0,\sigma_{AR}^2)\) variables, where \(\sigma_{AR}^2>0\) is known, and they are independent of \(\Phi\) given \(Y\). The conditional law of \(\Phi\) given \(Y=0\) is supported on \([-1,1]\), whereas its conditional law given \(Y=1\) is supported on \([1+\kappa_{AR},\infty)\) for a fixed \(\kappa_{AR}>0\).
\label{ass:gaussian_ar}
\end{as}

The next result gives size and power bounds for finite \(T\) for the autoregressive rule under the Gaussian model, together with the corresponding risk bound for its clipped neural approximation.
\begin{theorem}[Error bounds for the autoregressive classifiers]
\label{thm:artest}
Suppose Assumption~\ref{ass:gaussian_ar} holds and let \(T\geq2\). Fix \(\alpha\in(0,1)\), define \(Q_T=\sum_{t=1}^{T-1}X_t^2\), and set \(\lambda_{AR,\alpha}=2\sigma_{AR}^2T\sqrt{\log(4T/\alpha)\log(2/\alpha)}\). The exact classifier \(h_{AR,\alpha}(\bm X)=\mathbb I\{S_{AR}(\bm X)>\lambda_{AR,\alpha}\}\) satisfies \(\mathbb P\{h_{AR,\alpha}(\bm X)=1\mid Y=0\}\leq\alpha\). Suppose there exist \(q_T>0\) and \(\eta_T\in[0,1]\) such that \(\mathbb P(Q_T\geq q_T\mid Y=1)\geq1-\eta_T\) and \(q_T\geq2\lambda_{AR,\alpha}/\kappa_{AR}\). Then
\[
\mathbb P\{h_{AR,\alpha}(\bm X)=0\mid Y=1\}
\leq\eta_T+\exp\left\{-\frac{\kappa_{AR}^2q_T}{8\sigma_{AR}^2}\right\}.
\]
Consequently,
\[
\operatorname{err}(h_{AR,\alpha})
\leq
\pi_0\alpha
+\pi_1\left[
\eta_T+\exp\left\{-\frac{\kappa_{AR}^2q_T}{8\sigma_{AR}^2}\right\}
\right].
\]

Now fix \(M>0\) and \(\epsilon_{AR}>0\), and suppose that \(\sup_{\bm x\in\mathcal X_M}\lvert f_{\epsilon_{AR},M}^{AR}(\bm x)-S_{AR}(\bm x)\rvert\leq\epsilon_{AR}\). Define $\overline h_{AR}(\bm X)=\mathbb I\left\{f_{\epsilon_{AR},M}^{AR}(\operatorname{clip}_M(\bm X))>\lambda_{AR,\alpha}+\epsilon_{AR}\right\}.$ If the same \(q_T\) and \(\eta_T\) satisfy the stronger condition \(q_T\geq2(\lambda_{AR,\alpha}+2\epsilon_{AR})/\kappa_{AR}\), then
\[
\operatorname{err}(\overline h_{AR})
\leq\pi_0\{\alpha+\tau_0(M)\}
+\pi_1\left[\eta_T+\exp\left\{-\frac{\kappa_{AR}^2q_T}{8\sigma_{AR}^2}\right\}+\tau_1(M)\right]
=:\mathfrak B_{AR}(\alpha,\epsilon_{AR},M).
\]
\end{theorem}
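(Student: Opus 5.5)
The approach is to decompose the AR statistic into a drift term plus a martingale term and control the martingale with a self-normalised exponential inequality. Under Assumption~\ref{ass:gaussian_ar}, $X_{t+1}-X_t=(\Phi-1)X_t+\varepsilon_{t+1}$, so
\[
S_{AR}(\bm X)=(\Phi-1)Q_T+M_T,\qquad M_T=\sum_{t=1}^{T-1}X_t\varepsilon_{t+1}.
\]
We work conditionally on $(Y,\Phi)$, which is legitimate because the innovations are independent of $\Phi$ given $Y$. Let $\mathcal F_t=\sigma(\varepsilon_1,\ldots,\varepsilon_t)$. Then $X_t$ is $\mathcal F_t$-measurable and $\varepsilon_{t+1}\sim N(0,\sigma_{AR}^2)$ is independent of $\mathcal F_t$. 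Hence, for every fixed $\theta\in\mathbb R$, the process $\exp\{\theta M_s-\theta^2\sigma_{AR}^2Q_s/2\}$ is a martingale with mean one, where $Q_s=\sum_{t\le s}X_t^2$. Consequently, for any event $A$ and constant $c$ with $\theta M_T-\theta^2\sigma_{AR}^2Q_T/2\ge c$ on $A$, Markov's inequality gives $\mathbb P(A\mid\Phi)\le e^{-c}$. This single tool drives both the size and the power bounds.

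\textbf{Size.} On $\{Y=0\}$ we have $\Phi\in[-1,1]$, so $(\Phi-1)Q_T\le0$ and therefore $S_{AR}\le M_T$. The difficulty is that $Q_T$ is random, so we first bound it on an event of high probability.
\begin{itemize}
\item Since $X_t=\sum_{s\le t}\Phi^{t-s}\varepsilon_s$ and $|\Phi|\le1$, each $X_t$ is centred Gaussian with variance at most $t\sigma_{AR}^2$.
\item The tail bound $\mathbb P(|Z|>x)\le 2e^{-x^2/2}$ with $x^2=2\log(4T/\alpha)$, combined with a union bound over $t\le T-1$, shows that with probability at least $1-\alpha/2$ we have $X_t^2\le 2t\sigma_{AR}^2\log(4T/\alpha)$ for all $t$.
\item On that event, $Q_T\le q_0:=\sigma_{AR}^2T^2\log(4T/\alpha)$.
\end{itemize}
On $\{M_T>\lambda,\ Q_T\le q_0\}$, take $\theta=\lambda/(\sigma_{AR}^2q_0)$. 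Then the exponent is at least $\lambda^2/(2\sigma_{AR}^2q_0)$, so this event has probability at most $\exp\{-\lambda^2/(2\sigma_{AR}^2q_0)\}$. This is at most $\alpha/2$ as soon as $\lambda\ge\sigma_{AR}^2T\sqrt{2\log(4T/\alpha)\log(2/\alpha)}$, and the stated $\lambda_{AR,\alpha}$ exceeds this value. Adding the two contributions gives size at most $\alpha$. I expect this step to be the main obstacle, because the martingale must be self-normalised by a random $Q_T$ whose tail must be controlled separately. The constant in $\lambda_{AR,\alpha}$ leaves slack for this.

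\textbf{Power.} On $\{Y=1\}$ we have $\Phi-1\ge\kappa_{AR}$. On the event $\{Q_T\ge q_T,\ S_{AR}\le\lambda\}$ with $\lambda\le\kappa_{AR}q_T/2$, it follows that
\[
M_T\le\lambda-\kappa_{AR}Q_T\le-\kappa_{AR}Q_T/2 .
\]
Apply the martingale bound to $-M_T$ with $\theta=\kappa_{AR}/(2\sigma_{AR}^2)$. The exponent is then at least $\kappa_{AR}^2Q_T/(4\sigma_{AR}^2)-\kappa_{AR}^2Q_T/(8\sigma_{AR}^2)\ge\kappa_{AR}^2q_T/(8\sigma_{AR}^2)$. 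Adding $\mathbb P(Q_T<q_T\mid Y=1)\le\eta_T$ gives the stated false negative bound. Combining the two conditional bounds with weights $\pi_0$ and $\pi_1$ yields the bound on $\operatorname{err}(h_{AR,\alpha})$.

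\textbf{Clipped rule.} On $E_M$, clipping is the identity, so $|f^{AR}_{\epsilon_{AR},M}(\operatorname{clip}_M(\bm X))-S_{AR}(\bm X)|\le\epsilon_{AR}$.
\begin{itemize}
\item A false positive on $E_M$ forces $S_{AR}>\lambda_{AR,\alpha}$. Hence the conditional false positive probability is at most $\alpha+\tau_0(M)$.
\item A false negative on $E_M$ forces $S_{AR}\le\lambda_{AR,\alpha}+2\epsilon_{AR}$. Rerunning the power argument with $\lambda=\lambda_{AR,\alpha}+2\epsilon_{AR}$ uses exactly the strengthened condition $q_T\ge2(\lambda_{AR,\alpha}+2\epsilon_{AR})/\kappa_{AR}$. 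Adding $\tau_1(M)$ for $E_M^c$ gives $\mathfrak B_{AR}(\alpha,\epsilon_{AR},M)$.
\end{itemize}
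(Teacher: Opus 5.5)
Your proposal is correct and follows essentially the same route as the paper's proof. The shared steps are the decomposition \(S_{AR}(\bm X)=(\Phi-1)Q_T+M_T\), the Gaussian exponential martingale \(\exp\{\pm\theta M_s-\theta^2\sigma_{AR}^2Q_s/2\}\) combined with Markov's inequality, truncation of \(Q_T\) through a union bound for the size, the choice \(\theta=\kappa_{AR}/(2\sigma_{AR}^2)\) for the power, and the \(\pm\epsilon_{AR}\) sandwich on \(E_M\) for the clipped rule. The only difference is cosmetic: you bound \(\operatorname{Var}(X_t)\le t\sigma_{AR}^2\) rather than the paper's \(T\sigma_{AR}^2\), which halves the truncation level for \(Q_T\) and shows that the stated \(\lambda_{AR,\alpha}\) is larger than necessary by a factor of \(\sqrt2\).
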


\begin{remark}
The separation \(\kappa_{AR}>0\) in Theorem~\ref{thm:artest} is fixed. Mildly explosive sequences such as \(\Phi_T=1+cT^{-a}\) require a separate analysis because both the growth of \(Q_T\) and the condition on \(q_T\) change. See, for example, \citet{phillips2015testing}. The innovation variance is also treated as known.
\end{remark}

\subsection{Generalization bounds for the common CNN class}

This subsection compares empirical risk minimisation over the common CNN class with slope, realised volatility and autoregressive rules. It first bounds the population risk of the empirical risk minimiser relative to the best classifier in the common class and then obtains an oracle comparison with the three fixed comparators. For \(1\leq d\leq N\) and \(\delta\in(0,1)\), write $r_N(d,\delta)=\sqrt{[{8d\log(2eN/d)+8\log(4/\delta)}]/{N}}$.

\begin{theorem}[Oracle bound over the three fixed statistical comparators]
\label{thm:three_rule_oracle}
Let \(T\geq5\) and let \(\mathcal D_N=\{(\bm X^{(i)},Y^{(i)})\}_{i=1}^N\) consist of independent copies of \((\bm X,Y)\). Before observing \(\mathcal D_N\), fix fully specified deterministic comparators \(\overline h_S,\overline h_V,\overline h_{AR}\) of the forms introduced above. Their specifications include every threshold and calibration constant, the clipping radius, and the selected approximation networks. There exists a fixed CNN classifier class with parallel branches \(\mathcal H\), with the head specification fixed as in Definition~\ref{def:cnn}, such that \(\overline h_S,\overline h_V,\overline h_{AR}\in\mathcal H\). Let \(d_{\mathcal H}=\operatorname{VCdim}(\mathcal H)\), where \(1\leq d_{\mathcal H}\leq N\), and let \(\widehat h_{\mathrm{ERM}}\in\arg\min_{h\in\mathcal H}N^{-1}\sum_{i=1}^N\mathbb I\{h(\bm X^{(i)})\neq Y^{(i)}\}\). Then, for every \(\delta\in(0,1)\), with probability at least \(1-\delta\),
\[
\operatorname{err}(\widehat h_{\mathrm{ERM}})
\leq\inf_{h\in\mathcal H}\operatorname{err}(h)+2r_N(d_{\mathcal H},\delta)
\leq\min_{G\in\{S,V,AR\}}\operatorname{err}(\overline h_G)+2r_N(d_{\mathcal H},\delta).
\]
\end{theorem}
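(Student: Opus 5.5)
The argument splits into a construction step and a standard VC step. First, I build a single CNN class $\mathcal H$ with parallel branches that contains all three comparators. By Theorem~\ref{thm:trend}, $\overline h_S=h_{S,\lambda_S}$ is realised with $2(T-4)$ one-layer branches of kernel size $T$ and an affine head. The comparator $\overline h_V$ is $\mathbb I\{f^{RV}_{\epsilon_V,M}(\operatorname{clip}_M(\bm X))-\lambda_{V,\alpha_V}-\epsilon_V>0\}$. I realise it with one branch whose first block has kernel size one and implements $\operatorname{clip}_M(x)=-M+\sigma(x+M)-\sigma(x-M)$ through two ReLU channels. Because the next layer is linear in these channels, the affine recombination can be folded into the first layer of the realised volatility network of Theorem~\ref{thm:volatility}. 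That network uses kernel size two followed by kernel-one layers and sum pooling, and it is a fixed, pre-specified network. The branch for $\overline h_{AR}$ is built the same way from Theorem~\ref{thm:ar-approximate}.

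To combine the three branches, I pad them to a common architecture. Kernel-one identity layers through $\sigma(x)$ and $\sigma(-x)$ channels let the depths be equalised, and unused channels get zero weights. The pooled features of the three branch groups are concatenated into $\bm P(\bm X)$. Each comparator score is then an affine function of $\bm P$: for the slope rule this is the head from Theorem~\ref{thm:trend}, and for $V$ and $AR$ it is the sum-pooled feature minus the threshold. As remarked after Definition~\ref{def:cnn}, the width condition $d_l\geq2$ makes every affine score $\gamma_0+\bm\gamma^\top\bm P$ representable by the fixed head, with zero coefficients on the other branches. Therefore $\overline h_S,\overline h_V,\overline h_{AR}\in\mathcal H$. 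The architecture depends only on the pre-fixed specifications, so $\mathcal H$ is deterministic and independent of $\mathcal D_N$.

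Second, I apply the VC uniform deviation bound. For a class of VC dimension $d\leq N$, with probability at least $1-\delta$,
\[
\sup_{h\in\mathcal H}\Bigl|N^{-1}\textstyle\sum_i\mathbb I\{h(\bm X^{(i)})\neq Y^{(i)}\}-\operatorname{err}(h)\Bigr|\leq r_N(d,\delta).
\]
This follows from symmetrisation and Sauer's lemma $\Pi_{\mathcal H}(2N)\leq(2eN/d)^d$, with constants matching $r_N$, for example the form $\sqrt{8(d\log(2eN/d)+\log(4/\delta))/N}$. On this event, let $h^\ast$ attain the infimum up to an arbitrarily small $\eta$. Then
\[
\operatorname{err}(\widehat h_{\mathrm{ERM}})\leq\widehat{\operatorname{err}}(\widehat h_{\mathrm{ERM}})+r_N\leq\widehat{\operatorname{err}}(h^\ast)+r_N\leq\operatorname{err}(h^\ast)+2r_N.
\]
Letting $\eta\to0$ gives the first inequality. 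The second inequality is immediate from the inclusion of the three comparators in $\mathcal H$.

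The main obstacle I expect is the merging step. The padding must embed different branch depths into one fixed architecture (passing signals unchanged through extra ReLU layers via the $\sigma(x)-\sigma(-x)$ identity), and the clip layer must fold into the first convolution without breaking the kernel-size-two structure. I would handle the clip by applying it at a kernel-one first block and shifting the subsequent kernel-two layer to the second block, adjusting $\bm k$ accordingly. Verifying that this stays within Definition~\ref{def:cnn} is bookkeeping but requires care. Tracking the exact constant in $r_N$ is secondary and can be cited from standard VC theory.
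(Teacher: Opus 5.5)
Your proposal is correct and follows the paper's proof essentially step for step. It places the $2(T-4)$ slope branches in parallel with the realised volatility and autoregressive branches, prefixes each of the latter with a kernel-one two-channel ReLU clipping block absorbed into the following kernel-two layer, zeroes the head coefficients of the inactive branches through the affine submodel, and then applies the VC uniform deviation bound with the ERM argument. The depth-padding step is unnecessary, since Definition~\ref{def:cnn} already gives each branch its own depth $L_b$, kernels and channel sizes.
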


The workflow used in the proof of Theorem~\ref{thm:three_rule_oracle} can be applied to other statistics, given suitable CNN representation or approximation results and comparator error bounds. The corresponding branches can then be added to the candidate class, and the same empirical risk minimisation (ERM) argument can be applied together with a VC dimension bound for the enlarged class. Thus approximation and clipping errors specific to each model enter through the comparator risk, whereas learning contributes the usual complexity penalty.

%
%

\section{Simulation}
\label{sec:generalization}

\begingroup
This section examines the oracle comparison in finite samples. The experiment uses fixed CNN constructions for the slope change, realised volatility and autoregressive statistics, together with a joint affine head that learns how to combine their evidence. The objective is to compare the joint classifier with the best fixed statistical rule and to study how the comparison changes with the training sample size. Two additional numerical experiments in the Supplementary Material provide sampled checks of the exact representation and uniform approximation constructions.

\subsection{Data generation}
All trajectories have length \(T=40\), and the two classes are equally likely. We consider three scenarios aligned with the model settings above and one heterogeneous scenario.

In the slope change scenario, $X_t=A+Bt+\varepsilon_t-
Y D\frac{\kappa_S}{2}|t-\tau|, t=1,\ldots,T,$ where \(A\sim\operatorname{Unif}[-0.05,0.05]\),
\(B\sim\operatorname{Unif}[-0.008,0.008]\),
\(\varepsilon_t\stackrel{\mathrm{iid}}{\sim}N(0,0.01^2)\),
\(D\) is uniform on \(\{-1,1\}\), \(\tau\) is uniform on
\(\{18,\ldots,22\}\), and \(\kappa_S=0.03\). Thus the nuisance linear trend is present in both classes, whereas the alternative adds a continuous peak or valley whose two segment slopes differ by \(\kappa_S\). In the volatility change scenario, \(X_1=0\) and
\(X_{t+1}=X_t+r_t\), where
\(r_t=(T-1)^{-1}A_tD_t\), the \(D_t\)'s are independent Rademacher
variables, and the \(A_t\)'s are independent Bernoulli variables with success probability \(0.20\) under the null and \(0.55\) under the alternative. In the AR scenario, \(X_1=\varepsilon_1\) and
\(X_t=\Phi X_{t-1}+\varepsilon_t\), where
\(\varepsilon_t\stackrel{\mathrm{iid}}{\sim}N(0,0.03^2)\),
\(\Phi=1\) under the null and \(\Phi=1.10\) under the alternative. Both classes are divided by the common scale \(4(0.03)\{\sum_{j=0}^{T-1}1.10^{2j}\}^{1/2}\). 

The heterogeneous scenario uses a common Gaussian random walk null generated from \(X_0=0\) and \(X_t=X_{t-1}+\varepsilon_t\), where \(\varepsilon_t\overset{\mathrm{iid}}{\sim}N(0,0.01^2)\), for \(t=1,\ldots,T\). Under the alternative, one of three subtypes is selected independently with probability \(1/3\). The slope subtype adds \(-D(0.02)|t-\tau|/2\) to this path, where \(D\) is uniform on \(\{-1,1\}\) and \(\tau\) is uniform on \(\{18,\ldots,22\}\). The volatility subtype instead adds \(0.04A_{t-1}D_{t-1}\) to each increment for \(t\geq2\), where the \(A_t\)'s are independent Bernoulli variables with success probability \(0.40\) and the \(D_t\)'s are independent Rademacher variables. The AR subtype replaces the random walk by \(X_1=\varepsilon_1\) and \(X_t=1.10X_{t-1}+\varepsilon_t\). The subtype, locations, directions, jumps and innovations are mutually independent. Thus all null trajectories have the same distribution, and heterogeneity arises only under the alternative.

\subsection{Calibration and learning procedures}
For each scenario, an independent null sample of size \(20{,}000\) is generated before the training data. It fixes the \(0.95\) null quantiles \(\lambda_S,\lambda_V,\lambda_{AR}\), the feature scales, the clipping radius \(M=1\), and the triangular map approximants at level \(m=9\) used to approximate the quadratic statistics. The scales $s_S,s_V,s_{AR}$ are the sample standard deviations in this calibration sample of the respective raw margins $S-\lambda_S$, $\widetilde V-\lambda_V-\epsilon_V$, and $\widetilde S_{AR}-\lambda_{AR}-\epsilon_{AR}$, with any value smaller than $10^{-8}$ replaced by $10^{-8}$. At \(T=40\), their analytic allowances for uniform error are \(\epsilon_V=1.488\times10^{-4}\) and \(\epsilon_{AR}=1.116\times10^{-4}\). The three fixed comparators are
\(\overline h_S=\mathbb I\{S>\lambda_S\}\),
\(\overline h_V=\mathbb I\{\widetilde V>\lambda_V+\epsilon_V\}\), and \(\overline h_{AR}=\mathbb I\{\widetilde S_{AR}>
\lambda_{AR}+\epsilon_{AR}\}\), where the tildes denote the fixed neural approximations applied after clipping. All their specifications are therefore fixed before the labelled training sample, as required in Theorem~\ref{thm:three_rule_oracle}.

We compare three quantities. First, the oracle benchmark is the smallest test error among the three fixed comparators. It is infeasible because its identity is determined using the test sample. Within each Monte Carlo replication, this selection is made once for the entire replication. Second, the ERM selector for a single comparator evaluates the three comparators on the labelled training sample, selects one using training error alone, and then reports the test error of that selected comparator. Third, the joint classifier with an affine head uses the externally standardised margins
\[
Z(\bm X)=\left(
\frac{S(\bm X)-\lambda_S}{s_S},
\frac{\widetilde V(\bm X)-\lambda_V-\epsilon_V}{s_V},
\frac{\widetilde S_{AR}(\bm X)-\lambda_{AR}-\epsilon_{AR}}{s_{AR}}
\right)
\]
and predicts \(\mathbb I\{\widehat\beta_0+
\widehat{\bm\beta}^{\mathsf T}Z(\bm X)>0\}\). The coefficients are fitted by logistic cross entropy with an \(L_2\) penalty. This is a computationally practical surrogate for ERM in the common class. Unlike the baseline using a single comparator, it can combine information from all three branches.

For each Monte Carlo replication, we generate nested balanced training samples with \(N\in\{200,500,1000\}\) and an independent balanced test sample of size \(4{,}000\). The same test sample is used for all three training sizes within a replication, and the procedure is repeated \(500\) times. Approximation level, clipping, thresholds and normalisation scales remain fixed as \(N\) varies. For a fitted classifier \(\widehat h\), define its difference in test error from the oracle benchmark by $\widehat E_{\mathrm{oracle}}(\widehat h)
=\widehat R_{\mathrm{test}}(\widehat h)
-\min_{G\in\{S,V,AR\}}
\widehat R_{\mathrm{test}}(\overline h_G).$

\subsection{Oracle comparison results}
Table~\ref{tab:oracle_simulation} reports the Monte Carlo mean of the principal performance measures. The oracle column is constant across \(N\) because the comparators and the paired test sample do not depend on the training size. The small difference between the oracle and ERM selector in the slope scenario is a selection effect due to the finite sample. In the volatility and mixed scenarios, the equality of the oracle and entries for the single ERM selector across \(N\) means that the selector based on training and the oracle based on the test set identify the same fixed rule throughout the experiment. Realised volatility is selected as the oracle comparator in the mixed scenario based on its test set performance. The joint affine head has lower test error than the oracle comparator in the slope, volatility and mixed scenarios. In the AR scenario it is slightly worse at \(N=200\), approximately matches the comparator at \(N=500\), and improves on it at \(N=1000\). The absolute gap between training and test errors decreases with \(N\) in every scenario, while the AUC remains high.

\begin{table}[H]
\centering
\small
\setlength{\tabcolsep}{10.2pt}
\renewcommand{\arraystretch}{0.5}
\caption{Oracle comparison simulation results. Entries are Monte Carlo means over \(500\) replications. Abs. gap is the absolute gap between training and test errors of the joint classifier.}
\label{tab:oracle_simulation}
\begin{tabular}{llrrrrr}
\toprule
Scenario & \(N\) & Oracle & Single ERM & Joint affine & Abs. gap & AUC \\
\midrule
\multirow{3}{*}{Slope}
& 200  & 0.0241 & 0.0260 & 0.0155 & 0.0106 & 0.9965 \\
& 500  & 0.0241 & 0.0255 & 0.0129 & 0.0049 & 0.9976 \\
& 1000 & 0.0241 & 0.0251 & 0.0121 & 0.0031 & 0.9979 \\
\midrule
\multirow{3}{*}{Volatility}
& 200  & 0.0185 & 0.0185 & 0.0131 & 0.0097 & 0.9990 \\
& 500  & 0.0185 & 0.0185 & 0.0106 & 0.0047 & 0.9994 \\
& 1000 & 0.0185 & 0.0185 & 0.0099 & 0.0029 & 0.9995 \\
\midrule
\multirow{3}{*}{AR}
& 200  & 0.0696 & 0.0707 & 0.0708 & 0.0157 & 0.9608 \\
& 500  & 0.0696 & 0.0696 & 0.0688 & 0.0095 & 0.9623 \\
& 1000 & 0.0696 & 0.0696 & 0.0682 & 0.0069 & 0.9628 \\
\midrule
\multirow{3}{*}{Mixed}
& 200  & 0.0604 & 0.0604 & 0.0552 & 0.0151 & 0.9735 \\
& 500  & 0.0604 & 0.0604 & 0.0528 & 0.0089 & 0.9741 \\
& 1000 & 0.0604 & 0.0604 & 0.0521 & 0.0060 & 0.9742 \\
\bottomrule
\end{tabular}
\end{table}

Figure~\ref{fig:oracle_simulation} displays the paired test error differences. The curve for the single ERM selector remains close to the zero oracle line in all four panels. The curve for the joint affine head lies below zero for every training size in the slope, volatility and mixed scenarios, and its advantage increases with \(N\). These patterns show that the learned affine head can improve on the fixed calibrated comparators.

\begin{figure}[h]
\centering
\includegraphics[width=\linewidth]{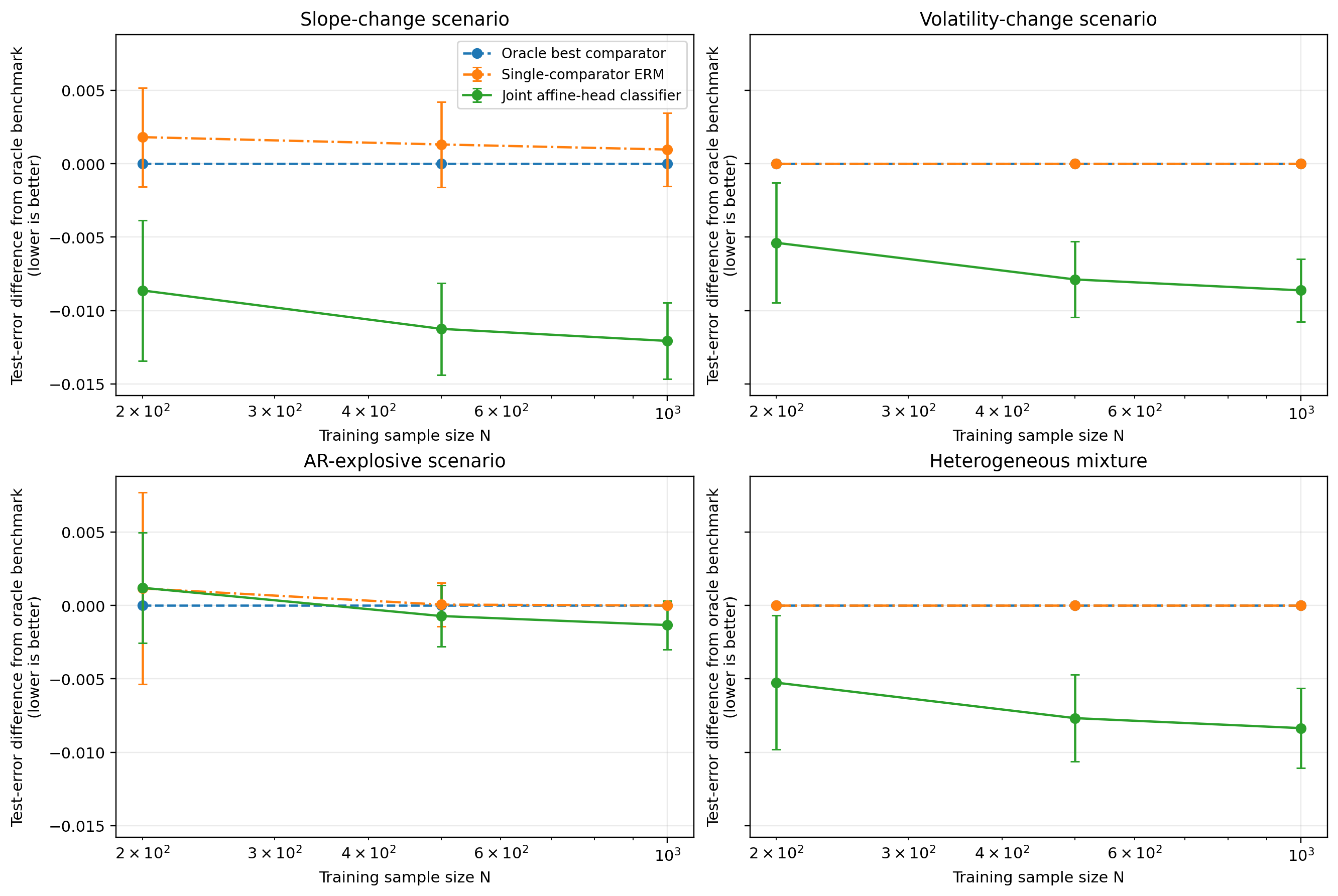}
\caption{Test error differences from the oracle best comparator. Points are Monte Carlo means and error bars are one Monte Carlo standard deviation over \(500\) replications. The oracle benchmark is zero by construction, and negative values indicate an improvement over every fixed comparator.}
\label{fig:oracle_simulation}
\end{figure}

\endgroup

\section{Empirical results}\label{Application}

\subsection{Energy price data}
We use six daily energy commodity spot price series from the \href{https://fred.stlouisfed.org/}{Federal Reserve Economic Data (FRED)} database. The dataset covers several major categories of energy commodities, including crude oil, natural gas, and refined petroleum products such as gasoline, diesel and jet fuel. Table \ref{tab:data detail} provides a brief description of the commodity price series used in the analysis.

\begin{table}[ht]
\centering
\setlength{\tabcolsep}{19pt}
\caption{Description of the commodity price series used in the analysis.}
\small
\begin{tabular}{ll}
\toprule
Commodity Name & Detail \\
\midrule
DCOILBRENTEU & Brent crude oil prices in Europe  \\
DHHNGSP & Natural gas spot prices at Henry Hub\\
DGASNYH & Conventional gasoline prices in New York Harbor \\
DRGASLA & Reformulated gasoline blendstock prices in Los Angeles \\
DDFUELUSGULF & Ultra-low-sulfur No. 2 diesel fuel prices on the U.S. Gulf Coast \\
DJFUELUSGULF & Kerosene jet fuel prices on the U.S. Gulf Coast \\

\bottomrule
\end{tabular}
\label{tab:data detail}
\end{table}

To investigate whether explosive price patterns in energy commodities are associated with events in the real world, we classify historical events into several representative categories based on historical records, financial news sources, reports from the \href{https://www.eia.gov/}{U.S. Energy Information Administration (EIA)}, and AI tools for information retrieval, including GPT-5.5, Gemini 3.5, and the \href{https://ask.ft.com/}{Financial Times AI assistant}. We use these AI tools to identify candidate events, summarise their economic relevance, and collect structured event attributes, including the affected commodity, event category, event start and end dates, a key event date, information sources, and a short description of the underlying shock. The key event date represents the most important date associated with the event, such as an announcement date, outbreak date, or policy decision date.

Specifically, we consider three event categories according to the dominant mechanism through which each event is expected to affect the corresponding energy price. First, weather and natural hazard events (Weather) are initiated by meteorological conditions or natural disasters, such as hurricanes, winter storms, and extreme cold spells. Second, geopolitical and security events (Geopolitical) comprise armed conflicts and major security shocks, including the Gulf War and Russia's invasion of Ukraine. The combined supply and financial family (Supply-financial) contains the remaining supply, production, policy, sanctions, operational and macroeconomic or financial shocks. Representative examples include the 2014--2016 collapse in oil prices associated with the U.S. shale boom and OPEC output policy, and the Global Financial Crisis. The resulting records were subsequently screened and curated, with manual checks focusing on duplicate entries, category consistency, and source availability.

\subsection{Model training}\label{sec:model_training}

The empirical procedure has two stages. Stage~1 trains a separate CNN event detector for each commodity. Stage~2 trains two pooled binary CNN classifiers on windows with known labels for event families. The first separates weather events from the other families, and the second separates geopolitical events from supply-financial events. During prediction, only windows classified as events by Stage~1 enter this hierarchy, producing four final classes consisting of no event and the three event families.

Each binary classifier uses one branch of the architecture with convolution, pooling and a classifier head in Section~\ref{sec:cnn}. Three convolutional blocks with kernels of size three map the channel dimension from $1$ to $32$, $64$ and $128$. Each block applies a ReLU activation and batch normalisation. The first two use max pooling, while the third uses adaptive max pooling. A linear layer maps the resulting $128$ features to $64$ hidden units, followed by ReLU, dropout at rate $0.3$ and an output layer producing two scores. Softmax converts the scores into class probabilities. Dropout randomly masks hidden units only during training, while prediction uses the full fitted network. At prediction, batch normalisation uses the running means and variances estimated during training and therefore acts as a fixed channel-wise affine transformation.

To reduce class imbalance in Stage~1, the non-event class in the training set is randomly downsampled, retaining $\min(N_0,rN_1)$ of the $N_0$ non-event windows, where $N_1$ is the number of event windows and $r=1.5$. Downsampling is applied only to the training set, so the validation and test sets retain their original class proportions.

The Stage~1 classifiers are then trained with the focal loss of \citet{lin2017focal}, which reduces the influence of easily classified observations. For a minibatch $\mathcal{B}$,
\begin{equation}
\mathcal{L}
=\frac{1}{|\mathcal{B}|}\sum_{i\in\mathcal{B}}
-\alpha_{y_i}(1-p_{y_i})^{\nu}\log(p_{y_i}),
\label{eq:fl}
\end{equation}
where $p_{y_i}$ is the softmax probability assigned to the true class $y_i$ and $\nu=2$, following the default specification of \cite{lin2017focal}. For Stage~1, $\alpha_{y_i}=N/(2N_{y_i})$, where $N_{y_i}$ is the number of observations in class $y_i$ and $N$ is the size of the training set. Parameters are updated with the Adam optimiser \citep{kingma2015adam}. The Stage~2 classifiers use the same focal term without downsampling. Let $g(i)$ denote the event of window $i$, let $n_g$ be the number of training windows from event $g$, and let $G_c$ be the number of events in classifier class $c$. Each window first receives base weight $1/n_{g(i)}$, so the weights sum to one within every event. The base weights therefore sum to $G_c$ in class $c$ and are multiplied by $a_c=(G_0+G_1)/(2G_c)$. Up to a common normalisation, the final weight of window $i$ is $w_i=a_{y_i}/n_{g(i)}=(G_0+G_1)/(2G_{y_i}n_{g(i)})$, and the Stage~2 objective is $\sum_{i\in\mathcal B}w_i[-(1-p_{y_i})^\nu\log(p_{y_i})]/\sum_{i\in\mathcal B}w_i$. The two classifier classes consequently have equal total weight and, within each class, every event has equal total weight. Thus neither events producing more overlapping windows nor classes containing more events dominate training. For each Stage~2 classifier, the checkpoint with the highest validation F1 score is retained.

The rolling window length is selected by validation performance. For each $T\in\mathcal T_{\mathrm{cand}}$, the windows and labels are rebuilt before the split. We select $T^*=\arg\max_{T\in\mathcal T_{\mathrm{cand}}}\mathrm{F1}_{\mathrm{val}}(T)$ and evaluate only the associated model on the test set. Algorithm~\ref{algo:1} summarises the Stage~1 procedure.

\begingroup
\setstretch{1.5}
\begin{algorithm}[H]
\caption{Labelling, training and window length selection}
\label{algo:1}
\begin{algorithmic}[1]
\Require Price series $\{X_1,\ldots,X_n\}$, event table $\mathcal E$, candidate window lengths $\mathcal T_{\mathrm{cand}}$
\Ensure Selected window length $T^*$ and trained CNN $f_{\theta^*}$
\State Remove events with impact measure $q_j<0.2$
\State Initialize $F^*\leftarrow-\infty$
\For{each $T\in\mathcal T_{\mathrm{cand}}$}
\State Construct rolling windows $\mathbf W_{t,T}\in\mathbb R^{1\times T}$, $t=T,\ldots,n$
\State Set $y_{t,T}=1$ if $r_{tj}\geq\tau$ or $d_j\in\mathbf W_{t,T}$ for some $E_j$, and set $y_{t,T}=0$ otherwise
\State Split the labelled windows into training, validation and test sets
\State Downsample the training set majority class and compute class weights
\State Train the CNN with the loss in equation~(\ref{eq:fl}) and denote the parameters by $\theta_T$
\If{$\mathrm{F1}_{\mathrm{val}}(T)>F^*$}
\State $F^*\leftarrow\mathrm{F1}_{\mathrm{val}}(T)$, $T^*\leftarrow T$, $\theta^*\leftarrow\theta_T$
\EndIf
\EndFor
\State Evaluate $f_{\theta^*}$ with window length $T^*$ on the test set
\end{algorithmic}
\end{algorithm}
\endgroup

\subsection{Data labelling}\label{sec:data_labelling}

For each commodity price series $\bm{X}=(X_1,\ldots,X_n)$ and candidate window length $T$, we construct rolling windows of length $T$. Let window $i$ be denoted by $W_i=[s_i,e_i]$, where $s_i$ and $e_i$ are its start and end dates. Each historical event is represented by an interval $E_j=[a_j,b_j]$, with key event date $d_j$.

Before assigning labels to rolling windows, we apply an impact filter to each event to remove events with limited price movement during the event period. For event $E_j$ and the associated segment $(X_{a_j},X_{a_j+1},\ldots,X_{b_j})$ of the commodity price series, we compute
\begin{equation*}
    q_j=\frac{\max_{t \in E_j}X_t-\min_{t\in E_j}X_t}{|\frac{1}{|E_j|}\sum_{t \in E_j}X_t|}.
\end{equation*}
Events with $q_j <0.2$ are excluded from the labelling procedure. This step reduces the influence of events that are historically relevant but do not correspond to a substantial observable price movement in the commodity series.

To determine whether a window is substantially associated with an event, we compute the overlap ratio
\begin{equation}
    r_{ij}=\frac{|W_i \cap E_j|}{\min \{|W_i|,|E_j|\}}.
    \label{eq:overlap ratio}
\end{equation}
The window $W_i$ is labelled as an event window if there exists an event interval $E_j$ such that $r_{ij} \ge \tau$ or $d_j \in W_i$, which means it either overlaps sufficiently with at least one event interval or contains the key event date. Otherwise it is labelled as a non-event window. In the empirical implementation, we set $\tau = 0.3$. Representative examples of commodity price trajectories and corresponding event periods are illustrated in Figure \ref{fig:data example}.

Throughout the splitting procedure, the same historical event associated with different commodities is treated as a separate event for each commodity. In both stages, all windows linked to an event are assigned to a single subset, and boundary windows are removed until no raw price observation is shared across subsets. Stage~1 is split separately for each commodity. Windows are first ordered by end date and assigned initially to training, validation and test sets in proportions of $56\%$, $24\%$ and $20\%$. The initial assignments are then adjusted to satisfy the restrictions above.

Stage~2 is constructed separately from labelled event windows pooled across commodities. The test events are selected only from the Stage~1 test set, having $20\%$ of events within each event family, combining eligible windows without events from the Stage~1 test set. Validation events are then selected from the remaining events outside that pool whenever possible, having $20\%$ of events within each event family. The remaining $60\%$ of events within each event family form the Stage~2 training set. The final joint test set is then constructed from the $9{,}001$ Stage~1 test windows. It retains the Stage~2 test event windows and the no-event windows that share no raw price observations with Stage~2 training or validation events. Removing $1{,}881$ event windows not assigned to Stage~2 testing and $1{,}254$ overlapping no-event windows leaves $5{,}866$ windows for the joint evaluation.

\begin{figure}[htbp]
    \centering
    \includegraphics[width=0.68\linewidth]{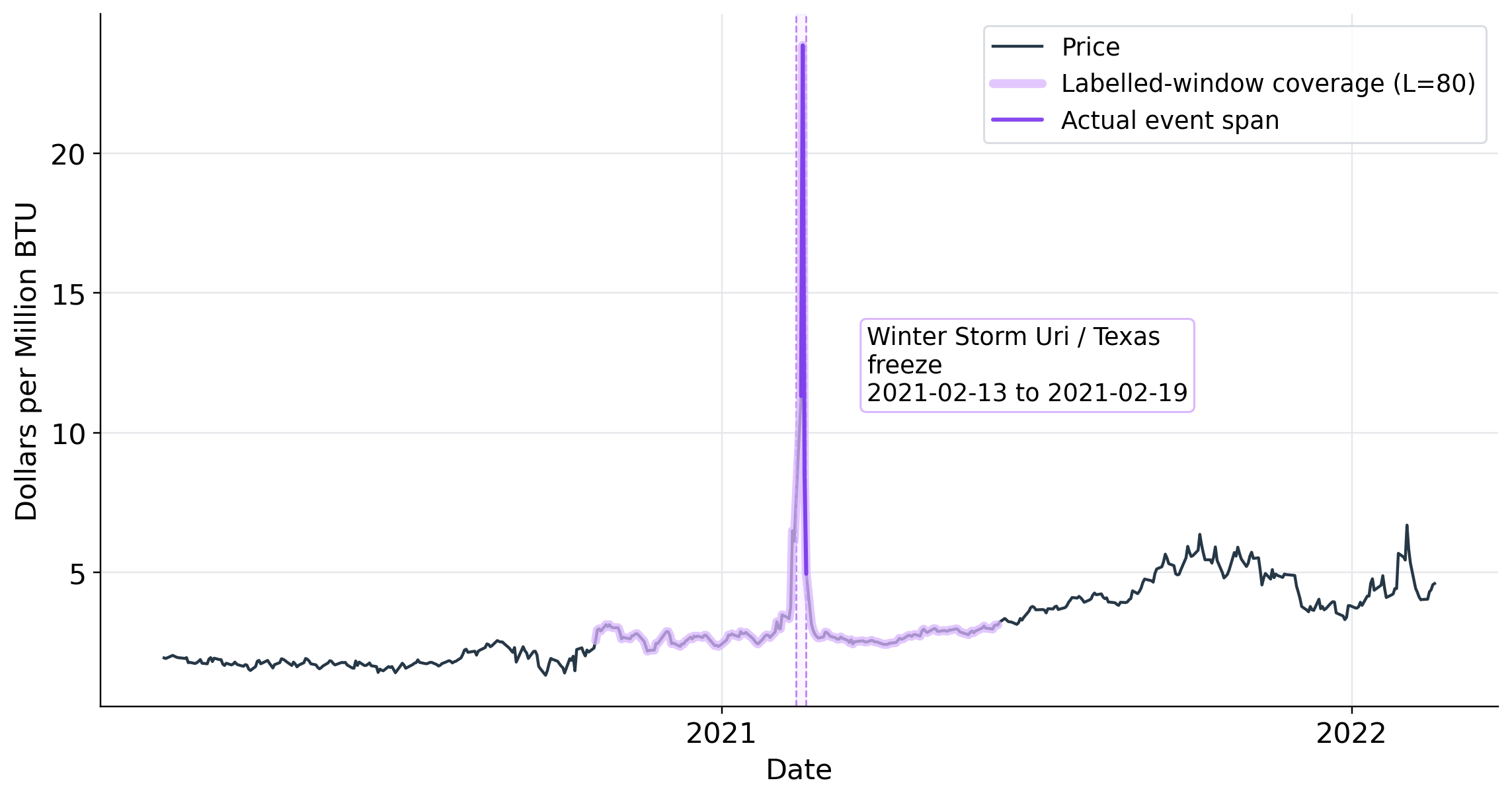}
    \includegraphics[width=0.68\linewidth]{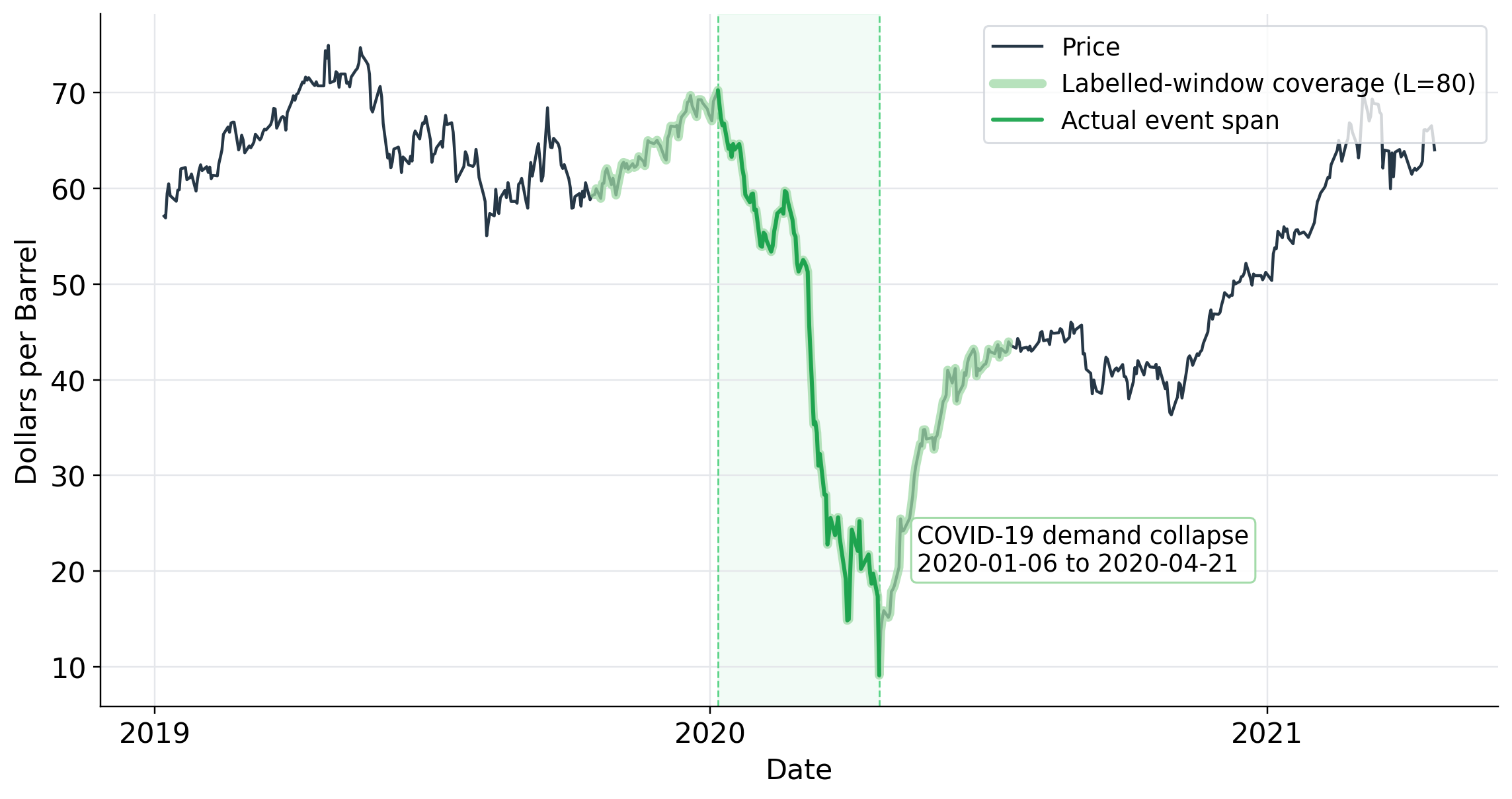}
    \includegraphics[width=0.68\linewidth]{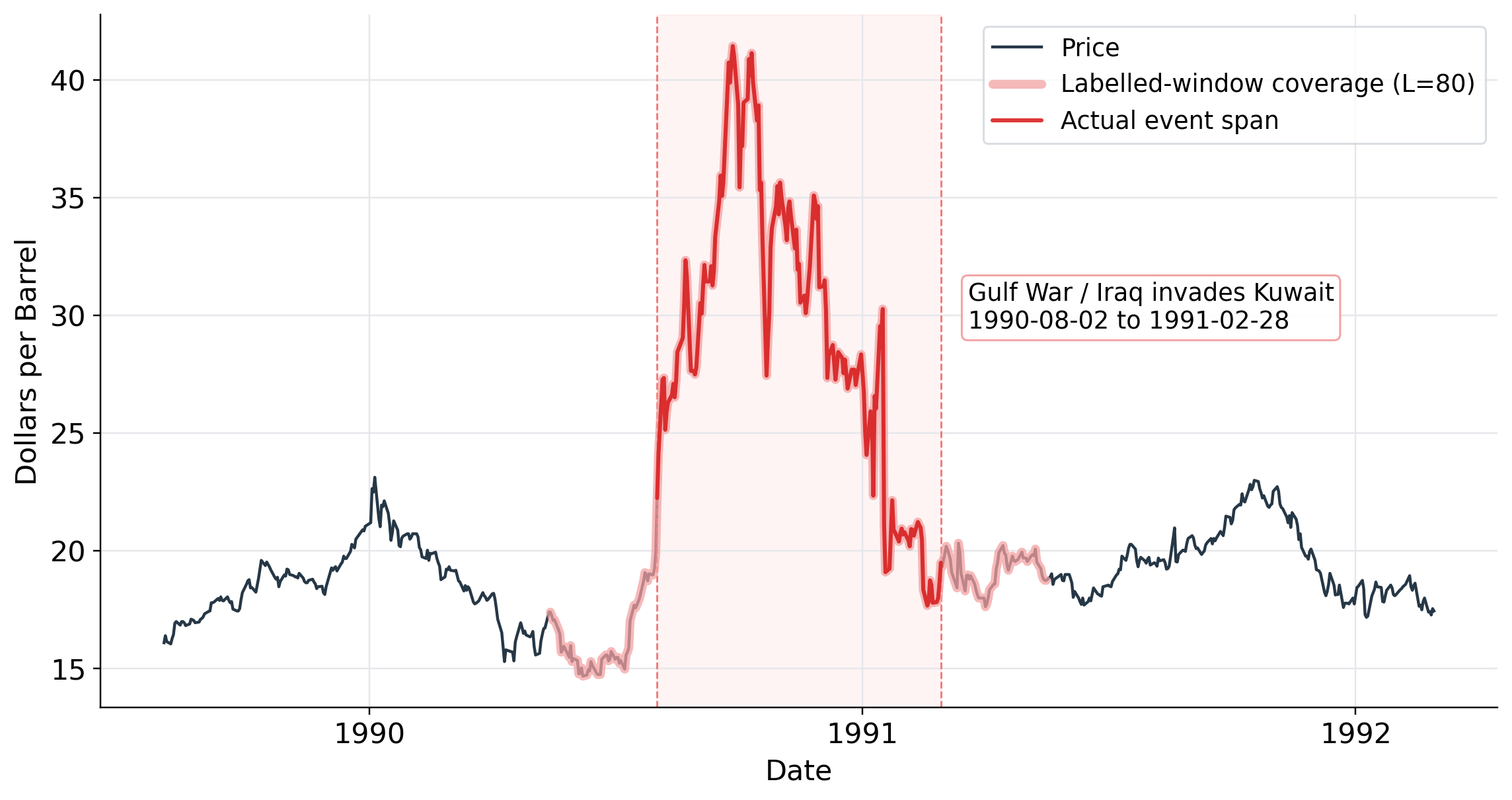}
    \caption{Representative examples of Henry Hub natural gas and Brent crude oil prices under different event categories. The highlighted curve segment represents the actual event period, while the pale thick curve segment represents the price observations covered by rolling windows labelled as event windows. From top to bottom, the panels show Winter Storm Uri for Henry Hub natural gas (Weather), the COVID-19 demand collapse for Brent crude oil (Supply-financial), and the Gulf War for Brent crude oil (Geopolitical).}
    \label{fig:data example}
\end{figure}

\subsection{Binary classification results}
This section reports the Stage~1 binary event detection results for each commodity and compares the proposed CNN with alternative prediction procedures.

Table \ref{tab:binary results} reports the test performance of the six CNNs trained separately by commodity. Henry Hub natural gas attains the highest accuracy, F1 and AUC, while Brent crude oil has the lowest accuracy and AUC. Diesel has almost complete event recall but lower precision, indicating that its main errors are false event detections.

\begin{table}[ht]
\centering
\setlength{\tabcolsep}{16pt}
\caption{Binary classification performance for the CNN trained for each of the six commodities.}
\small
\begin{tabular}{lccccc}
\toprule
Commodity & Accuracy & Precision & Recall & F1 & AUC \\
\midrule
DGASNYH & 0.8753 & 0.7964 & 0.7472 & 0.7710 & 0.9362 \\
DHHNGSP & 0.9189 & 0.9257 & 0.8220 & 0.8708 & 0.9655 \\
DDFUELUSGULF & 0.7392 & 0.6550 & 0.9959 & 0.7903 & 0.9370 \\
DRGASLA & 0.8314 & 0.7572 & 0.8859 & 0.8165 & 0.9096 \\
DJFUELUSGULF & 0.7980 & 0.6221 & 0.9320 & 0.7461 & 0.9176 \\
DCOILBRENTEU & 0.7362 & 0.6667 & 0.8759 & 0.7571 & 0.8427 \\
\bottomrule
\end{tabular}
\label{tab:binary results}
\end{table}

\begin{figure}[ht]
       \centering
       \includegraphics[width=0.49\linewidth]{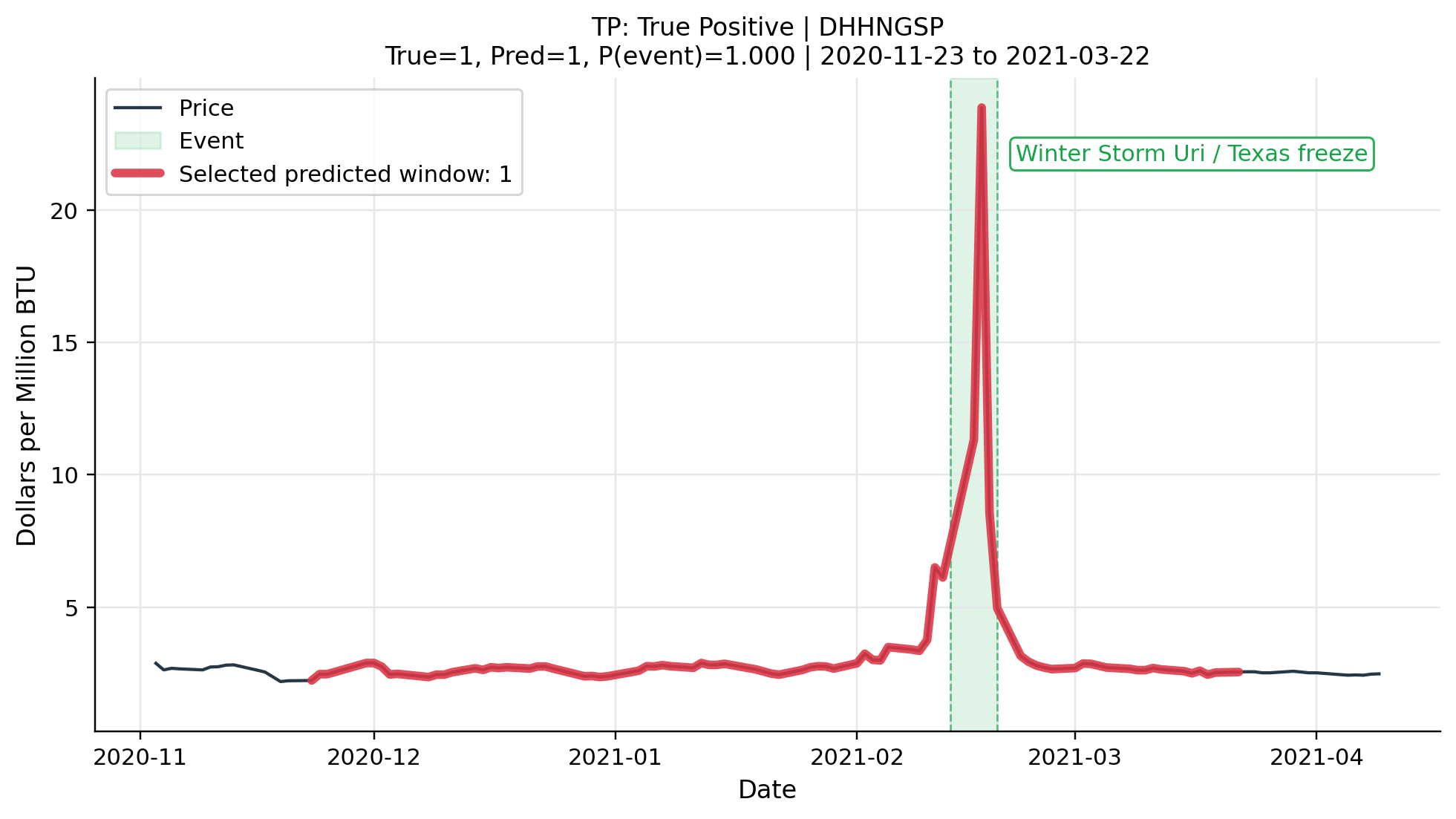}
       \includegraphics[width=0.49\linewidth]{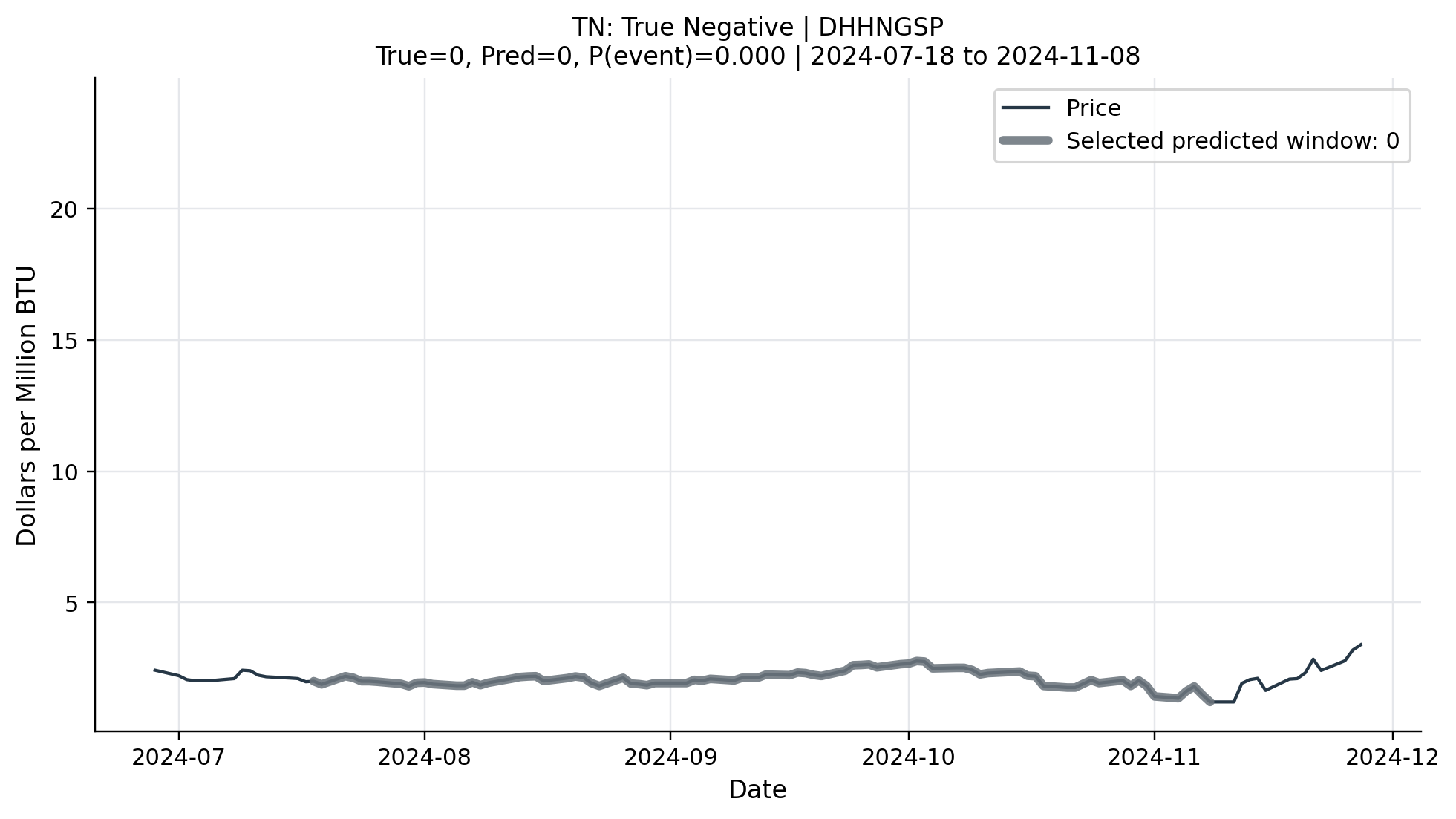}
       \includegraphics[width=0.49\linewidth]{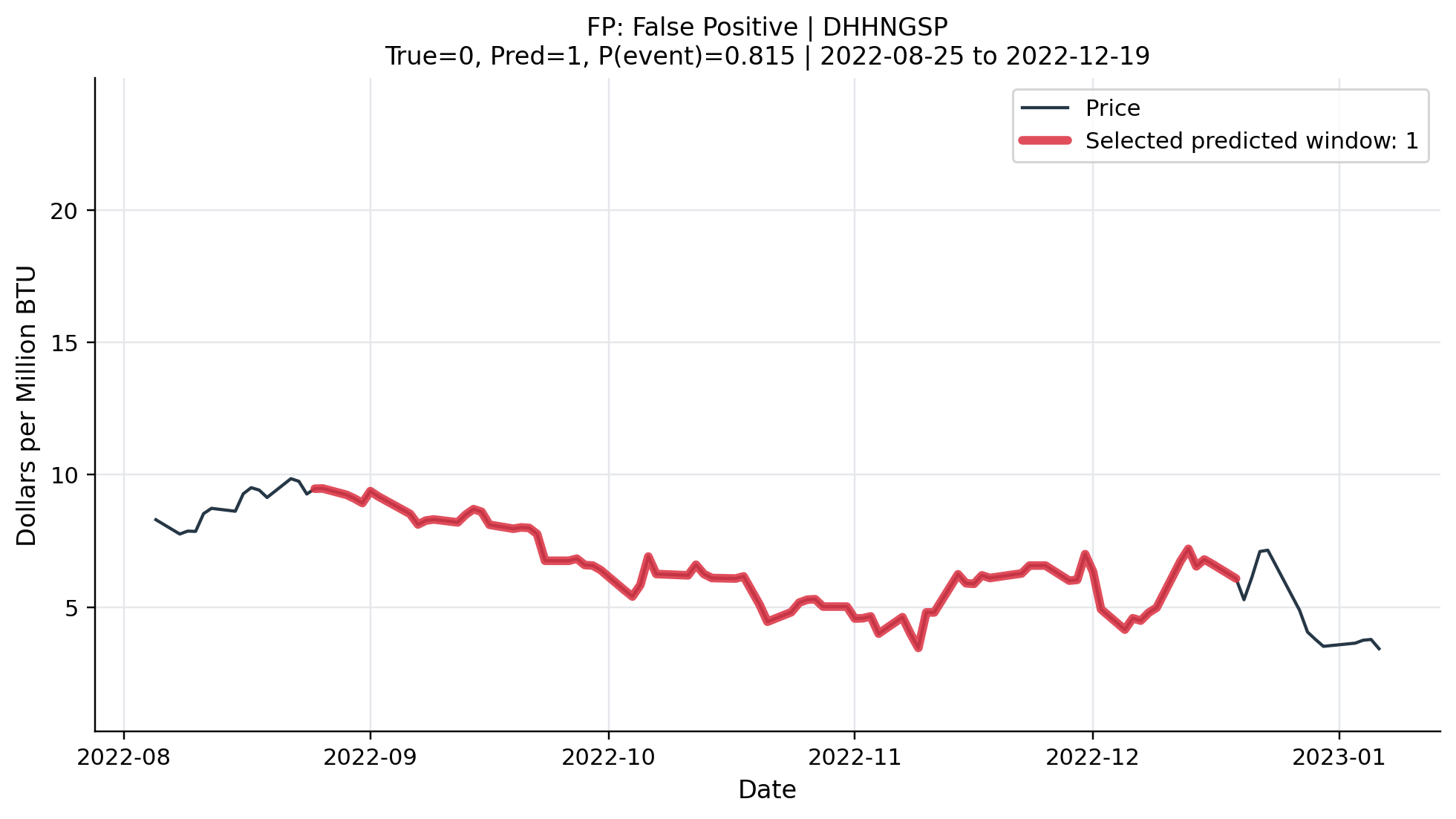}
       \includegraphics[width=0.49\linewidth]{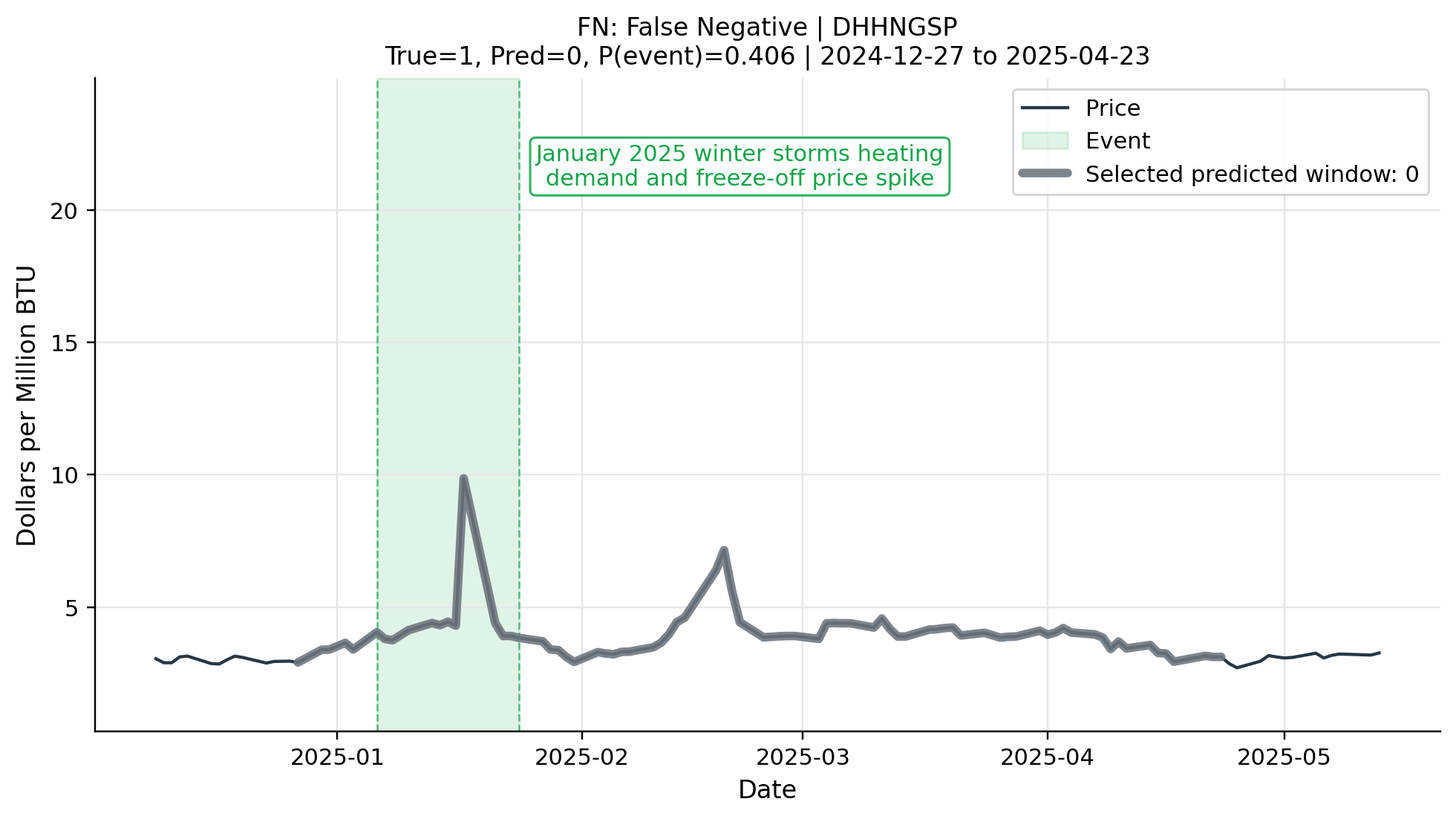}
       \caption{Representative CNN classifications for the Henry Hub natural gas price series. The panels show a true positive, true negative, false positive and false negative, respectively.}
       \label{fig:henry_example}
\end{figure}

In addition, Figure~\ref{fig:henry_example} presents several representative binary classification results for the Henry Hub natural gas price series, including examples of true positives (TP), true negatives (TN), false positives (FP), and false negatives (FN). The TP and TN examples show that the CNN model is able to distinguish explosive event-driven price patterns from relatively stable non-event regimes.

For the FP case, although the selected window is not labelled as an event window, the price trajectory still exhibits local fluctuations, suggesting that the model may capture patterns resembling events beyond the manually labelled intervals. For the FN example, the event period contains multiple spikes and more complicated local structures, which may reduce the model's confidence and lead to misclassification. However, it still assigns a probability of $0.406$ to the event class, which is relatively close to the decision threshold for event detection.

Table~\ref{tab:model_comparison} compares the proposed CNN with a ResNet CNN, logistic regression, MLPs with one and two layers, a block sparse MLP, an LSTM and an FT-Transformer across the six commodity series. Detailed descriptions of the ResNet CNN and the block sparse MLP are provided in the Supplementary Material.

\begin{table}[htbp]
\setlength{\tabcolsep}{16pt}
\centering
\caption{Unweighted mean of the binary classification metrics computed separately for each commodity across six commodity series.}
\label{tab:model_comparison}
\small
\begin{tabular}{lccccc}
\toprule
Model & Accuracy & Precision & Recall & F1 & AUC \\
\midrule
CNN & 0.8165 & 0.7372 & 0.8765 & 0.7920 & 0.9181 \\
ResNet CNN & 0.7114 & 0.6111 & 0.7582 & 0.6561 & 0.8456 \\
Logistic regression & 0.5077 & 0.4672 & 0.7500 & 0.5217 & 0.7161 \\
MLP (one layer) & 0.4976 & 0.4633 & 0.8204 & 0.5576 & 0.7270 \\
MLP (two layers) & 0.5511 & 0.4962 & 0.7910 & 0.5805 & 0.7791 \\
Block sparse MLP & 0.5144 & 0.5041 & 0.8439 & 0.5793 & 0.7894 \\
LSTM & 0.4925 & 0.5532 & 0.6873 & 0.4616 & 0.6887 \\
FT-Transformer & 0.4851 & 0.5005 & 0.7583 & 0.5138 & 0.7938 \\
\bottomrule
\end{tabular}
\end{table}

All methods use the same labelled windows and fixed Stage~1 split memberships. Each method is fitted separately for each commodity using the same deterministically downsampled training windows and is evaluated on the same test set. The proposed CNN uses focal loss with class weights, the neural baselines use cross entropy with class weights, and logistic regression uses a balanced logistic objective. Reported values are unweighted means of the test metrics for the six commodities.

The proposed CNN attains the highest accuracy, precision, recall, F1 and AUC. Logistic regression and the MLPs do not explicitly share local features across time, which may make transient event patterns harder to distinguish from ordinary fluctuations. The LSTM and FT-Transformer can represent dependence over longer ranges, but their greater flexibility may be difficult to estimate from the limited number of distinct labelled events because overlapping windows do not provide equally many independent episodes. The deeper ResNet CNN preserves temporal locality but may introduce more complexity than is useful for these relatively short windows. The out-of-sample performance is compared on a common empirical basis. Loss and optimisation choices remain method specific, and the table therefore compares complete prediction procedures rather than isolating the effect of architecture alone.

\subsection{Multiclass classification results}

We now report the results of the hierarchical procedure described in Sections~\ref{sec:model_training} and~\ref{sec:data_labelling}. Stage~1 supplies the event detector fitted to each commodity, and its positive predictions are routed through the two Stage~2 classifiers.

Table~\ref{tab:multi_stagewise} reports conditional classifier performance and routing quality. For Stage~2 classifier \(j\), let \(\mathcal R_j\) be the set of routed test windows, let \(\mathcal S_j\subseteq\mathcal R_j\) contain the windows within that classifier's intended scope, and let \(C_j=\sum_{i\in\mathcal S_j}\mathbb I\{\widehat Y_{ij}=Y_i\}\). Classification metrics are computed on \(\mathcal S_j\), route purity is \(|\mathcal S_j|/|\mathcal R_j|\), and effective accuracy is \(C_j/|\mathcal R_j|\). The latter treats every routed window outside the intended scope as an error. For comparability with the complete sequential classifier, the Stage~1 row pools predictions from the six commodity-specific event detectors on the final joint test set of \(5{,}866\) windows. The weather classifier performs well within its scope, but routing errors reduce its effective performance. The classifier separating geopolitical and supply-financial events is the main bottleneck, reflecting both weaker separation between these families and propagated Stage~1 errors.

\begin{table*}[htbp]
\centering
\caption{Classification performance at each stage and routing quality of the hierarchical classifier. The column ``Eval. \(n\)'' gives the number of test windows underlying each component's classification metrics. For each Stage~2 classifier, it equals the number of in-scope windows.}
\label{tab:multi_stagewise}
\setlength{\tabcolsep}{3pt}
\renewcommand{\arraystretch}{1.15} 
{\small
\begin{tabular}{@{}lcccccc@{\hspace{\tabcolsep}\vrule width \arrayrulewidth\hspace{\tabcolsep}} ccc@{} }
\toprule
&
\multicolumn{6}{ c@{\hspace{\tabcolsep}\vrule width \arrayrulewidth\hspace{\tabcolsep}} }{Classification performance}
&
\multicolumn{3}{c}{Route quality} \\
\cmidrule(lr){2-10}
Component
& Eval. \(n\)
& Accuracy
& Precision
& Recall
& F1
& AUC
& Routed \(n\)
& Purity
& Eff.\ acc.\\
\midrule
\shortstack[l]{Event vs.\ no event}
& 5,866
& 0.8607
& 0.6844
& 0.8555
& 0.7605
& 0.9306
& --
& --
& -- \\
\shortstack[l]{Weather vs.\ rest}
& 1,297
& 0.9530
& 1.0000
& 0.7240
& 0.8399
& 0.8306
& 1,895
& 0.6844
& 0.6522 \\
\shortstack[l]{Geopol.\ vs. supply/fin.}
& 1,076
& 0.5288
& 0.5181
& 0.6729
& 0.5854
& 0.5965
& 1,731
& 0.6216
& 0.3287 \\
\bottomrule
\end{tabular}
}
\end{table*}

Table~\ref{tab:multi_confusion} shows that windows without events are identified most reliably. Weather and geopolitical events remain distinguishable, whereas supply-financial events are often assigned to the geopolitical class. This pattern is consistent with the difficulty of the second Stage~2 classifier. On the final joint test set described in Section~\ref{sec:data_labelling}, the complete sequential classifier achieves an accuracy of $0.7639$ and a macro F1 score of $0.6067$.

\begin{table*}[htbp]
\centering
\caption{Confusion matrix normalised by row for the final classification into four classes. Each entry reports the percentage of windows within the corresponding true class.}
\label{tab:multi_confusion}
\setlength{\tabcolsep}{18pt}
\renewcommand{\arraystretch}{1.15}
{\small
\begin{tabular}{lcccc}
\toprule
True $\backslash$ Pred.\ (\%)
& No event
& Weather
& Geopolitical
& Supply-financial \\
\midrule
No event
& \textbf{86.3}
& 0.1
& 9.8
& 3.9 \\
Weather
& 10.5
& \textbf{64.8}
& 24.7
& 0.0 \\
Geopolitical
& 6.0
& 0.0
& \textbf{63.3}
& 30.7 \\
Supply-financial
& 22.6
& 0.0
& 47.4
& \textbf{30.0} \\
\bottomrule
\end{tabular}
}
\end{table*}

\begin{figure}[H]
    \centering
    \includegraphics[width=0.75\linewidth]{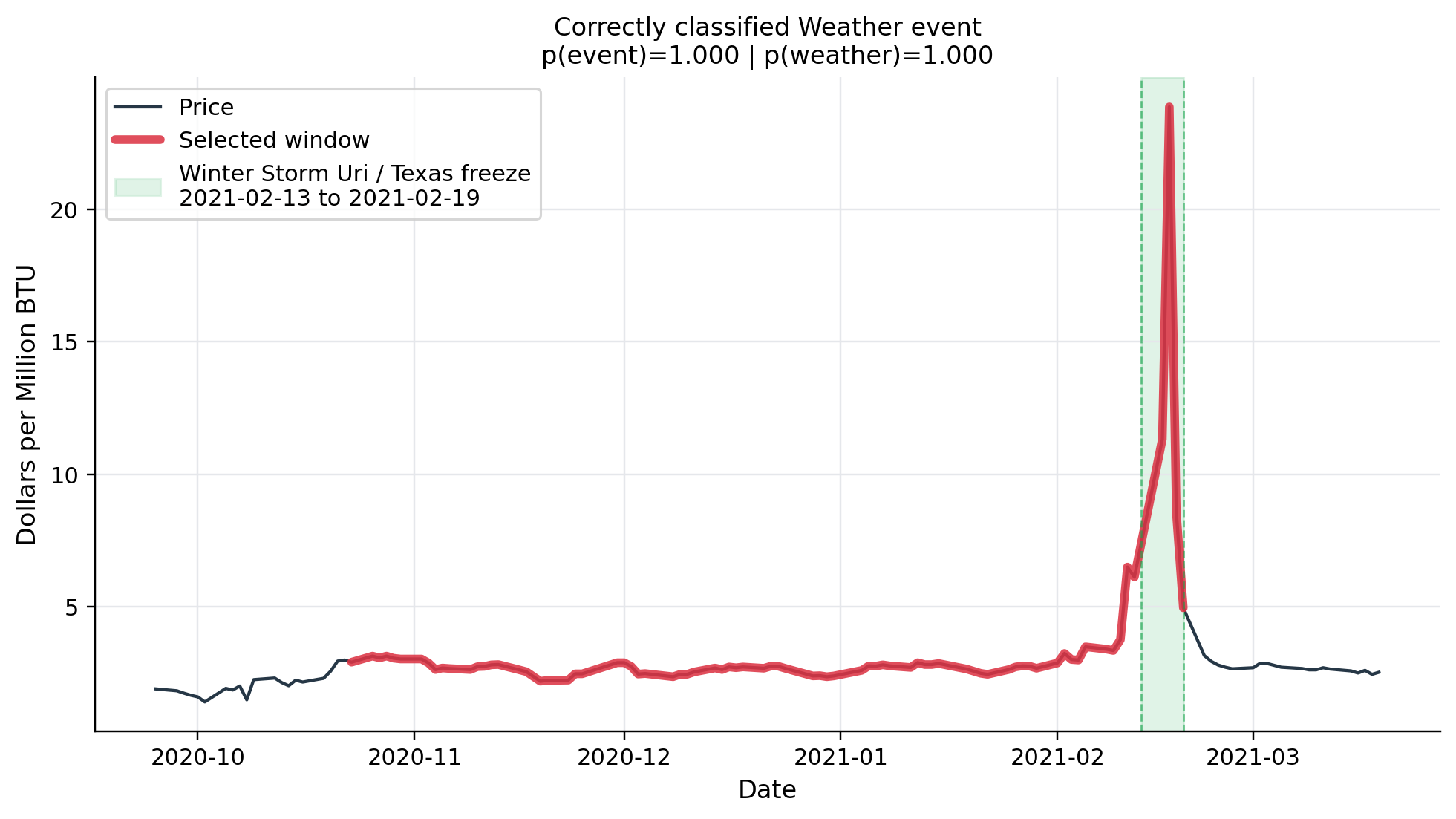}
    \caption{A correctly classified weather event.}
    \label{fig:multi-weather}
\end{figure}

Figure~\ref{fig:multi-weather} presents the correctly classified weather example. Winter Storm Uri produces a pronounced and brief price spike, giving the classifier a distinctive local pattern for identifying the weather family. Figure~\ref{fig:multi-nonweather} compares correct and incorrect classifications for the two other event families. The correctly classified supply-financial window follows a relatively smooth rise, whereas the supply-financial window assigned to the geopolitical class contains larger and more irregular fluctuations. Both errors between families have probabilities close to the decision threshold, indicating uncertainty in the assignment between these families. The correctly classified and misclassified geopolitical windows both concern Russia's invasion of Ukraine but cover different positions relative to the event. Their different predictions show that classification is influenced by the price pattern visible within a particular window.

\begin{figure}[ht]
    \centering
    \includegraphics[width=0.49\linewidth]{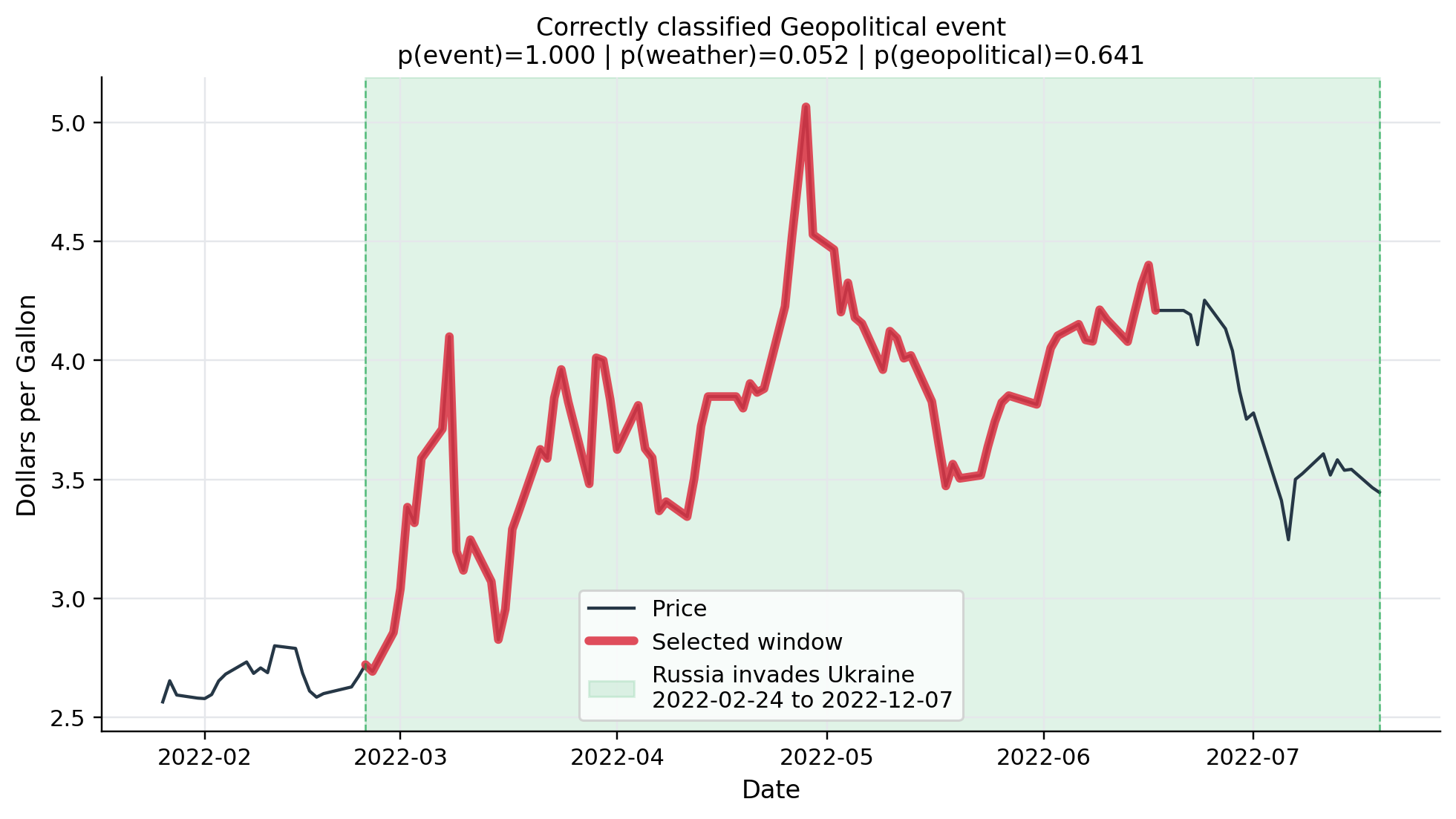}
    \includegraphics[width=0.49\linewidth]{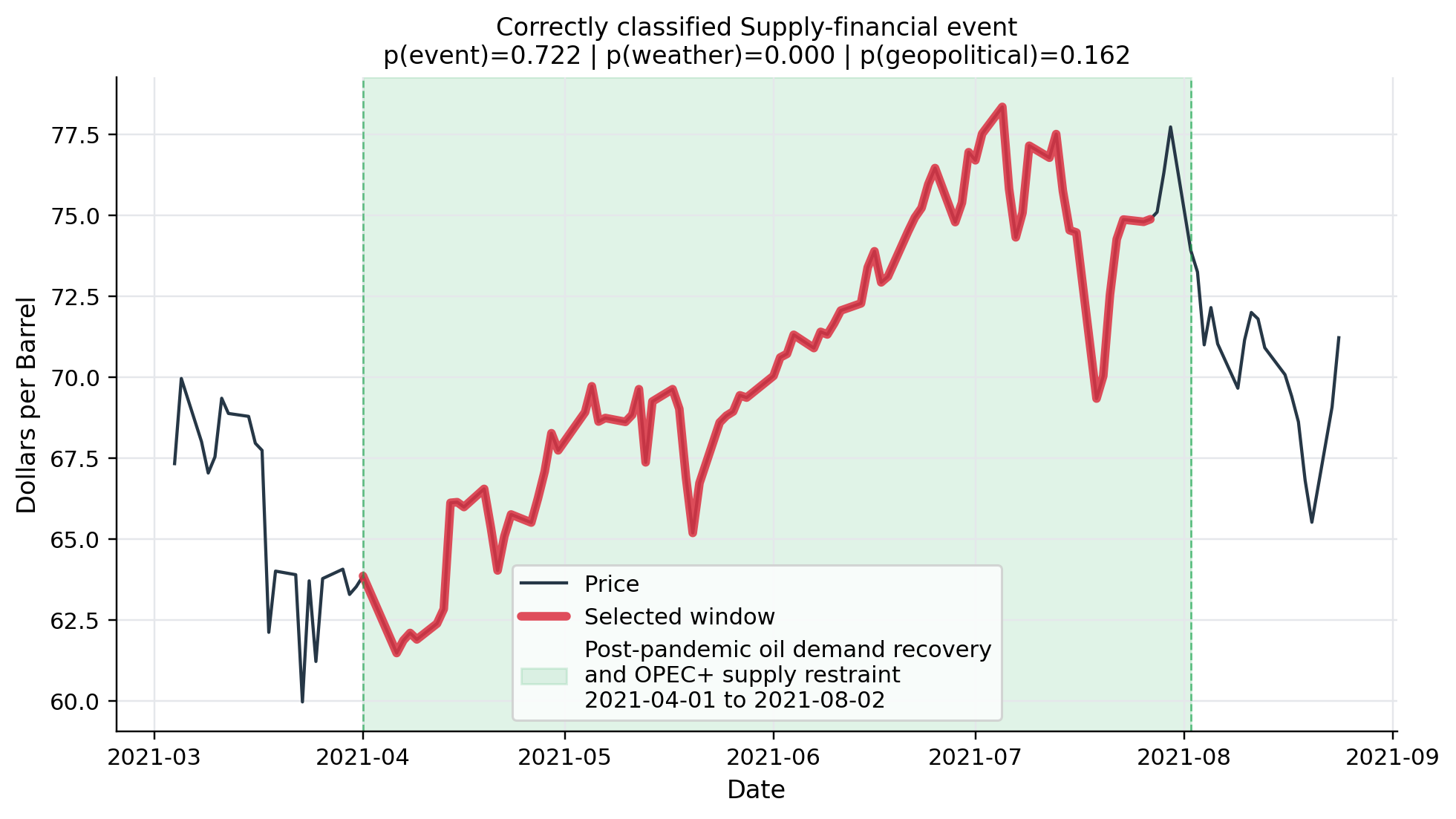}
    \includegraphics[width=0.49\linewidth]{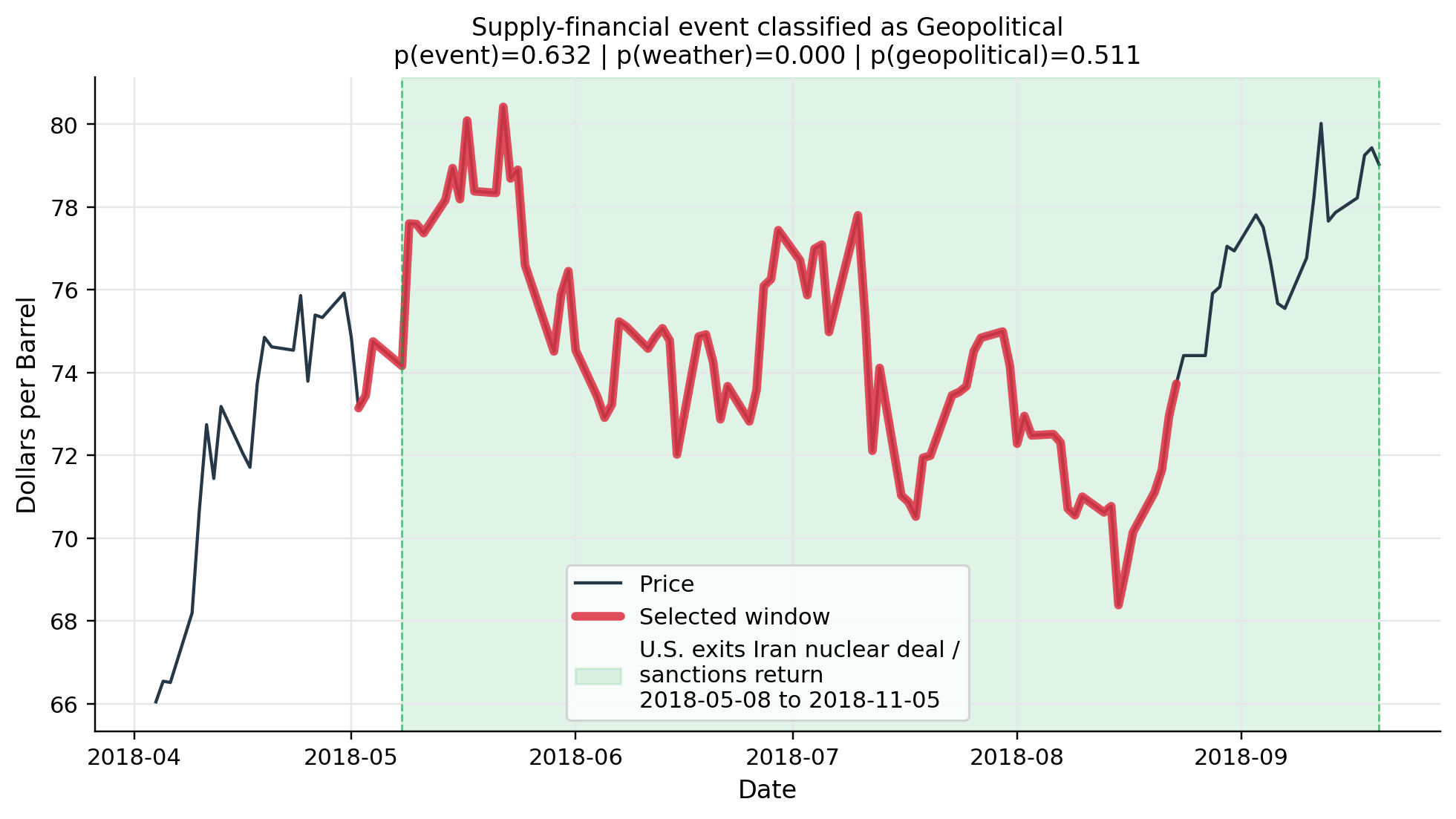}
    \includegraphics[width=0.49\linewidth]{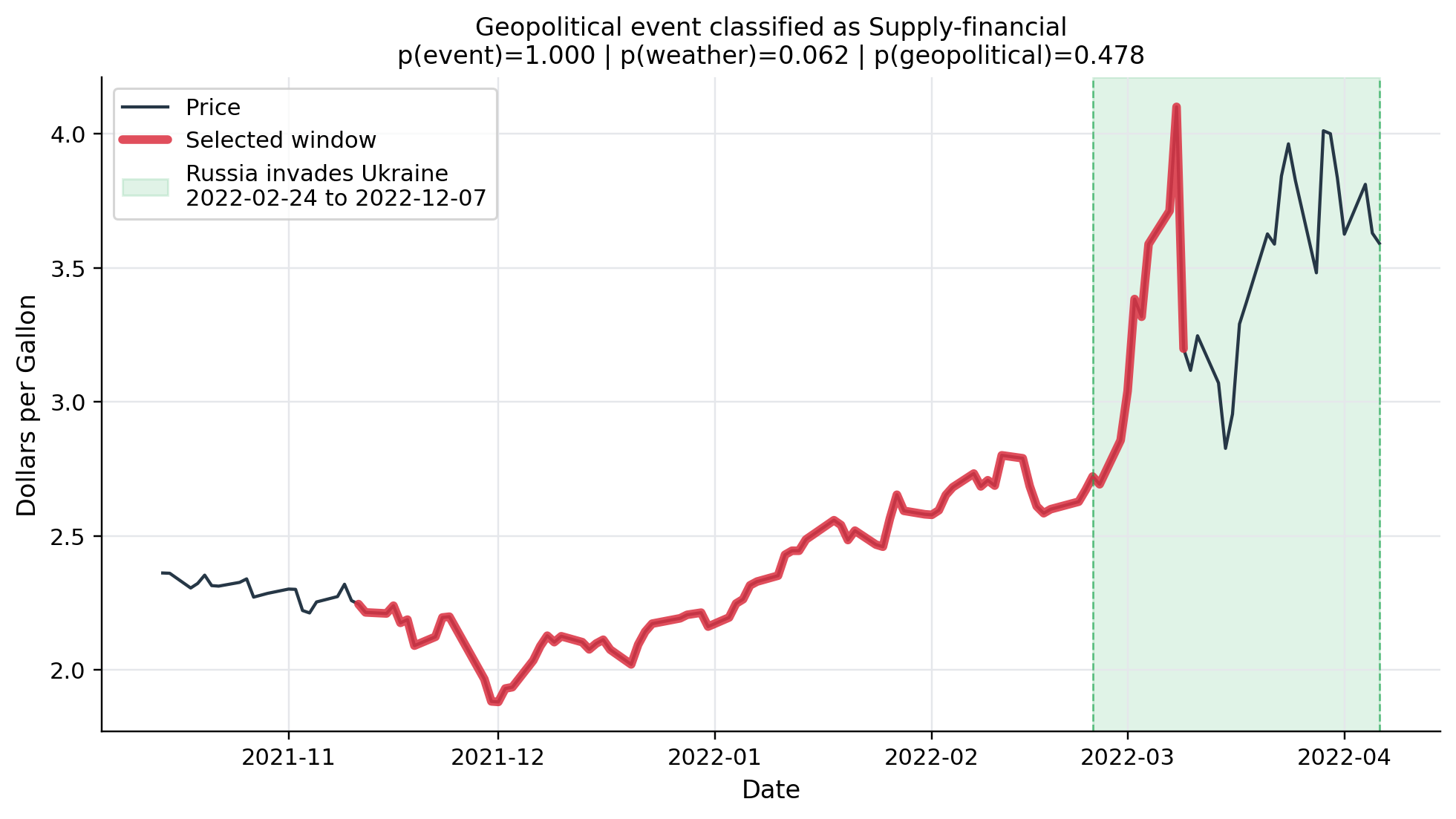}
    \caption{Representative geopolitical and supply-financial classifications. The upper panels are correctly classified geopolitical and supply-financial events. The lower panels show the corresponding errors between families.}
    \label{fig:multi-nonweather}
\end{figure}

\subsection{Case study}

We apply the fitted hierarchical classifier to the same six commodity series from 20 February to 20 July 2026. Each series contributes 103 withheld observations and 103 predictions at window endpoints based on rolling windows of 80 observations, giving 618 predictions in total. All observations were excluded from training, validation, selection of window length and threshold selection, and the models were not retrained for this analysis. The first window ending on 20 February uses the preceding 79 observations solely as lookback information.

\begin{figure}[H]
    \centering
    \includegraphics[width=0.88\linewidth]{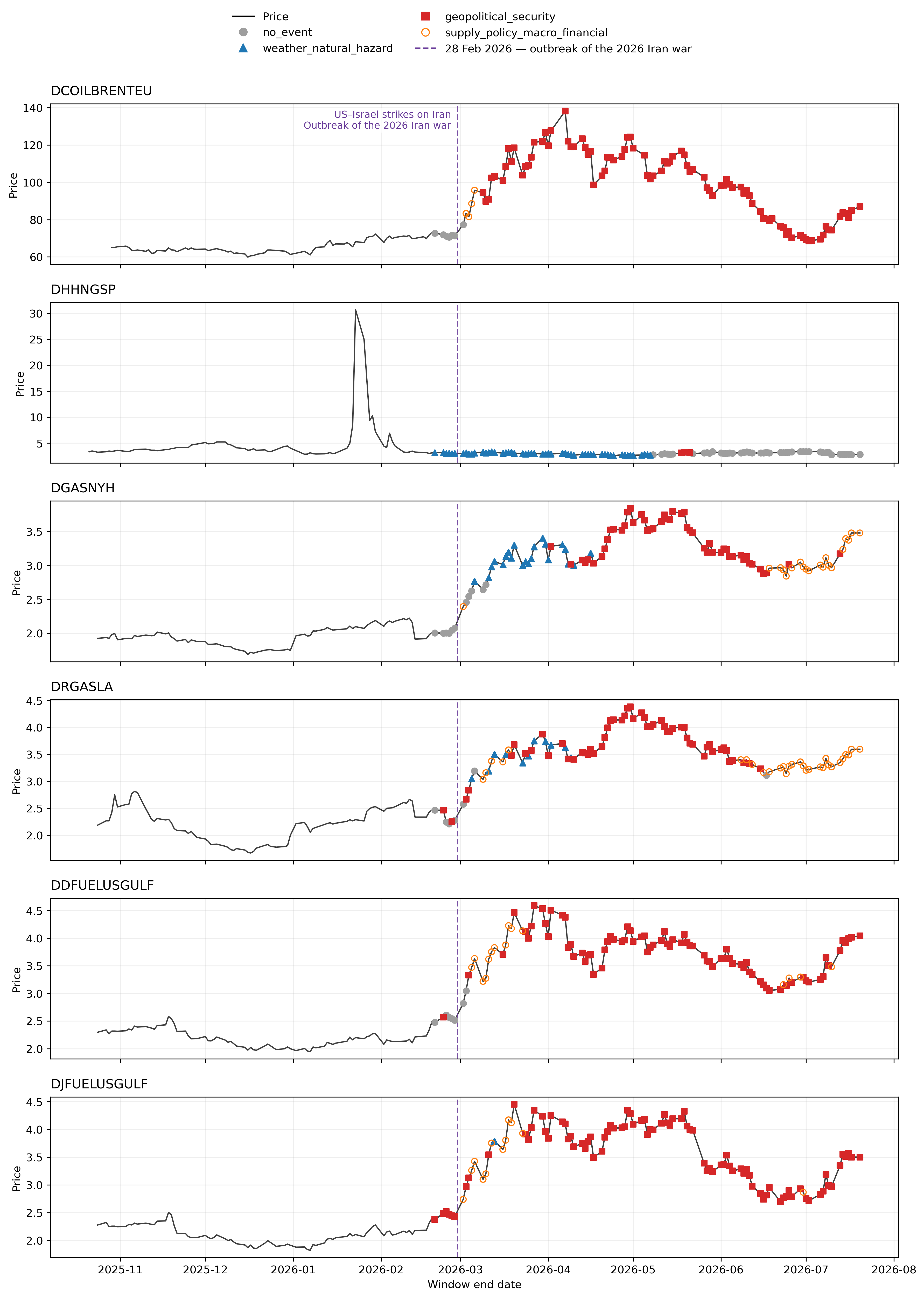}
    \caption{Classifications of independent observations for the six energy commodity price series around the outbreak of the 2026 Iran war. Each symbol gives the final hierarchical classification of the window of 80 observations ending on the plotted date. The purple dashed line marks 28 February 2026.}
    \label{fig:case-study-iran-war}
\end{figure}

Figure~\ref{fig:case-study-iran-war} plots the price series and the final hierarchical classification for every window. Each classification marker is placed on the final date of its corresponding window. The 79 prices preceding the first prediction date therefore provide the initial lookback context but have no classification markers. The dashed line marks the strikes on Iran by the United States and Israel on 28 February 2026 and the outbreak of the 2026 Iran war.

Brent crude oil (DCOILBRENTEU), the gasoline series for New York Harbor (DGASNYH) and Los Angeles (DRGASLA), and the Gulf Coast diesel (DDFUELUSGULF) and jet fuel (DJFUELUSGULF) series are classified as containing an event in most windows. Geopolitical classifications account for most predictions for Brent crude oil and the two Gulf Coast products. Their common transition towards this family after the marked date indicates that the corresponding price paths resemble the geopolitical patterns learned from the historical sample. Both gasoline series move between weather, geopolitical and supply-financial classifications, although to different degrees, indicating greater uncertainty about the event family as their price trajectories change.

Henry Hub natural gas (DHHNGSP) exhibits a different pattern. The spike coincides with Winter Storm Fern and the associated Arctic cold outbreak. An \href{https://www.eia.gov/pressroom/releases/press583.php}{EIA market update issued in February 2026} attributes the January surge in natural gas prices to increased heating demand, production reductions associated with weather and record storage withdrawals. The classifier accordingly assigns many early windows to the weather family, whose historical examples often contain pronounced brief spikes. As the rolling window advances and the spike leaves the input of 80 observations, the classification changes mainly to no event. Because no curated event labels after 20 February 2026 are used here, these results describe the fixed model's classifications on independent observations and are not estimates of classification accuracy or evidence that the war caused every observed price movement.

\section{Conclusion}\label{Conclusion}

This paper has developed a common CNN framework for detecting heterogeneous event-driven dynamics in time series windows. 
We showed that the induced CNN class exactly represents classifiers based on range, maximum drawup, maximum drawdown and slope change, and uniformly approximates classifiers based on realised volatility and autoregressive explosiveness on compact domains. Results for finite samples connect these constructions to error control for individual rules and to an oracle comparison over representative statistical classifiers. The simulation illustrates that the learned common head can retain the performance of a strong fixed rule and exploit complementary information across branches. In the empirical analysis, the hierarchical model classified events in six energy price series, while the independent 2026 case study distinguished a natural gas spike associated with weather from predominantly geopolitical patterns in several oil and refined product markets around the outbreak of the Iran war. These outputs are classifications rather than causal estimates.

Several extensions are natural. First, the binary representation and oracle theory could be developed for multiclass architectures with a shared feature extractor and a head that produces multiple scores. Second, the framework could be adapted from retrospective detection to forecasting, including prediction of future event occurrence, event type or subsequent price evolution under sequential updating. Third, multivariate time series and series observed on networks would allow the model to incorporate dependence across series and the propagation of shocks among connected markets or other interacting units. Graph convolutional branches could combine temporal features with network structure while retaining interpretable statistical comparators. Developing uncertainty quantification and learning guarantees for these extensions remains an important direction for future work.

\section*{Declarations}
\noindent
\textbf{Funding.} No funding was received.
\textbf{Competing interests.} The authors declare no competing interests.
\textbf{Generative AI use.} OpenAI Codex assisted with language editing, code development and review, and checks of mathematical arguments and numerical analyses. The authors reviewed and revised all AI-assisted material and take full responsibility for the manuscript.
\textbf{Data and code.} The energy price series used in this study are publicly available from the Federal Reserve Economic Data database. The curated event data, code, frozen data files and reported numerical outputs are available at \url{https://github.com/Xucaixia1211/Event-driven-energy-prices-by-CNN}.

\bibliographystyle{chicago}
\bibliography{reference}

\clearpage
\setcounter{section}{0}
\setcounter{equation}{0}
\setcounter{table}{0}
\setcounter{figure}{0}
\setcounter{theorem}{0}
\setcounter{lemma}{0}
\setcounter{cor}{0}
\setcounter{definition}{0}
\setcounter{remark}{0}
\renewcommand{\thesection}{S.\arabic{section}}
\renewcommand{\theequation}{S.\arabic{equation}}
\renewcommand{\thetable}{S.\arabic{table}}
\renewcommand{\thefigure}{S.\arabic{figure}}
\renewcommand{\thetheorem}{S.\arabic{theorem}}
\renewcommand{\thelemma}{S.\arabic{lemma}}
\renewcommand{\thecor}{S.\arabic{cor}}
\renewcommand{\thedefinition}{S.\arabic{definition}}
\renewcommand{\theremark}{S.\arabic{remark}}
\renewcommand{\theHsection}{S.\arabic{section}}
\renewcommand{\theHsubsection}{S.\arabic{section}.\arabic{subsection}}
\renewcommand{\theHequation}{S.\arabic{equation}}
\renewcommand{\theHtable}{S.\arabic{table}}
\renewcommand{\theHfigure}{S.\arabic{figure}}
\renewcommand{\theHtheorem}{S.\arabic{theorem}}
\renewcommand{\theHlemma}{S.\arabic{lemma}}
\renewcommand{\theHcor}{S.\arabic{cor}}
\renewcommand{\theHdefinition}{S.\arabic{definition}}
\renewcommand{\theHremark}{S.\arabic{remark}}
\setlength{\emergencystretch}{4em}\sloppy

\centerline{\large\bf Supplementary material for}
\vspace{4pt}
\centerline{\large\bf ``A convolutional framework for detecting}
\vspace{2pt}
\centerline{\large\bf event-driven dynamics in energy price series''}
\vspace{.4cm}
\vspace{.6cm}

\noindent
This supplement contains proofs of Theorems 1--9 of the main paper in Section S.1, followed by representation and approximation results for alternative architectures in Section S.2 and additional numerical checks in Section S.3. Numbers prefixed by S belong to this supplement. Unprefixed numbers refer to the main paper. Throughout, \(\sigma(x)=\max\{x,0\}\) and \(\mathbb I\{\cdot\}\) denotes the indicator function.

\section{ Proofs of the results in the main paper}

This section provides the proofs of the results stated in the main paper.

\subsection{Proof of Theorem 1 (range classifier)}

\begin{proof}
Set \(B=1\), \(L_1=1\), \(k_{1,1}=1\), \(C_{1,0}=1\), \(C_{1,1}=2\), \(m_1=2\), and \(\mathrm{Pool}_{1,1}=\operatorname{Id}\). Choose $w_{1,1,0}^{(1,1)}=1$, $\beta_1^{(1,1)}=0$, $w_{2,1,0}^{(1,1)}=-1$, $\beta_2^{(1,1)}=0$. Then $\widetilde Z_{1,t}^{(1,1)}=\sigma(X_t)$, $\widetilde Z_{2,t}^{(1,1)}=\sigma(-X_t)$. Since \(\mathrm{Pool}_{1,1}=\operatorname{Id}\),
\(Z_{j,t}^{(1,1)}=\widetilde Z_{j,t}^{(1,1)}\).
Define $(a_{1,0}^{(1)},a_{1,1}^{(1)},a_{1,2}^{(1)})=(0,1,-1)$, $(a_{2,0}^{(1)},a_{2,1}^{(1)},a_{2,2}^{(1)})=(0,-1,1)$. Using \(\sigma(x)-\sigma(-x)=x\), $G_{1,t}^{(1)}=X_t, G_{2,t}^{(1)}=-X_t.$ Global max pooling gives $P_1^{(1),\max}=\max_{1\leq t\leq T}X_t, P_2^{(1),\max}=-\min_{1\leq t\leq T}X_t.$ Within the affine submodel $g(\bm P)=\gamma_0+\bm\gamma^\top\bm P$ contained in the fixed head, choose $\gamma_0=-\lambda, \gamma_1^{(1),\max}=\gamma_2^{(1),\max}=1,$ and set all coefficients for sum pooling to zero. Then
\[
f_{R,\lambda}(\bm X)
=\max_tX_t-\min_tX_t-\lambda
=\mathcal R(\bm X)-\lambda.
\]
Thus $\mathbb I\{f_{R,\lambda}(\bm X)>0\}
=\mathbb I\{\mathcal R(\bm X)>\lambda\}.$
\end{proof}

\subsection{Proof of Theorem 2 (directional classifier)}

\begin{proof}
\begin{itemize}
    \item [(1)] Representation of the maximum drawup classifier.

For each lag \(h=1,\ldots,T-1\), introduce one branch. Thus \(B=T-1\). Branch \(h\) compares \(X_t\) with \(X_{t+h}\). Set $L_h=1$, $k_{h,1}=h+1$, $C_{h,0}=1$, $C_{h,1}=1$, $m_h=1$, and $\mathrm{Pool}_{h,1}=\mathrm{Id}$. For branch \(h\), set $w^{(h,1)}_{1,1,0}=-1$, $w^{(h,1)}_{1,1,h}=1$, and $w^{(h,1)}_{1,1,r}=0$ for $r=1,\ldots,h-1$. Set $\beta^{(h,1)}_1=-\lambda$. Then, for $t=1,\ldots,T-h$,
\[
\begin{aligned}
\widetilde Z^{(h,1)}_{1,t}
&=
\sigma\left(
-\lambda
+
\sum_{r=0}^{h}
w^{(h,1)}_{1,1,r}X_{t+r}
\right)\\
&=
\sigma(X_{t+h}-X_t-\lambda).
\end{aligned}
\]
Since \(\mathrm{Pool}_{h,1}=\mathrm{Id}\), $Z^{(h,1)}_{1,t}
=
\widetilde Z^{(h,1)}_{1,t}$.

Take the local output layer to be the identity by setting $a^{(h)}_{1,0}=0$, $a^{(h)}_{1,1}=1$. Then $G^{(h)}_{1,t}
=
Z^{(h,1)}_{1,t}
=
\sigma(X_{t+h}-X_t-\lambda)$. By global max pooling,
\[
P^{(h),\max}_1
=
\max_{1\le t\le T-h}
G^{(h)}_{1,t}
=
\max_{1\le t\le T-h}
\sigma(X_{t+h}-X_t-\lambda).
\]

Within the affine submodel $g(\bm P)=\gamma_0+\bm\gamma^\top\bm P$ contained in the fixed head, choose $\gamma_0=0$, $\gamma^{(h),\max}_1=1$, $\gamma^{(h),\mathrm{sum}}_1=0$, $h=1,\ldots,T-1$. Then the resulting score function is
\[
f_{D^+,\lambda}(X)
=
\sum_{h=1}^{T-1}
P^{(h),\max}_1
=
\sum_{h=1}^{T-1}
\max_{1\le t\le T-h}
\sigma(X_{t+h}-X_t-\lambda).
\]
Since all terms in the sum are nonnegative, $f_{D^+,\lambda}(X)>0$ if and only if there exist \(h\in\{1,\ldots,T-1\}\) and \(t\in\{1,\ldots,T-h\}\) such that $X_{t+h}-X_t>\lambda$. This is equivalent to $D^+(X)>\lambda$. Therefore,
\[
\mathbb I\{f_{D^+,\lambda}(X)>0\}
=
\mathbb I\{D^+(X)>\lambda\}
=
h_{D^+,\lambda}(X).
\]
This completes the proof.
\item [(2)] Representation of the maximum drawdown classifier
For each lag \(h=1,\ldots,T-1\), introduce one branch. Thus \(B=T-1\). Set $L_h=1$, $k_{h,1}=h+1$, $C_{h,0}=1$, $C_{h,1}=1$, $m_h=1$, and $\mathrm{Pool}_{h,1}=\mathrm{Id}$. For branch \(h\), set $w^{(h,1)}_{1,1,0}=1$, $w^{(h,1)}_{1,1,h}=-1$, and $w^{(h,1)}_{1,1,r}=0$, $r=1,\ldots,h-1$. Set $\beta^{(h,1)}_1=-\lambda$. Then, for \(t=1,\ldots,T-h\),
\[
\begin{aligned}
\widetilde Z^{(h,1)}_{1,t}
&=
\sigma\left(
-\lambda
+
\sum_{r=0}^{h}
w^{(h,1)}_{1,1,r}X_{t+r}
\right)\\
&=
\sigma(X_t-X_{t+h}-\lambda).
\end{aligned}
\]
Since \(\mathrm{Pool}_{h,1}=\mathrm{Id}\), $Z^{(h,1)}_{1,t} = \widetilde Z^{(h,1)}_{1,t}$.

Take the local output layer to be the identity by setting $a^{(h)}_{1,0}=0$, $a^{(h)}_{1,1}=1$. Then $G^{(h)}_{1,t}
=
Z^{(h,1)}_{1,t}
=
\sigma(X_t-X_{t+h}-\lambda)$. By global max pooling,
\[
P^{(h),\max}_1
=
\max_{1\le t\le T-h}
G^{(h)}_{1,t}
=
\max_{1\le t\le T-h}
\sigma(X_t-X_{t+h}-\lambda).
\]

Within the same affine submodel, choose $\gamma_0=0$, $\gamma^{(h),\max}_1=1$, $\gamma^{(h),\mathrm{sum}}_1=0$, $h=1,\ldots,T-1$. Then the resulting score function is
\[
f_{D^-,\lambda}(X)
=
\sum_{h=1}^{T-1}
P^{(h),\max}_1
=
\sum_{h=1}^{T-1}
\max_{1\le t\le T-h}
\sigma(X_t-X_{t+h}-\lambda).
\]
Since all terms in the sum are nonnegative, $f_{D^-,\lambda}(X)>0$ if and only if there exist \(h\in\{1,\ldots,T-1\}\) and \(t\in\{1,\ldots,T-h\}\) such that $X_t-X_{t+h}>\lambda$. This is equivalent to $D^-(X)>\lambda$. Therefore,
\[
\mathbb I\{f_{D^-,\lambda}(X)>0\}
=
\mathbb I\{D^-(X)>\lambda\}
=
h_{D^-,\lambda}(X).
\]
This completes the proof.
\end{itemize}
\end{proof}


\subsection{Proof of Theorem 3 (slope change classifier)}

\begin{proof}
For each \(\tau\in\mathcal I\), the slope contrast \(C_\tau(X)\) is a linear function of \(X\). Indeed, $C_\tau(X) = \sum_{t=1}^{T}a_{\tau,t}X_t$, where
\[
a_{\tau,t}
=
\begin{cases}
\dfrac{t-\bar t_L(\tau)}{D_L(\tau)},
& 1\le t\le \tau,\\[1.2em]
-\dfrac{t-\bar t_R(\tau)}{D_R(\tau)},
& \tau+1\le t\le T.
\end{cases}
\]

For each \(\tau\in\mathcal I\), introduce two branches, indexed by \((\tau,+)\) and \((\tau,-)\). Thus $B=2|\mathcal I|=2(T-4)$. Each branch has one convolutional block, one output channel, a kernel spanning the full window with size \(T\), and no local pooling:
\[
L_{\tau,+}=L_{\tau,-}=1,
\]
\[
C_{\tau,+,0}=C_{\tau,-,0}=1,
\qquad
C_{\tau,+,1}=C_{\tau,-,1}=1,
\]
\[
k_{\tau,+,1}=k_{\tau,-,1}=T,
\qquad
\mathrm{Pool}_{\tau,+,1}
=
\mathrm{Pool}_{\tau,-,1}
=
\mathrm{Id}.
\]
Since the kernel size is \(T\), each branch has only one valid time position.

For the positive branch \((\tau,+)\), set $w^{(\tau,+,1)}_{1,1,r} = a_{\tau,r+1}$, $r=0,\ldots,T-1$, and $\beta^{(\tau,+,1)}_1=-\lambda$. Then
\[
\begin{aligned}
\widetilde Z^{(\tau,+,1)}_{1,1}
&=
\sigma\left(
-\lambda
+
\sum_{r=0}^{T-1}
a_{\tau,r+1}X_{1+r}
\right)\\
&=
\sigma(C_\tau(X)-\lambda).
\end{aligned}
\]

For the negative branch \((\tau,-)\), set $w^{(\tau,-,1)}_{1,1,r}
=
-a_{\tau,r+1}$, $r=0,\ldots,T-1$ and $\beta^{(\tau,-,1)}_1=-\lambda$. Then
\[
\begin{aligned}
\widetilde Z^{(\tau,-,1)}_{1,1}
&=
\sigma\left(
-\lambda
-
\sum_{r=0}^{T-1}
a_{\tau,r+1}X_{1+r}
\right)\\
&=
\sigma(-C_\tau(X)-\lambda).
\end{aligned}
\]

For each branch, take the local output layer to be the identity, which is $a^{(\tau,+)}_{1,0}=0$, $a^{(\tau,+)}_{1,1}=1$, and $a^{(\tau,-)}_{1,0}=0$, $a^{(\tau,-)}_{1,1}=1$. Thus $G^{(\tau,+)}_{1,1}
=
\sigma(C_\tau(X)-\lambda)$, and $G^{(\tau,-)}_{1,1}
=
\sigma(-C_\tau(X)-\lambda)$. Since each branch has only one time position, global max pooling gives
\[
P^{(\tau,+),\max}_1
=
G^{(\tau,+)}_{1,1},
\qquad
P^{(\tau,-),\max}_1
=
G^{(\tau,-)}_{1,1}.
\]

Within the affine submodel $g(\bm P)=\gamma_0+\bm\gamma^\top\bm P$ contained in the fixed head, choose $\gamma_0=0$, $\gamma^{(\tau,+),\max}_1=1$, $\gamma^{(\tau,-),\max}_1=1$ for all \(\tau\in\mathcal I\), and set all coefficients for sum pooling to zero. Then
\[
\begin{aligned}
f_{S,\lambda}(X)
&=
\sum_{\tau\in\mathcal I}
P^{(\tau,+),\max}_1
+
\sum_{\tau\in\mathcal I}
P^{(\tau,-),\max}_1\\
&=
\sum_{\tau\in\mathcal I}
\sigma(C_\tau(X)-\lambda)
+
\sum_{\tau\in\mathcal I}
\sigma(-C_\tau(X)-\lambda).
\end{aligned}
\]
All terms are nonnegative. Therefore, $f_{S,\lambda}(X)>0$ if and only if there exists \(\tau\in\mathcal I\) such that $C_\tau(X)>\lambda$ or $-C_\tau(X)>\lambda$. Equivalently, $|C_\tau(X)|>\lambda$ for some \(\tau\in\mathcal I\), which is exactly $S(X)>\lambda$. Hence
\[
\mathbb I\{f_{S,\lambda}(X)>0\}
=
\mathbb I\{S(X)>\lambda\}
=
h_{S,\lambda}(X).
\]
This completes the proof.
\end{proof}


\subsection{Proof of Theorem 4 (realised volatility)}
\begin{proof}
Since \(\bm X\in[-M,M]^T\), for every \(t=1,\ldots,T-1\), \(r_t=X_{t+1}-X_t\in[-2M,2M]\).
Let $\rho=\frac{\varepsilon}{T-1}$. By the ReLU approximation result for the square function on a bounded interval \citep{yarotsky2017error}, there exists a ReLU network with one input \(\mathrm{Sq}_{\rho,M}\) such that
\[
\sup_{u\in[-2M,2M]}
\left|
\mathrm{Sq}_{\rho,M}(u)-u^2
\right|
\le
\rho.
\]
After scaling the construction that approximates the square function to \([-2M,2M]\), its depth and number of parameters are \(O\{\log(2+(T-1)M^2/\varepsilon)\}\).

Write this ReLU network as follows. The first hidden layer is $H^{(1)}_j(u)
=
\sigma\left(
\alpha^{(1)}_j u+\beta^{(1)}_j
\right)$, $j=1,\ldots,C_1$. For \(\ell=2,\ldots,L\), the hidden layers are
\[
H^{(\ell)}_j(u)
=
\sigma\left(
\beta^{(\ell)}_j
+
\sum_{i=1}^{C_{\ell-1}}
A^{(\ell)}_{ji}H^{(\ell-1)}_i(u)
\right),
\qquad
j=1,\ldots,C_\ell.
\]
The output layer is $\mathrm{Sq}_{\rho,M}(u)
=
a_0+
\sum_{j=1}^{C_L}a_jH^{(L)}_j(u)$.

We now construct a CNN. Take \(B=1\), \(L_1=L\), and set $k_{1,1}=2$, $k_{1,\ell}=1$, $\ell=2,\ldots,L$. Let $C_{1,0}=1$, $C_{1,\ell}=C_\ell$, $\ell=1,\ldots,L$, and set $\mathrm{Pool}_{1,\ell}=\mathrm{Id}$, $\ell=1,\ldots,L$.

For the first CNN block, set $w^{(1,1)}_{j,1,0}
=
-\alpha^{(1)}_j$, $w^{(1,1)}_{j,1,1}
=
\alpha^{(1)}_j$, $\beta^{(1,1)}_j
=
\beta^{(1)}_j$. Then, for \(t=1,\ldots,T-1\),
\[
\begin{aligned}
\widetilde Z^{(1,1)}_{j,t}
&=
\sigma\left(
\beta^{(1)}_j
-\alpha^{(1)}_jX_t
+
\alpha^{(1)}_jX_{t+1}
\right)\\
&=
\sigma\left(
\beta^{(1)}_j+\alpha^{(1)}_jr_t
\right)\\
&=
H^{(1)}_j(r_t).
\end{aligned}
\]
Since \(\mathrm{Pool}_{1,1}=\mathrm{Id}\), we have $Z^{(1,1)}_{j,t}=H^{(1)}_j(r_t)$. For each subsequent block \(\ell=2,\ldots,L\), set $w^{(1,\ell)}_{j,i,0}
=
A^{(\ell)}_{ji}$, $\beta^{(1,\ell)}_j
=
\beta^{(\ell)}_j$. Then, by induction over \(\ell\), $Z^{(1,\ell)}_{j,t}
=
H^{(\ell)}_j(r_t)$ for $j=1,\ldots,C_\ell$, $t=1,\ldots,T-1$.

Define one local output channel by taking \(m_1=1\) and setting $a^{(1)}_{1,0}=a_0$, $a^{(1)}_{1,j}=a_j$, $j=1,\ldots,C_L$. Then
\[
\begin{aligned}
G^{(1)}_{1,t}
&=
a_0+
\sum_{j=1}^{C_L}a_jZ^{(1,L)}_{j,t}\\
&=
a_0+
\sum_{j=1}^{C_L}a_jH^{(L)}_j(r_t)\\
&=
\mathrm{Sq}_{\rho,M}(r_t).
\end{aligned}
\]

Within the affine submodel $g(\bm P)=\gamma_0+\bm\gamma^\top\bm P$ contained in the fixed head, use global sum pooling by choosing $\gamma^{(1),\mathrm{sum}}_1=1$, $\gamma^{(1),\max}_1=0$, and $\gamma_0=0$. Then the resulting CNN score function is
\[
f^{\mathrm{RV}}_{\varepsilon,M}(\bm X)
=
P^{(1),\mathrm{sum}}_1
=
\sum_{t=1}^{T-1}G^{(1)}_{1,t}
=
\sum_{t=1}^{T-1}\mathrm{Sq}_{\rho,M}(r_t).
\]
Hence, for every \(\bm X\in[-M,M]^T\),
\[
\begin{aligned}
\left|
f^{\mathrm{RV}}_{\varepsilon,M}(\bm X)-V(\bm X)
\right|
&=
\left|
\sum_{t=1}^{T-1}
\left[
\mathrm{Sq}_{\rho,M}(r_t)-r_t^2
\right]
\right|\\
&\le
\sum_{t=1}^{T-1}
\left|
\mathrm{Sq}_{\rho,M}(r_t)-r_t^2
\right|\\
&\le
(T-1)\rho=
\varepsilon.
\end{aligned}
\]
Taking the supremum over \(\bm X\in[-M,M]^T\) gives
\[
\sup_{\bm X\in[-M,M]^T}
\left|
f^{\mathrm{RV}}_{\varepsilon,M}(\bm X)-V(\bm X)
\right|
\le
\varepsilon.
\]
This completes the proof.
\end{proof}

\subsection{Proof of Theorem 5 (autoregressive statistic)}

\begin{proof}
Define the local function $g(a,b)=a(b-a)$. Then
\[
S_{\mathrm{AR}}(\bm X)
=
\sum_{t=1}^{T-1}g(X_t,X_{t+1})
=
\sum_{t=1}^{T-1}X_t(X_{t+1}-X_t).
\]
Since \(\bm X\in[-M,M]^T\), for every \(t=1,\ldots,T-1\), \(X_t\in[-M,M]\) and \(X_{t+1}-X_t\in[-2M,2M]\).
Let \(\rho=\varepsilon/(T-1)\). By the ReLU approximation result for multiplication on a bounded domain \citep{yarotsky2017error}, there exists a ReLU network with two inputs \(\mathrm{Mult}_{\rho,M}\) such that
\[
\sup_{|u|\le M,\ |v|\le 2M}
\left|
\mathrm{Mult}_{\rho,M}(u,v)-uv
\right|
\le
\rho.
\]
After scaling the construction that approximates multiplication to this domain, its depth and number of parameters are \(O\{\log(2+(T-1)M^2/\varepsilon)\}\).

Write this ReLU network as follows. The first hidden layer is
\[
H^{(1)}_j(u,v)
=
\sigma\left(
\alpha^{(1)}_{j,1}u
+
\alpha^{(1)}_{j,2}v
+
\beta^{(1)}_j
\right),
\qquad
j=1,\ldots,C_1.
\]
For \(\ell=2,\ldots,L\), the hidden layers are
\[
H^{(\ell)}_j(u,v)
=
\sigma\left(
\beta^{(\ell)}_j
+
\sum_{i=1}^{C_{\ell-1}}
A^{(\ell)}_{ji}H^{(\ell-1)}_i(u,v)
\right),
\qquad
j=1,\ldots,C_\ell.
\]
The output layer is
\[
\mathrm{Mult}_{\rho,M}(u,v)
=
a_0+
\sum_{j=1}^{C_L}a_jH^{(L)}_j(u,v).
\]

We now construct a CNN. Take \(B=1\), \(L_1=L\), and set $k_{1,1}=2$, $k_{1,\ell}=1$, $\ell=2,\ldots,L$. Let $C_{1,0}=1$, $C_{1,\ell}=C_\ell$, $\ell=1,\ldots,L$, and set $\mathrm{Pool}_{1,\ell}=\mathrm{Id}$ for $\ell=1,\ldots,L$.

At time \(t\), the first convolutional block observes \((X_t,X_{t+1})\). We apply the multiplication network to $u=X_t$, $v=X_{t+1}-X_t$. For each unit in the first layer, \(j=1,\ldots,C_1\),
\[
\begin{aligned}
H^{(1)}_j(X_t,X_{t+1}-X_t)
&=
\sigma\left(
\alpha^{(1)}_{j,1}X_t
+
\alpha^{(1)}_{j,2}(X_{t+1}-X_t)
+
\beta^{(1)}_j
\right)\\
&=
\sigma\left(
(\alpha^{(1)}_{j,1}-\alpha^{(1)}_{j,2})X_t
+
\alpha^{(1)}_{j,2}X_{t+1}
+
\beta^{(1)}_j
\right).
\end{aligned}
\]
Therefore, set $w^{(1,1)}_{j,1,0}
=
\alpha^{(1)}_{j,1}-\alpha^{(1)}_{j,2}$, $w^{(1,1)}_{j,1,1}
=
\alpha^{(1)}_{j,2}$, and $\beta^{(1,1)}_j
=
\beta^{(1)}_j$.
Then, for \(t=1,\ldots,T-1\), $\widetilde Z^{(1,1)}_{j,t}
=
H^{(1)}_j(X_t,X_{t+1}-X_t)$. Since \(\mathrm{Pool}_{1,1}=\mathrm{Id}\), $Z^{(1,1)}_{j,t}
=
H^{(1)}_j(X_t,X_{t+1}-X_t)$.

For each subsequent block \(\ell=2,\ldots,L\), set $w^{(1,\ell)}_{j,i,0}
=
A^{(\ell)}_{ji}$, $\beta^{(1,\ell)}_j
=
\beta^{(\ell)}_j$. Then, by induction over \(\ell\),
\[
Z^{(1,\ell)}_{j,t}
=
H^{(\ell)}_j(X_t,X_{t+1}-X_t),
\qquad
j=1,\ldots,C_\ell,\quad t=1,\ldots,T-1.
\]

Define one local output channel by taking \(m_1=1\) and setting $a^{(1)}_{1,0}=a_0$, $a^{(1)}_{1,j}=a_j$ for $j=1,\ldots,C_L$. Then
\[
\begin{aligned}
G^{(1)}_{1,t}
&=
a_0+
\sum_{j=1}^{C_L}a_jZ^{(1,L)}_{j,t}\\
&=
a_0+
\sum_{j=1}^{C_L}a_jH^{(L)}_j(X_t,X_{t+1}-X_t)\\
&=
\mathrm{Mult}_{\rho,M}(X_t,X_{t+1}-X_t).
\end{aligned}
\]

Within the affine submodel $g(\bm P)=\gamma_0+\bm\gamma^\top\bm P$ contained in the fixed head, use global sum pooling by choosing $\gamma^{(1),\mathrm{sum}}_1=1$, $\gamma^{(1),\max}_1=0$, and $\gamma_0=0$. Then the resulting CNN score function is
\[
f^{\mathrm{AR}}_{\varepsilon,M}(\bm X)
=
P^{(1),\mathrm{sum}}_1
=
\sum_{t=1}^{T-1}
\mathrm{Mult}_{\rho,M}(X_t,X_{t+1}-X_t).
\]
Hence, for every \(\bm X\in[-M,M]^T\),
\[
\begin{aligned}
\left|
f^{\mathrm{AR}}_{\varepsilon,M}(\bm X)-S_{\mathrm{AR}}(\bm X)
\right|
&=
\left|
\sum_{t=1}^{T-1}
\left[
\mathrm{Mult}_{\rho,M}(X_t,X_{t+1}-X_t)
-
X_t(X_{t+1}-X_t)
\right]
\right|\\
&\le
\sum_{t=1}^{T-1}
\left|
\mathrm{Mult}_{\rho,M}(X_t,X_{t+1}-X_t)
-
X_t(X_{t+1}-X_t)
\right|\\
&\le
(T-1)\rho =
\varepsilon.
\end{aligned}
\]
Taking the supremum over \(\bm X\in[-M,M]^T\) gives
\[
\sup_{\bm X\in[-M,M]^T}
\left|
f^{\mathrm{AR}}_{\varepsilon,M}(\bm X)-S_{\mathrm{AR}}(\bm X)
\right|
\le
\varepsilon.
\]
This completes the proof.
\end{proof}

\subsection{Proof of Theorem 6 (size and power of the slope)}
\begin{proof}
Under \(Y=0\), the least squares slope fitted to the left segment is
\[
\widehat b_L(\tau)
=
b+
\frac{
\sum_{t=1}^{\tau}
(t-\overline t_L(\tau))\varepsilon_t
}{
D_L(\tau)
},
\]
while the slope fitted to the right segment is
\[
\widehat b_R(\tau)
=
b+
\frac{
\sum_{t=\tau+1}^{T}
(t-\overline t_R(\tau))\varepsilon_t
}{
D_R(\tau)
}.
\]

It follows that \(C_\tau(\bm X)\) is Gaussian with mean zero.
Since the two sums involve disjoint sets of innovations,
its variance is
\[
\begin{aligned}
\operatorname{Var}
\left\{
C_\tau(\bm X)
\mid Y=0
\right\}
&=
\sigma_S^2
\left\{
\frac{
\sum_{t=1}^{\tau}
(t-\overline t_L(\tau))^2
}{
D_L^2(\tau)
}
+
\frac{
\sum_{t=\tau+1}^{T}
(t-\overline t_R(\tau))^2
}{
D_R^2(\tau)
}
\right\}
\\
&=
\sigma_S^2
\left\{
\frac{1}{D_L(\tau)}
+
\frac{1}{D_R(\tau)}
\right\}
\\
&=
\sigma_S^2v_\tau.
\end{aligned}
\]

Therefore, for every \(\lambda>0\),
\[
P\left\{
|C_\tau(\bm X)|>\lambda
\mid Y=0
\right\}
\leq
2
\exp\left\{
-\frac{\lambda^2}{2\sigma_S^2v_\tau}
\right\}.
\]

Using the union bound over the \(m_T\) candidate split points gives
\[
\begin{aligned}
P\left\{
S(\bm X)>\lambda
\mid Y=0
\right\}
&\leq
\sum_{\tau\in\mathcal I}
P\left\{
|C_\tau(\bm X)|>\lambda
\mid Y=0
\right\}
\\
&\leq
2m_T
\exp\left\{
-\frac{\lambda^2}{2\sigma_S^2v_T^{\max}}
\right\}.
\end{aligned}
\]

Substituting
\(\lambda=\lambda_{S,\alpha_S}\)
shows that
\[
P\left\{
h_{S,\lambda_{S,\alpha_S}}(\bm X)=1
\mid Y=0
\right\}
\leq
\alpha_S.
\]

Under \(Y=1\), evaluate the contrast at the true split point
\(\tau^\star\).
The left and right least squares slopes are unbiased for
\(b_L\) and \(b_R\), respectively, and hence
\[
C_{\tau^\star}(\bm X)
\sim
N\left(
b_L-b_R,
\sigma_S^2v_{\tau^\star}
\right).
\]

Since
\(S(\bm X)\geq|C_{\tau^\star}(\bm X)|\), the event
\(S(\bm X)\leq\lambda\) implies
\(|C_{\tau^\star}(\bm X)|\leq\lambda\).

If \(b_L-b_R>0\), then
\(|C_{\tau^\star}(\bm X)|\leq\lambda\) implies that the centred
Gaussian error is at most
\(-\{|b_L-b_R|-\lambda\}\).
If \(b_L-b_R<0\), the analogous event is an upper Gaussian tail.
Thus, in either case,
\[
P\left\{
S(\bm X)\leq\lambda
\mid
Y=1,\tau^\star,b_L,b_R
\right\}
\leq
\exp\left\{
-
\frac{
\left(
|b_L-b_R|-\lambda
\right)_+^2
}{
2\sigma_S^2v_{\tau^\star}
}
\right\}.
\]

Using
\(|b_L-b_R|\geq\kappa_S\) and
\(v_{\tau^\star}\leq v_T^{\max}\), and then averaging over any
random segment parameters, yields
\[
P\left\{
S(\bm X)\leq\lambda
\mid Y=1
\right\}
\leq
\exp\left\{
-
\frac{
\left(
\kappa_S-\lambda
\right)_+^2
}{
2\sigma_S^2v_T^{\max}
}
\right\}.
\]

Taking
\(\lambda=\lambda_{S,\alpha_S}\)
and combining the conditional errors with weights
\(\pi_0\) and \(\pi_1\) proves
the bounds stated in Theorem 6.
\end{proof}

\subsection{Proof of Theorem 7 (realised volatility classifiers)}

\begin{proof}
Set \(Z_t=r_t^2\).
By assumption, conditional on either class,
\(Z_1,\ldots,Z_n\) are independent and satisfy
\(0\leq Z_t\leq U^2\).

Under \(Y=0\), we have
\(E\{V(\bm X)\mid Y=0\}\leq n\nu_0\).
Thus, if
\(V(\bm X)>\lambda_{V,\alpha_V}\), then
\[
V(\bm X)
-
E\{V(\bm X)\mid Y=0\}
>
a_{V,\alpha_V}.
\]

Hoeffding's inequality gives
\[
\begin{aligned}
P\left\{
V(\bm X)>\lambda_{V,\alpha_V}
\mid Y=0
\right\}
&\leq
\exp\left\{
-\frac{2a_{V,\alpha_V}^2}{nU^4}
\right\}
\\
&=
\alpha_V.
\end{aligned}
\]

Under \(Y=1\), we have
\(E\{V(\bm X)\mid Y=1\}\geq n(\nu_0+\kappa_V)\).
Therefore,
\[
E\{V(\bm X)\mid Y=1\}
-
\lambda_{V,\alpha_V}
\geq
n\kappa_V-a_{V,\alpha_V}.
\]

Applying the form for the lower tail of Hoeffding's inequality yields
\[
P\left\{
V(\bm X)\leq\lambda_{V,\alpha_V}
\mid Y=1
\right\}
\leq
\exp\left\{
-
\frac{
2\left(
n\kappa_V-a_{V,\alpha_V}
\right)_+^2
}{
nU^4
}
\right\}.
\]

Combining the two bounds conditional on class proves the assertion for the exact rule.

It remains to account for clipping and CNN approximation. On \(E_M\),
\(\operatorname{clip}_M(\bm X)=\bm X\). If
\(\overline h_V(\bm X)=1\) and \(E_M\) occurs, then
\(f_{\epsilon_V,M}^{RV}(\bm X)>
\lambda_{V,\alpha_V}+\epsilon_V\), and uniform approximation implies
\(V(\bm X)>\lambda_{V,\alpha_V}\). Therefore,
\[
\mathbb P\{\overline h_V(\bm X)=1\mid Y=0\}
\leq\alpha_V+\tau_0(M).
\]

If \(\overline h_V(\bm X)=0\) and \(E_M\) occurs, then
\(f_{\epsilon_V,M}^{RV}(\bm X)\leq
\lambda_{V,\alpha_V}+\epsilon_V\), and hence
\(V(\bm X)\leq\lambda_{V,\alpha_V}+2\epsilon_V\). Hoeffding's
inequality therefore gives
\[
\mathbb P\{\overline h_V(\bm X)=0\mid Y=1\}
\leq
\exp\left\{
-\frac{2(n\kappa_V-a_{V,\alpha_V}-2\epsilon_V)_+^2}{nU^4}
\right\}
+\tau_1(M).
\]
Combining the two bounds conditional on class proves the risk bound for the clipped neural classifier in Theorem 7.
\end{proof}

\subsection{Proof of Theorem 8 (autoregressive classifiers)}

\begin{proof}
Fix \(\phi\in\mathbb R\), and let \(\mathbb P_\phi\) and
\(\mathbb E_\phi\) denote probability and expectation conditional on
\(\Phi=\phi\). Set \(\mathcal F_t=\sigma(\varepsilon_1,\ldots,\varepsilon_t)\).
For \(s=0,\ldots,T-1\), define
\[
M_s=\sum_{t=1}^sX_t\varepsilon_{t+1},\qquad
Q_s=\sum_{t=1}^sX_t^2,
\]
with empty sums equal to zero. For every \(\theta>0\) and
\(\xi\in\{-1,1\}\), the process
\[
L_s^{\xi}(\theta)
=\exp\left\{\xi\theta M_s-
\frac{\theta^2\sigma_{AR}^2Q_s}{2}\right\}
\]
is a martingale with mean one with respect to the shifted filtration
\((\mathcal F_{s+1})_{s=0}^{T-1}\). Indeed, \(X_s\) is
\(\mathcal F_s\)-measurable and \(\varepsilon_{s+1}\) is independent of
\(\mathcal F_s\) and Gaussian, so
\(\mathbb E_\phi\{L_s^{\xi}(\theta)\mid\mathcal F_s\}
=L_{s-1}^{\xi}(\theta)\).

Write \(M_T=M_{T-1}\) and \(Q_T=Q_{T-1}\), consistently with the notation
in the main paper. Since
\(X_{t+1}-X_t=(\phi-1)X_t+\varepsilon_{t+1}\),
$S_{AR}(\bm X)=(\phi-1)Q_T+M_T.$ If \(|\phi|\leq1\), then
\(S_{AR}(\bm X)\leq M_T\). Moreover,
\(\operatorname{Var}_\phi(X_t)\leq\sigma_{AR}^2T\), so
\[
\mathbb P_\phi\left(\max_{1\leq t\leq T-1}|X_t|>a\right)
\leq
2T\exp\!\left(-\frac{a^2}{2\sigma_{AR}^2T}\right).
\]
With \(a^2=2\sigma_{AR}^2T\log(4T/\alpha)\), this probability is at most
\(\alpha/2\), and on its complement $Q_T\leq2\sigma_{AR}^2T^2\log(4T/\alpha)=:r_T.$ Using the positive exponential martingale and Markov's inequality, for every \(\theta>0\),
\[
\mathbb P_\phi(M_T>\lambda_{AR,\alpha},Q_T\leq r_T)
\leq
\exp\!\left\{
-\theta\lambda_{AR,\alpha}
+\frac{\theta^2\sigma_{AR}^2r_T}{2}
\right\}.
\]
Taking \(\theta=\lambda_{AR,\alpha}/(\sigma_{AR}^2r_T)\) makes this at
most \(\alpha/2\). Adding the probability of \(Q_T>r_T\) proves the size
bound uniformly over \(|\phi|\leq1\). Integrating this uniform bound with
respect to the conditional law of \(\Phi\) given \(Y=0\) gives the stated
error bound under the null class.

If \(\phi\geq1+\kappa_{AR}\), then on \(Q_T\geq q_T\), the condition
\(q_T\geq2\lambda_{AR,\alpha}/\kappa_{AR}\) implies that
\(S_{AR}(\bm X)\leq\lambda_{AR,\alpha}\) only if \(M_T\leq-\kappa_{AR}Q_T/2\). Using the negative exponential martingale and taking
\(\theta=\kappa_{AR}/(2\sigma_{AR}^2)\),
\[
\mathbb P_\phi\left\{
M_T\leq-\frac{\kappa_{AR}}{2}Q_T,\ Q_T\geq q_T
\right\}
\leq
\exp\!\left(-\frac{\kappa_{AR}^2q_T}{8\sigma_{AR}^2}\right).
\]
This bound is uniform over \(\phi\geq1+\kappa_{AR}\). Integrating it with
respect to the conditional law of \(\Phi\) given \(Y=1\), and then adding
\(\mathbb P(Q_T<q_T\mid Y=1)\leq\eta_T\), proves the assertion for the exact rule.

We now consider the clipped neural classifier. On \(E_M\),
\(\operatorname{clip}_M(\bm X)=\bm X\). Uniform approximation gives
\[
\{\overline h_{AR}(\bm X)=1\}\cap E_M
\subseteq
\{S_{AR}(\bm X)>\lambda_{AR,\alpha}\}.
\]
Thus \(\mathbb P\{\overline h_{AR}(\bm X)=1\mid Y=0\}
\leq\alpha+\tau_0(M)\).

Likewise,
\[
\{\overline h_{AR}(\bm X)=0\}\cap E_M
\subseteq
\{S_{AR}(\bm X)\leq\lambda_{AR,\alpha}+2\epsilon_{AR}\}.
\]
On \(Q_T\geq q_T\), the condition
\(q_T\geq2(\lambda_{AR,\alpha}+2\epsilon_{AR})/\kappa_{AR}\)
implies that the last event can occur only if
\(M_T\leq-\kappa_{AR}Q_T/2\). The same negative
exponential martingale argument as above gives
\[
\mathbb P\{\overline h_{AR}(\bm X)=0\mid Y=1\}
\leq
\eta_T+
\exp\left\{-\frac{\kappa_{AR}^2q_T}{8\sigma_{AR}^2}\right\}
+\tau_1(M).
\]
Combining the two conditional errors proves the bound
\(\mathfrak B_{AR}(\alpha,\epsilon_{AR},M)\) in Theorem 8.
\end{proof}

\subsection{Proof of Theorem 9 (common class oracle bound)}
\begin{proof}
Construct a common CNN architecture by taking the union of the branches
used in Theorems 3, 4 and 5. The realised volatility and autoregressive
branches each receive their own fixed clipping preprocessing. Specifically,
for each \(G\in\{V,AR\}\), let \(M_G\) be the clipping radius fixed in its
comparator and prepend to that branch a ReLU layer with kernel size one and two channels
with
\[
U_t^{G,+}=\sigma(X_t+M_G),\qquad U_t^{G,-}=\sigma(X_t-M_G).
\]
The clipped input is the affine combination
\(\operatorname{clip}_{M_G}(X_t)=-M_G+U_t^{G,+}-U_t^{G,-}\). In the next convolutional
layer, substitute this affine combination for every occurrence of \(X_t\).
The coefficients of \(U_t^{G,+}\) and \(U_t^{G,-}\) are absorbed into that layer's
filter weights, and the constant \(-M_G\) is absorbed into its bias. Its
preactivation therefore coincides exactly with the preactivation of the first hidden layer in the realised volatility or autoregressive construction
evaluated at the clipped input. The remaining layers are unchanged. Thus the
realised volatility and autoregressive branches each gain one convolutional
block. This preprocessing is specific to each branch. The slope change branches
continue to receive the unclipped input.

The first group consists of the
\(2(T-4)\)
branches spanning the full window used to represent the slope change classifier.
The second group consists of the branch with local convolution and global summation
used to construct
\(f_{\epsilon_V,M}^{RV}\),
and the third group consists of the corresponding branch used to
construct
\(f_{\epsilon_{AR},M}^{AR}\).

All branch parameters and all weights of the fixed head are free. The affine
submodel of that head is sufficient for each construction.
To obtain any one of
\(\overline h_S\),
\(\overline h_V\),
or
\(\overline h_{AR}\),
assign the parameters of its corresponding branches as in the relevant
construction and set the output coefficients of all remaining branches
equal to zero.
It follows that all three comparator classifiers belong to the common
class \(\mathcal H\).

Define the empirical risk by
\(\widehat{\operatorname{err}}_N(h)
=
N^{-1}\sum_{i=1}^{N}
\mathbb I\{h(\bm X^{(i)})\neq Y^{(i)}\}\), and set
\[
r_N(d,\delta)
=
\sqrt{\frac{8d\log(2eN/d)+8\log(4/\delta)}{N}}.
\]
By the standard VC uniform deviation inequality, with probability at
least \(1-\delta\),
\[
\sup_{h\in\mathcal H}
\left|
\operatorname{err}(h)
-
\widehat{\operatorname{err}}_N(h)
\right|
\leq
r_N(d_{\mathcal H},\delta).
\]

On this event, the empirical minimality of
\(\widehat h_{\mathrm{ERM}}\)
implies
\[
\operatorname{err}
\left(
\widehat h_{\mathrm{ERM}}
\right)
\leq
\inf_{h\in\mathcal H}
\operatorname{err}(h)
+
2r_N(d_{\mathcal H},\delta).
\]

Since each of the three comparators lies in \(\mathcal H\),
\[
\inf_{h\in\mathcal H}
\operatorname{err}(h)
\leq
\min_{G\in\{S,V,AR\}}
\operatorname{err}(\overline h_G).
\]
Substituting this inequality into the preceding ERM bound proves Theorem 9.
\end{proof}


\section{ Representation and approximation for other architectures}

In the main paper we focus on a general CNN architecture that can represent or approximate several statistics related to bubbles. In this section, we provide additional representation results for other neural network architectures. In particular, we consider fully connected ReLU networks, block sparse fully connected networks and ResNet CNNs, and show how they can represent or approximate the range statistic and the AR explosiveness statistic. The block sparse and ResNet constructions are particularly useful for understanding how the AR explosiveness statistic can be decomposed into local interaction terms and implemented by alternative neural network architectures.

We first show that the range statistic can be exactly represented by a fully connected ReLU neural network with one hidden layer.

\begin{theorem}
    Let \(T\geq2\). Define the range rule statistic by $R(X)=\max_i X_i-\min_i X_i$ for $\bm{X}=(X_1, \cdots, X_T) \in \mathbb{R}^T$. For any threshold $\lambda>0$, define the range rule classifier $h_{range,\lambda}(X)=\mathbb{I}\{ R(X)>\lambda \}$, then
    \begin{equation}
       h_{range,\lambda}(X) \in \mathcal{H}_{1,T(T-1)},
    \end{equation}
    where $\mathcal{H}_{1,m}$ denotes the thresholded classifier class obtained by applying \(\mathbb I\{\cdot>0\}\) to the real output of a fully connected ReLU network with one hidden layer and \(m\) hidden nodes.
    \label{thm:range_fnn}
\end{theorem}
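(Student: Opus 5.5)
We prove the statement with a one-hidden-layer construction that mirrors the directional representation in Theorem~\ref{thm:drawup/down}. The key identity is that the range exceeds $\lambda$ exactly when some ordered pair of coordinates differs by more than $\lambda$:
\[
R(\bm X)>\lambda \iff \exists\, i\neq j:\ X_i-X_j>\lambda .
\]
This holds because $R(\bm X)=\max_{i,j}(X_i-X_j)$, and the diagonal pairs $i=j$ contribute $0<\lambda$, so they can be dropped. There are exactly $T(T-1)$ ordered pairs $(i,j)$ with $i\neq j$, which matches the hidden width in the statement.

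The construction is as follows. For each ordered pair $(i,j)$ with $i\neq j$, introduce one hidden ReLU unit
\[
u_{ij}(\bm X)=\sigma\bigl(X_i-X_j-\lambda\bigr),
\]
whose input weight vector is $e_i-e_j$ and whose bias is $-\lambda$. Take output weights all equal to one and output bias zero, so the network outputs
\[
F(\bm X)=\sum_{i\neq j}\sigma(X_i-X_j-\lambda).
\]
Each summand is nonnegative. Hence $F(\bm X)>0$ if and only if some summand is strictly positive, that is, $X_i-X_j>\lambda$ for some $i\neq j$. By the identity above, this is equivalent to $R(\bm X)>\lambda$. Therefore $\mathbb I\{F(\bm X)>0\}=h_{range,\lambda}(\bm X)$ for every $\bm X\in\mathbb R^T$, and so $h_{range,\lambda}\in\mathcal H_{1,T(T-1)}$.

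Two details need care, though neither is difficult. First, the strictness of the inequalities: $\lambda>0$ is what allows us to exclude the diagonal pairs, and the strict inequality inside the ReLU is what matches $R>\lambda$ rather than $R\ge\lambda$. Second, the definition of $\mathcal H_{1,m}$: if it means networks with at most $m$ hidden nodes, we are done; if it means exactly $m$ nodes, we already use all $T(T-1)$ of them. As an alternative, one could exploit the antisymmetry of pairs and use only the $T(T-1)/2$ unordered pairs with units $\sigma(X_i-X_j-\lambda)$ and $\sigma(X_j-X_i-\lambda)$. This gives the same count, so the ordered-pair construction is the natural one and matches the stated width exactly.
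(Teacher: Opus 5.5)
Your proposal is correct and is essentially the paper's proof: the paper uses the same $T(T-1)$ hidden units $\sigma(v_{ij}^{\top}\bm X-\lambda)$ with $v_{ij}=e_i-e_j$ and all-ones output weights. It then concludes, as you do, that a sum of nonnegative ReLU terms is positive if and only if some term is, the only cosmetic difference being that the paper writes $R(\bm X)=\max_{i\neq j}(X_i-X_j)$ directly (valid for $T\geq 2$) whereas you discard the diagonal pairs using $\lambda>0$.
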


\begin{proof}
For $1\leq i,j\leq T$ with $i\neq j$, let $v_{ij}=e_i-e_j\in\mathbb R^T$, where $e_i$ denotes the $i$th canonical basis vector in $\mathbb R^T$. Then,
\begin{equation*}
R(X)=\max_{1\leq i\leq T}X_i-\min_{1\leq j\leq T}X_j = \max_{i\neq j}(X_i-X_j) = \max_{i\neq j}v_{ij}^{\top}X.
\end{equation*}
Hence, $h_{\mathrm{range},\lambda}(X) = \mathbb I\{R(X)>\lambda\} = \mathbb I \{ \max_{i\neq j}(v_{ij}^{\top}X-\lambda)>0 \}$. Equivalently, $h_{\mathrm{range},\lambda}(X) = \mathbb I \{ \sum_{i\neq j}\sigma(v_{ij}^{\top}X-\lambda)>0 \}$, where $\sigma(x)=\max\{x,0\}$ is the ReLU activation function.

We now construct a ReLU network with one hidden layer. Let $\bm{W}_0$ be the $T(T-1)\times T$ matrix whose rows are given by $v_{ij}^{\top}$ for all ordered pairs $(i,j)$ with $i\neq j$. Let $b_1=\lambda\mathbf 1_{T(T-1)}$ and let $\bm{W}_1=\mathbf 1_{T(T-1)}^{\top}$. Then $\bm{W}_1\sigma(\bm{W}_0\bm{X}-b_1) = \sum_{i\neq j}\sigma(v_{ij}^{\top}\bm{X}-\lambda)$. Applying the final threshold activation $\sigma_0^*(z)=\mathbb I\{z>0\}$ gives
\begin{equation*}
\sigma_0^* (W_1\sigma(W_0X-b_1)) = \mathbb I \{ \sum_{i\neq j}\sigma(v_{ij}^{\top}X-\lambda)>0 \} = h_{\mathrm{range},\lambda}(X).
\end{equation*}
Therefore, $h_{\mathrm{range},\lambda}\in\mathcal H_{1,T(T-1)}$. This completes the proof.
\end{proof}

Theorem 5 of the main paper shows that the AR statistic can be approximated by the general CNN. We now give alternative block sparse FNN and ResNet constructions, following \citet{oono2019approximation}.

A block sparse fully connected network consists of several fully connected blocks arranged in parallel, followed by a single affine output layer. ResNet is a neural network architecture consisting of a sequence of residual blocks, where each block adds a learned nonlinear transformation to its input through a skip connection. The architectures and precise definitions, as well as schematic views of these two architectures, can be found in Sections 3.3 and 3.2 of \cite{oono2019approximation}, respectively.

We first show that the AR explosiveness statistic can be approximated by a block sparse FNN. Its local multiplication terms can be approximated independently and then summed.

\begin{theorem}
    Let \(T\geq2\). For any \(M>0\) and \(\epsilon>0\), there exists a block sparse fully connected neural network \(f_{\epsilon,M}^{FNN}\) with \(T-1\) blocks, constant width \(C\), and block depth \(L=O\{\log(2+(T-1)M^2/\epsilon)\}\) such that
    \begin{equation}
        \sup_{\bm X\in [-M,M]^T}|f_{\epsilon,M}^{FNN}(\bm X)-S_{AR}(\bm X)| \le \epsilon,
    \end{equation}
    where \(f_{\epsilon,M}^{FNN}=\sum_{t=1}^{T-1}f_{t,\epsilon,M}\), and the \(t\)-th block approximates \(g_t(\bm X)=X_t(X_{t+1}-X_t)\).
    \label{thm:ar_fnn}
\end{theorem}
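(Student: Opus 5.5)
The plan is to reuse the multiplication approximant from the proof of Theorem 5, now placed in parallel blocks instead of a shared convolutional filter. Set \(\rho=\epsilon/(T-1)\). By the ReLU multiplication result of \citet{yarotsky2017error}, rescaled to the domain \(\{|u|\le M,\ |v|\le 2M\}\), there is a two-input ReLU network \(\mathrm{Mult}_{\rho,M}\) satisfying
\[
\sup_{|u|\le M,\ |v|\le 2M}|\mathrm{Mult}_{\rho,M}(u,v)-uv|\le\rho .
\]
Its width is a constant \(C\), independent of \(\rho\), \(M\) and \(T\), and its depth is \(L=O\{\log(2+(T-1)M^2/\epsilon)\}\). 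To ensure that every block has the same width and depth, as the block sparse definition of \citet{oono2019approximation} requires, I would pad with identity units \(x=\sigma(x)-\sigma(-x)\) where needed. This is probably unnecessary, since all blocks use the same network.

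For each \(t=1,\ldots,T-1\), the \(t\)-th block takes the full input \(\bm X\in\mathbb R^T\). Its first-layer weights are supported only on coordinates \(t\) and \(t+1\). As in the proof of Theorem 5, I fold the substitution \(u=X_t\), \(v=X_{t+1}-X_t\) into the first affine map. The first-layer preactivation of unit \(j\) becomes \((\alpha^{(1)}_{j,1}-\alpha^{(1)}_{j,2})X_t+\alpha^{(1)}_{j,2}X_{t+1}+\beta^{(1)}_j\). The remaining layers copy those of \(\mathrm{Mult}_{\rho,M}\). Consequently the block computes
\[
f_{t,\epsilon,M}(\bm X)=\mathrm{Mult}_{\rho,M}(X_t,X_{t+1}-X_t).
\]
The final affine output layer of the block sparse network sums the \(T-1\) block outputs with unit weights, which gives \(f^{FNN}_{\epsilon,M}=\sum_{t=1}^{T-1}f_{t,\epsilon,M}\).

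For the error bound, take \(\bm X\in[-M,M]^T\). Then \(|X_t|\le M\) and \(|X_{t+1}-X_t|\le 2M\), so each term lies in the approximation domain. The triangle inequality gives
\[
\bigl|f^{FNN}_{\epsilon,M}(\bm X)-S_{AR}(\bm X)\bigr|\le\sum_{t=1}^{T-1}\bigl|\mathrm{Mult}_{\rho,M}(X_t,X_{t+1}-X_t)-g_t(\bm X)\bigr|\le (T-1)\rho=\epsilon .
\]
Taking the supremum over \(\bm X\in[-M,M]^T\) completes the argument.

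The main obstacle is bookkeeping rather than analysis. First, the construction must fit the exact block sparse format of \citet{oono2019approximation}, including how blocks read the input and whether the output layer carries a bias. Second, the depth must scale correctly. The rescaling \(uv=2M^2\,(u/M)(v/2M)\) turns the target accuracy into a relative accuracy \(\rho/(2M^2)\) on \([-1,1]^2\), and the depth bound \(O\{\log(1/\text{accuracy})\}\) then yields the stated \(L\). I would verify this scaling step explicitly and otherwise cite the Theorem 5 construction.
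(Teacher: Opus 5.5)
Your proposal is correct and follows essentially the paper's route. Both arguments use \(T-1\) parallel blocks, each a constant-width ReLU multiplication approximant from \citet{yarotsky2017error} applied to the local term \(g_t\); the blocks are summed by the output layer, and the triangle inequality gives \(T-1\) times the per-block error. The only difference is cosmetic: the paper normalises to \(\bm U=\bm X/M\) and approximates \(U_tU_{t+1}-U_t^2\) with two multiplication networks in parallel, each at accuracy \(\delta=\min\{1/2,\epsilon/[2(T-1)M^2]\}\). You instead reuse the Theorem~5 device of folding \(v=X_{t+1}-X_t\) into the first layer, so a single product suffices. This gives the same effective accuracy \(\epsilon/[2(T-1)M^2]\) on \([-1,1]^2\) and block width \(C\) rather than \(2C\).
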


\begin{proof}
    Normalise the domain by setting \(U_j=X_j/M\), \(j=1,\ldots,T\). Then \(\bm U\in[-1,1]^T\) and
    \begin{equation*}
        S_{AR}(\bm X)=M^2\sum_{t=1}^{T-1}U_t(U_{t+1}-U_t)=:M^2S_{AR}^0(\bm U).
    \end{equation*}
    For each \(t=1,\ldots,T-1\), write \(g_t^0(\bm U)=U_tU_{t+1}-U_t^2\), and set \(\delta=\min\{1/2,\epsilon/[2(T-1)M^2]\}\). By the standard ReLU multiplication approximation \citep{yarotsky2017error}, there exists a ReLU network of constant width \(\operatorname{Mult}_{\delta}\) such that \(\lvert\operatorname{Mult}_{\delta}(a,b)-ab\rvert\leq\delta\) on \([-1,1]^2\), with depth \(O\{\log(1/\delta)\}\).

    Define \(\widetilde g_{t,\delta}(\bm U)=\operatorname{Mult}_{\delta}(U_t,U_{t+1})-\operatorname{Mult}_{\delta}(U_t,U_t)\). Within each $t$-block, the two multiplication networks run in parallel and a final affine output takes their difference. Their combined width is therefore at most twice the constant width of one multiplication network and remains constant. Uniformly over \(\bm U\in[-1,1]^T\),
    \begin{equation*}
        \begin{aligned}
        | \widetilde g_{t,\delta}(\bm U)-g_t^0(\bm U)| &\leq
        | \operatorname{Mult}_{\delta}(U_t,U_{t+1})-U_tU_{t+1}|\\
        &\quad+| \operatorname{Mult}_{\delta}(U_t,U_t)-U_t^2| \le 2\delta.
        \end{aligned}
    \end{equation*}
    Thus, with \(\widetilde S_{AR,\delta}^{0}=\sum_{t=1}^{T-1}\widetilde g_{t,\delta}\),
    \begin{equation*}
        \sup_{\bm U\in[-1,1]^T}|\widetilde S_{AR,\delta}^{0}(\bm U)-S_{AR}^{0}(\bm U)| \le 2(T-1)\delta.
    \end{equation*}
    Define \(f_{\epsilon,M}^{FNN}(\bm X)=M^2\widetilde S_{AR,\delta}^{0}(\bm X/M)\). Then
    \begin{equation*}
        \sup_{\bm X\in[-M,M]^T}|f_{\epsilon,M}^{FNN}(\bm X)-S_{AR}(\bm X)|
        \leq M^2\,2(T-1)\delta\leq\epsilon.
    \end{equation*}

    The \(t\)-th of the \(T-1\) blocks uses only \((X_t,X_{t+1})\). Since each multiplication network has constant width and depth \(O\{\log(1/\delta)\}=O\{\log(2+(T-1)M^2/\epsilon)\}\), the claimed architecture follows.
\end{proof}

The same AR explosiveness statistic can also be approximated by a ResNet CNN.

\begin{theorem}
    Let \(T\geq2\), set \(D=T\), and fix \(K\in\{2,\ldots,T\}\). For any \(M>0\) and \(\epsilon>0\), there exists a ResNet CNN \(f_{\epsilon,M}^{CNN}\) with \(T-1\) residual blocks, block depth \(l_{CNN}\leq L+\lceil(T-1)/(K-1)\rceil\), channel size \(C_{CNN}\leq4C\), and filter size at most \(K\), such that
    \[
    \sup_{\bm X\in[-M,M]^T}|f_{\epsilon,M}^{CNN}(\bm X)-S_{AR}(\bm X)|\leq\epsilon,
    \]
    where \(L\) and \(C\) are the block depth and width in Theorem S.2.
    \label{thm: ar_resnet}
\end{theorem}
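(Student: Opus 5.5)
The proof will reduce Theorem~\ref{thm: ar_resnet} to Theorem~\ref{thm:ar_fnn} through the general result of \citet{oono2019approximation}: any block sparse fully connected network with $M_0$ blocks, block depth $L$ and width $C$ can be realised exactly by a ResNet CNN with $M_0$ residual blocks, channel size at most $4C$, filter size $K$ and block depth $L+\lceil (D-1)/(K-1)\rceil$. With $D=T$ and $M_0=T-1$, this gives the stated bounds on $l_{CNN}$ and $C_{CNN}$. We take $f^{FNN}_{\epsilon,M}=\sum_{t=1}^{T-1}f_{t,\epsilon,M}$ from Theorem~\ref{thm:ar_fnn}, so that its uniform error on $[-M,M]^T$ is already at most $\epsilon$. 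It then suffices to show that the ResNet CNN $f^{CNN}_{\epsilon,M}$ agrees with $f^{FNN}_{\epsilon,M}$ on $[-M,M]^T$.

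The construction has three steps, each carried out as in Oono and Suzuki.

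\emph{Gathering the input.} The input $\bm X$ is treated as a one-channel signal of length $D=T$. In each residual block, the first $\lceil (T-1)/(K-1)\rceil$ convolution layers use filters of size $K$ with shift-type weights. Each such layer moves coordinates by $K-1$ positions, so after these layers every position holds the full vector $\bm X$, or at least the pair $(X_t,X_{t+1})$ needed by the $t$th block. ReLU would destroy negative values, so every value $v$ is carried in two channels as $\sigma(v)$ and $\sigma(-v)$ and recombined linearly. This sign splitting is one source of the factor $4$ in the channel bound; the other factor $2$ separates the residual stream from the working channels.

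\emph{Emulating the FNN block.} The remaining $L$ layers of the block use size-one filters to reproduce the $L$ fully connected layers of $f_{t,\epsilon,M}$. Because $f_{t,\epsilon,M}$ only ever reads $(X_t,X_{t+1})$, the dense layers act as pointwise channel mixing, which a size-one filter implements exactly.

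\emph{Accumulating the output.} The scalar $f_{t,\epsilon,M}(\bm X)$ is written into a reserved accumulator channel through the skip connection, and every other channel is left unchanged (the block adds zero to them). After $T-1$ blocks the accumulator holds $\sum_t f_{t,\epsilon,M}(\bm X)$. A final affine readout at a fixed position then gives $f^{CNN}_{\epsilon,M}=f^{FNN}_{\epsilon,M}$ on $[-M,M]^T$, and the approximation bound follows directly from Theorem~\ref{thm:ar_fnn}.

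The main obstacle is the bookkeeping in the exact-emulation step. Each residual block must compute $f_{t,\epsilon,M}$ while leaving both the original input and the running sum intact. This requires the identity skip to preserve the input channels, the last layer of the block to output zeros on those channels, and signed quantities to be propagated through ReLU via the $\sigma(v)-\sigma(-v)$ representation. I would first try to invoke Oono and Suzuki's block sparse to ResNet CNN transformation as a black box, checking only that its hypotheses are met: a bounded input domain, constant width $C$, and block depth $L$ as given by Theorem~\ref{thm:ar_fnn}. I would write out the shift filters and channel layout explicitly only if their norm or padding conventions differ from the architecture used here.
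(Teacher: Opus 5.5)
Your proposal is correct and follows the paper's proof: both apply Theorem~1 of \citet{oono2019approximation} as a black box to the block sparse network of Theorem~S.2, with $D=T$ and $T-1$ blocks, and use the exactness of that conversion to carry over the uniform error bound $\epsilon$. The paper makes one step explicit that you leave implicit: the normalisation $\bm X\mapsto\bm X/M$ and the output rescaling by $M^2$ are affine maps absorbed into the first and final layers. Your sketch of the internal shift-filter layout is unnecessary, and in the form ``every position holds the full vector $\bm X$'' it would need order $T$ channels rather than at most $4C$, so rely only on the black-box statement.
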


\begin{proof}
    The block sparse construction in Theorem S.2 has input dimension \(D=T\), \(T-1\) blocks, block depth \(L\), and width \(C\). Its normalisation by \(M\) and output rescaling by \(M^2\) are affine maps, so they can be absorbed into the first and final layers without changing the block count or approximation error. Theorem 1 of \citet{oono2019approximation} converts such a block sparse ReLU network into a ResNet CNN with the same number of residual blocks, block depth at most \(L+\lceil(D-1)/(K-1)\rceil\), channel size at most \(4C\), and filter size at most \(K\). Substituting \(D=T\) gives the stated architecture, and the conversion preserves the computed function, so the error remains at most \(\epsilon\).
\end{proof}

The explicit architecture also permits an ERM statement for each architecture without repeating a separate risk derivation for each model. For \(\mathcal A\in\{\mathrm{FNN},\mathrm{CNN}\}\), let \(f_{\epsilon_{AR},M}^{\mathcal A}\) denote the approximation in Theorem S.2 or S.3 and define
\[
\overline h_{AR}^{\mathcal A}(\bm X)
=\mathbb I\left\{
f_{\epsilon_{AR},M}^{\mathcal A}(\operatorname{clip}_M(\bm X))
>\lambda_{AR,\alpha}+\epsilon_{AR}
\right\}.
\]


\begin{cor}[ERM bound for each architecture]
Let \(\mathcal A\in\{\mathrm{FNN},\mathrm{CNN}\}\), and let \(\mathcal H_A\) be a fixed block sparse FNN or ResNet classifier class containing the corresponding clipped neural AR comparator \(\overline h_{AR}^{\mathcal A}\). Let \(d_A=\operatorname{VCdim}(\mathcal H_A)\), with \(1\leq d_A\leq N\), and let
\[
\widehat h_A\in\arg\min_{h\in\mathcal H_A}
\frac{1}N\sum_{i=1}^N\mathbb I\{h(\bm X^{(i)})\neq Y^{(i)}\}.
\]
Under the assumptions of Theorem 8, for every \(\delta\in(0,1)\), with probability at least \(1-\delta\),
\[
\operatorname{err}(\widehat h_A)
\leq
\mathfrak B_{AR}(\alpha,\epsilon_{AR},M)
+2\sqrt{\frac{8d_A\log(2eN/d_A)+8\log(4/\delta)}{N}}.
\]
\end{cor}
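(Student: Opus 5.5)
The plan is to reuse the argument of Theorem 9 almost verbatim, with the three-branch common class replaced by the single-architecture class \(\mathcal H_A\), and then to feed in the comparator risk bound from Theorem 8.

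\emph{Step 1: uniform deviation.} Define the empirical risk \(\widehat{\operatorname{err}}_N(h)=N^{-1}\sum_{i=1}^N\mathbb I\{h(\bm X^{(i)})\neq Y^{(i)}\}\). The data \((\bm X^{(i)},Y^{(i)})\) are independent copies of \((\bm X,Y)\), as in Theorem 9, and \(\mathcal H_A\) is fixed before the sample is drawn with VC dimension \(d_A\in[1,N]\). The standard VC uniform deviation inequality used in the proof of Theorem 9 then gives, with probability at least \(1-\delta\),
\[
\sup_{h\in\mathcal H_A}\lvert\operatorname{err}(h)-\widehat{\operatorname{err}}_N(h)\rvert\leq r_N(d_A,\delta).
\]
Here \(r_N\) is exactly the square root appearing in the corollary.

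\emph{Step 2: ERM comparison.} On this event, empirical minimality gives
\[
\operatorname{err}(\widehat h_A)\leq\widehat{\operatorname{err}}_N(\widehat h_A)+r_N\leq\widehat{\operatorname{err}}_N(\overline h_{AR}^{\mathcal A})+r_N\leq\operatorname{err}(\overline h_{AR}^{\mathcal A})+2r_N.
\]
This uses the hypothesis \(\overline h_{AR}^{\mathcal A}\in\mathcal H_A\).

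\emph{Step 3: comparator risk.} It remains to show \(\operatorname{err}(\overline h_{AR}^{\mathcal A})\leq\mathfrak B_{AR}(\alpha,\epsilon_{AR},M)\). The second part of Theorem 8 proves this bound for any clipped rule of the form \(\mathbb I\{f(\operatorname{clip}_M(\bm X))>\lambda_{AR,\alpha}+\epsilon_{AR}\}\), provided \(f\) approximates \(S_{AR}\) uniformly to within \(\epsilon_{AR}\) on \(\mathcal X_M\). The proof never uses the convolutional structure of \(f\): it only uses that clipping is the identity on \(E_M\), giving the event inclusions, together with the martingale bounds for \(S_{AR}\). Theorems S.2 and S.3 supply exactly this uniform approximation for \(\mathcal A=\mathrm{FNN}\) and \(\mathcal A=\mathrm{CNN}\). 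Hence the Theorem 8 argument applies verbatim, assuming its conditions on \(q_T\), including \(q_T\geq2(\lambda_{AR,\alpha}+2\epsilon_{AR})/\kappa_{AR}\). Combining Steps 2 and 3 yields the claim.

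\emph{Main obstacle.} The delicate point is not the VC step but confirming that Theorem 8 applies to a different approximant. I would state explicitly that its clipped part depends only on the uniform approximation property, not on the architecture. I would also note that \(\overline h_{AR}^{\mathcal A}\) must be fixed before the sample is observed, so that it is a deterministic element of \(\mathcal H_A\) and Step 2 holds on the same high-probability event. Whether clipping itself is expressible inside \(\mathcal H_A\) is covered by the hypothesis that \(\mathcal H_A\) contains \(\overline h_{AR}^{\mathcal A}\), so no construction is needed there.
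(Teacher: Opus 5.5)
Your proposal is correct and follows essentially the same route as the paper: the VC uniform deviation inequality and empirical minimality bound the ERM risk by the risk of the comparator \(\overline h_{AR}^{\mathcal A}\in\mathcal H_A\), and the clipped part of Theorem 8 then bounds that risk by \(\mathfrak B_{AR}(\alpha,\epsilon_{AR},M)\). Your remark that Theorem 8's clipped argument uses only the uniform approximation property on \(\mathcal X_M\), and not the convolutional structure, is correct. The paper relies on this point when it applies Theorem 8 to the approximants of Theorems S.2 and S.3, but it leaves it implicit.
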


\begin{proof}
The standard VC inequality controlling deviations on both sides and empirical minimality give
\[
\operatorname{err}(\widehat h_A)
\leq
\inf_{h\in\mathcal H_A}\operatorname{err}(h)
+2\sqrt{\frac{8d_A\log(2eN/d_A)+8\log(4/\delta)}{N}}.
\]
Since \(\overline h_{AR}^{\mathcal A}\in\mathcal H_A\), the infimum is at most
\(\operatorname{err}(\overline h_{AR}^{\mathcal A})\), which is bounded by
\(\mathfrak B_{AR}(\alpha,\epsilon_{AR},M)\) in Theorem 8.
\end{proof}

\section{ Additional Simulation}
For reproducibility of the oracle comparison in Section~4 of the main paper, the affine logistic head was fitted using inverse regularisation strength \(C=10^4\), the \texttt{lbfgs} solver and at most \(2{,}000\) iterations. The global random seed was 40, and the calibration, scenario, replication and nested-sample seeds were derived deterministically from this value. The logistic routine also used \texttt{random\_state=40}.
\subsection{Numerical Verification of Representation}
\label{sec:representation}
This subsection numerically verifies the representation and approximation results established in Section 3. In particular, we examine whether the constructive CNNs reproduce the range, maximum drawup, maximum drawdown, and slope change rules up to numerical precision. All network parameters in this experiment are assigned directly according to the theoretical constructions. No model training or parameter estimation is involved.

All input windows are rescaled to \(\mathcal X_M\), with \(M=1\). We consider three representative window lengths \(T\in\{20,40,80\}\). For each \(T\), we generate \(N=1000\) evaluation windows by simulating structured time series trajectories with Gaussian innovations, covering both cases without and with change points.

Each trajectory is initially generated from $Z_1=\varepsilon_1, Z_t=\phi Z_{t-1}+\varepsilon_t$, $t=2,\ldots,T$, where $\phi\sim{U}(0.3,0.98), s\sim{U}(0.04,0.18), \varepsilon_t\overset{\mathrm{iid}}{\sim}{N}(0,s^2)$. The first 500 trajectories contain no change point. Each of the remaining 500 trajectories contains one change point. Its location $\tau$ is sampled uniformly from $\{ \max\{3,\lfloor T/4\rfloor\}, \ldots, \min\{T-3,\lfloor 3T/4\rfloor\}-1 \}$. The trajectories are assigned cyclically to three perturbation mechanisms. For a trend change, a linear sequence increasing from zero to $A\sim{U}(-1,1)$ is added after $\tau$. For a volatility change, independent ${N}(0,(2.5s)^2)$ noise is added after $\tau$. For a level shift, a common displacement $A\sim{U}(-0.9,0.9)$ is added after $\tau$. Each resulting trajectory $\widetilde Z$ is rescaled as
\begin{equation*}
        X_t = 0.95M \frac{\widetilde Z_t}{\max_{1\leq j\leq T}|\widetilde Z_j|}.
\end{equation*}
The generator of structured windows uses seed \(T+4\). Consequently, the experiment evaluates a total of \(3\times1000=3000\) bounded windows.
The numerical check follows the level of the corresponding theorem. Theorem~1 asserts equality of scores, so for the range we report the maximum and mean absolute differences between the network output plus \(\lambda\) and the directly computed statistic. Writing $e_i=|f_{R,\lambda}(X_i)+\lambda-\mathcal R(X_i)|$, these quantities are $E_{\max}=\max_{1\leq i\leq N}e_i$ and $E_{\mathrm{mean}}=N^{-1}\sum_{i=1}^N e_i$. Theorems~2 and~3 assert equality of classifiers rather than statistics. For maximum drawup, maximum drawdown and slope change, we therefore set \(\lambda\) to the empirical median of the statistic and report the percentage of windows on which the constructed CNN classifier agrees with the direct rule.

Table~\ref{tab:exact_cnn_representation} shows range errors at the precision of floating point arithmetic and complete classifier agreement for the other three rules, as predicted by the representation results.

\begin{table}[htbp]
\centering
\caption{Numerical check of the exact representations. For the range, \(E_{\max}\) and \(E_{\mathrm{mean}}\) are reported in units of \(10^{-7}\) and \(10^{-8}\), respectively. The remaining entries are classifier agreement percentages at thresholds equal to the empirical medians of the corresponding statistics.}
\label{tab:exact_cnn_representation}
\setlength{\tabcolsep}{15pt}
{\small
\begin{tabular}{c cc ccc}
\toprule
& \multicolumn{2}{c}{Range}
& \multicolumn{3}{c}{Classifier agreement (\%)} \\
\cmidrule(lr){2-3}
\cmidrule(lr){4-6}
$T$
& $E_{\max}$ & $E_{\mathrm{mean}}$
& Max drawup & Max drawdown & Slope change \\
\midrule
20 & 2.384 & 4.929 & 100.00 & 100.00 & 100.00 \\
40 & 2.384 & 4.417 & 100.00 & 100.00 & 100.00 \\
80 & 2.384 & 4.417 & 100.00 & 100.00 & 100.00 \\
\bottomrule
\end{tabular}
}
\end{table}

\subsection{Sampled Numerical Consistency Check for Approximation}
We next conduct a sampled numerical check of the CNN approximations to realised volatility and the AR statistic against the analytic uniform bounds in Theorems 4 and 5. The uniform statements follow from the theorems rather than from this finite collection of trajectories. The square function is approximated using the iterated ReLU construction based on a triangular map described in the proofs. We consider approximation levels $m=\{1,2,3,4,5,6\}$.

For the realised volatility statistic, the resulting uniform error bound is $\varepsilon_{\mathrm{RV}}(T,M,m) =(T-1)(2M)^2 4^{-(m+1)}$. For the AR statistic, the corresponding bound is $\varepsilon_{\mathrm{AR}}(T,M,m) = 3(T-1)M^2 4^{-(m+1)}$. For each statistic $G$, window length $T$, and approximation level $m$, we compute $E_{\max}(G,T,m) = \max_{1\leq i\leq N}|\widehat G_m(X_i)-G(X_i)|$ and $E_{\mathrm{mean}}(G,T,m)=N^{-1}\sum_{i=1}^N|\widehat G_m(X_i)-G(X_i)|$. We check on the sampled trajectories that $E_{\max}(G,T,m) \leq  \varepsilon_G(T,M,m)+10^{-6}$, where \(10^{-6}\) is included only as a numerical tolerance.

To examine whether statistic approximation preserves the associated classification rule, we use the empirical median of the exact statistic as a diagnostic threshold, $\lambda_{G,T} = \operatorname{median}\{ G(X_i):i=1,\ldots,N\}$. The labels from the exact statistic and the CNN are defined as $Y_i = \mathbb{I} \{ G(X_i)>\lambda_{G,T}\}$, $\widehat Y_i = \mathbb{I}\{\widehat G_m(X_i)>\lambda_{G,T}\}$. We report their empirical agreement $A(G,T,m) = {1}/{N} \sum_{i=1}^{N} \mathbb{I}\{Y_i=\widehat Y_i\}$.

Table~\ref{tab:approximation_results} presents the approximation results for realised volatility and the AR statistic. For both statistics, the maximum and mean absolute errors decrease rapidly as the ReLU approximation level $m$ increases. This reduction is accompanied by a corresponding increase in classification agreement. For a fixed approximation level, longer windows generally produce larger approximation errors and lower agreement rates. This behaviour is reasonable
because both statistics aggregate local quantities over the entire window, so a longer sequence contains more local approximation errors to be accumulated.

\begin{table}[htbp]
\centering
\caption{Approximation errors and classification agreement for the realised volatility and AR statistics. Here, agreement is the percentage of trajectories for which the approximate and
exact statistics produce the same threshold decision.}
\label{tab:approximation_results}

\setlength{\tabcolsep}{6.5pt}
\renewcommand{\arraystretch}{1.05}

{\small
\begin{tabular}{cc ccc ccc}
\toprule
\multirow{2}{*}{$T$}
& \multirow{2}{*}{Level $m$}
& \multicolumn{3}{c}{Realised volatility}
& \multicolumn{3}{c}{AR statistic} \\
\cmidrule(lr){3-5}
\cmidrule(lr){6-8}
&
& $E_{\max}$ & $E_{\mathrm{mean}}$ & Agreement (\%)
& $E_{\max}$ & $E_{\mathrm{mean}}$ & Agreement (\%) \\
\midrule

\multirow{6}{*}{20}
& 1 & 4.021 & 2.701 & 59.10
    & 1.225 & 0.665 & 78.70 \\
& 2 & 1.011 & 0.766 & 85.50
    & 0.305 & 0.191 & 93.80 \\
& 3 & 0.253 & 0.197 & 96.90
    & 0.085 & 0.050 & 99.00 \\
& 4 & 0.066 & 0.049 & 99.50
    & 0.019 & 0.012 & 99.60 \\
& 5 & 0.015 & 0.012 & 99.90
    & 0.005 & 0.003 & 99.80 \\
& 6 & 0.004 & 0.003 & 100.00
    & 0.001 & 0.001 & 100.00 \\
\midrule

\multirow{6}{*}{40}
& 1 & 7.632 & 5.230 & 57.40
    & 2.223 & 1.298 & 74.90 \\
& 2 & 1.955 & 1.540 & 81.10
    & 0.571 & 0.383 & 91.10 \\
& 3 & 0.492 & 0.404 & 95.00
    & 0.146 & 0.102 & 98.30 \\
& 4 & 0.123 & 0.102 & 98.30
    & 0.037 & 0.025 & 99.90 \\
& 5 & 0.030 & 0.025 & 99.30
    & 0.009 & 0.006 & 99.90 \\
& 6 & 0.008 & 0.006 & 99.90
    & 0.002 & 0.002 & 99.90 \\
\midrule

\multirow{6}{*}{80}
& 1 & 14.090 & 10.170 & 55.10
    & 3.862 & 2.553 & 70.70 \\
& 2 & 3.782 & 3.090 & 77.60
    & 1.053 & 0.777 & 88.80 \\
& 3 & 0.943 & 0.819 & 94.60
    & 0.266 & 0.206 & 97.30 \\
& 4 & 0.235 & 0.206 & 98.60
    & 0.067 & 0.051 & 99.50 \\
& 5 & 0.059 & 0.051 & 99.70
    & 0.016 & 0.013 & 99.80 \\
& 6 & 0.015 & 0.013 & 99.90
    & 0.004 & 0.003 & 99.90 \\
\bottomrule
\end{tabular}
}
\end{table}

For comparison, we instantiate the approximation networks using the iterated ReLU construction based on a triangular map. Let $Q_{m,A}$ denote the resulting approximation at level $m$ to the square function on $[-A,A]$. This construction satisfies $\sup_{|z|\leq A} \left|Q_{m,A}(z)-z^2\right| \leq A^2 4^{-(m+1)}$ by \cite{yarotsky2017error}. The numerical AR approximant used in the experiment is
\[
\widehat S_{AR,m}(\bm X)=\sum_{t=1}^{T-1}\left[
\frac{Q_{m,2M}(X_t+X_{t+1})-Q_{m,2M}(X_t-X_{t+1})}{4}
-Q_{m,M}(X_t)\right].
\]
For realised volatility, each return $X_{t+1}-X_t$ belongs to $[-2M,2M]$. Summing the local errors in the square approximation over the $T-1$ returns therefore gives $\epsilon_{\mathrm{RV}}(T,M,m) = (T-1)(2M)^2 4^{-(m+1)}$. For the AR statistic, multiplication is implemented through the polarization identity $ab=\{(a+b)^2-(a-b)^2\}/{4}$. The approximation error for the product $ab$ is consequently bounded by $2M^2 4^{-(m+1)}$, while the additional approximation of $a^2$ contributes $M^2 4^{-(m+1)}$. Summing over the $T-1$ local AR terms yields $\epsilon_{\mathrm{AR}}(T,M,m) = 3(T-1)M^2 4^{-(m+1)}$.

Figure~\ref{fig:approximation_error_depth} further illustrates the geometric decay of the approximation error. All sampled maximum errors remain below their corresponding analytic upper bounds. The separation is particularly visible for the AR statistic, indicating that the analytic bound is conservative for the simulated trajectories considered here.

\begin{figure}[htpb]
    \centering
    \includegraphics[width=1\linewidth]{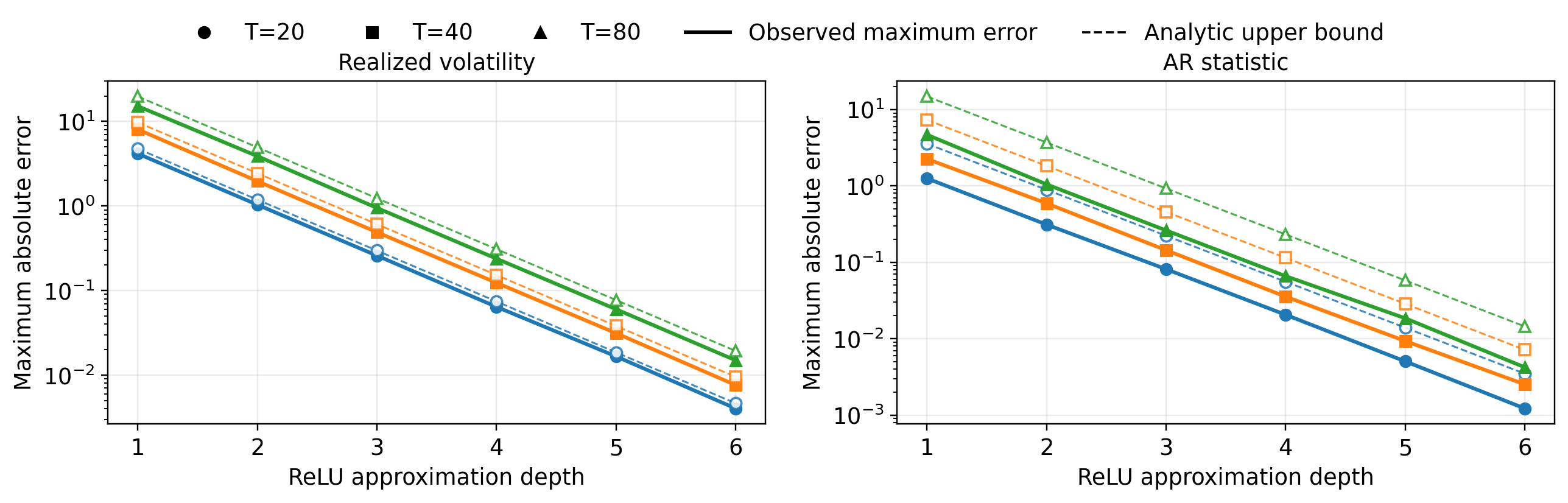}
    \caption{Maximum absolute approximation error as a function of ReLU approximation level $m$ for realised volatility (left) and the AR statistic (right). The vertical axes are displayed on a logarithmic scale.}
    \label{fig:approximation_error_depth}
\end{figure}

\end{document}